\documentclass[journal]{IEEEtran}
\usepackage{cite}
\usepackage[cmex10]{amsmath}
\usepackage{comment}
\usepackage{graphicx}
\usepackage{amsmath, amssymb}
\usepackage{amsfonts}
\usepackage{ascmac}
\usepackage{subcaption}
\usepackage{bm}
\usepackage{url}
\usepackage{authblk}
\usepackage{amsthm}
\usepackage{empheq}

\usepackage[
  colorlinks=true,
  linkcolor=blue,
  urlcolor=blue,
  citecolor=blue
]{hyperref}

\newcommand{\ali}[1]{\begin{align} #1 \end{align}}
\newcommand{\nn}{\nonumber}

\newcommand{\mat}[1]{\begin{bmatrix} #1 \end{bmatrix}}
\newcommand{\md}{\mathop{}\!\mathrm{d}}
\newcommand{\me}{\mathrm{e}}

\newcommand{\pa}{\mathop{}\!\partial}

\newtheorem{theorem}{Theorem}
\newtheorem{lemma}{Lemma}

\newtheorem{proposition}{Proposition}
\usepackage{mdframed}
\newmdtheoremenv{theorembox}{Theorem}

\begin{document}
\title{Wolkowicz-Styan Upper Bound on the Hessian Eigenspectrum for Cross-Entropy Loss in Nonlinear Smooth Neural Networks}

\author{
Yuto Omae$^1$, Kazuki Sakai$^2$, Yohei Kakimoto$^1$,
Makoto Sasaki$^1$, Yusuke Sakai$^3$, Hirotaka Takahashi$^3$
\thanks{1: Nihon University (Japan)}
\thanks{2: National Institute of Technology, Nagaoka College (Japan)}
\thanks{3: Tokyo City University (Japan)}
}

\markboth{Preprint}%
{Y. Omae \MakeLowercase{\textit{et al.}}}
\maketitle

\begin{abstract}
Neural networks (NNs) constitute the core of contemporary machine learning, delivering state-of-the-art results in diverse applications.
However, a comprehensive theoretical understanding of the relationship between the geometry of the loss function and generalization capabilities is still lacking.
The local geometry of the loss function near a critical point is well-approximated by its quadratic form, obtained through a second-order Taylor expansion.
The coefficients of the quadratic term correspond to the Hessian matrix, whose eigenspectrum allows us to evaluate the sharpness of the loss at the critical point.
Extensive research has suggested that flat critical points are often associated with better generalization performance, whereas sharp critical points tend to be linked to higher generalization error.
While quantifying sharpness requires the eigenspectrum of the Hessian, the characteristic equation of a general square matrix does not admit a closed-form solution in general.
Therefore, most existing studies on evaluating loss sharpness rely on numerical approximation methods.
Existing closed-form analyses of the eigenspectrum are primarily limited to simplified architectures, such as linear or ReLU-activated networks; consequently, theoretical analysis of smooth nonlinear multilayer neural networks remains limited.
Against this background, this study focuses on nonlinear, smooth multilayer neural networks and derives a closed-form upper bound for the maximum eigenvalue of the Hessian with respect to the cross-entropy loss by leveraging the Wolkowicz-Styan bound (Wolkowicz and Styan 1980).
Specifically, the derived upper bound is expressed as a function of the affine transformation parameters, hidden layer dimensions, and the degree of orthogonality among the training samples.
The primary contribution of this paper is an analytical characterization of loss sharpness in smooth nonlinear multilayer neural networks via a closed-form expression, avoiding explicit numerical eigenspectrum computation.
We hope that this work provides a small yet meaningful step toward unraveling the mysteries of deep learning.
\end{abstract}

\begin{IEEEkeywords}
Neural network, Cross-entropy, Sharpness, Hessian, Eigenspectrum, Wolkowicz-Styan bound
\end{IEEEkeywords}

\section{Introduction}
Neural Networks (NNs) serve as a cornerstone of modern machine learning, delivering state-of-the-art results in various domains, including image recognition \cite{chaiDeep2021}, natural language processing \cite{arkhangelskayaDeep2023}, and speech recognition \cite{mehrishReview2023}.
However, despite their remarkable success, the fundamental reasons why NNs exhibit such high generalization performance remain largely unexplained.
In this study, we focus on the relationship between the critical points of the loss function and generalization performance to better understand generalization.
In previous studies, it has been posited that the sharpness of a critical point is closely related to generalization performance \cite{dinhSharp2017, yueSALR2024}.
Specifically, flat minima have often been associated with better generalization, whereas sharp minima are linked to poorer generalization performance, as the loss is more sensitive to small parameter perturbations~\cite{hochreiterFlat1997}.
This sharpness is characterized by the quadratic term of the Taylor expansion of the loss function around a critical point \cite{liuHessian2023}.
Since the eigenvalues of the Hessian represent the curvature of the loss surface, they serve as a metric for sharpness \cite{aroraUnderstanding2022}.
In particular, the maximum eigenvalue $\lambda_1$ is employed as a key indicator of the curvature of the loss landscape \cite{lyuUnderstanding2023}.
As discussed above, the analysis of the Hessian eigenspectrum has played a central role in understanding learning dynamics.

However, since the size of the Hessian in NNs is often prohibitively large, providing a rigorous analysis of its eigenspectrum is generally intractable.
Specifically, beyond a certain dimensionality, the characteristic equation of the Hessian does not admit a closed-form solution in general, precluding an exact analytical expression for its eigenvalues.
For this reason, numerical approximation of the eigenspectrum has been widely adopted as a practical and effective strategy in the literature.
Specific techniques include the Lanczos method \cite{lanczosIteration1950}, which provides an approximate solution for the eigenspectrum of Hermitian matrices, and the Hutchinson method \cite{hutchinsonStochastic1989}, an algorithm for approximating the trace of square matrices.
Indeed, many existing works rely on these numerical techniques for sharpness evaluation; representative examples include \cite{yaoPyHessian2020, ghorbaniInvestigation2019} for the Lanczos iteration and \cite{liuHessian2023, dongHAWQV22020} for the Hutchinson method.
Despite their practical utility, numerical approaches lack the capacity to provide an analytical link between sharpness, training data, and model parameters, which remains a key bottleneck for theoretical understanding.

Against this background, Singh et al. \cite{singhCracking2025} derived closed-form expressions for the eigenvalues of the loss Hessian in linear networks, including those with identity and ReLU activations.
This work represents a pioneering effort that provides a foundational framework for closed-form Hessian analysis.
However, research on closed-form expressions for the eigenspectrum in nonlinear, smooth neural networks, which are widely used in practical deep learning, remains insufficient.
In order to address this limitation, we focus on multilayer NNs with nonlinear smooth activations and pursue a closed-form characterization of the maximum Hessian eigenvalue $\lambda_1$ for cross-entropy loss functions.
Since characteristic equations of degree five or higher do not admit closed-form solutions in general, obtaining an explicit closed-form representation of Hessian eigenvalues for high-dimensional matrices is typically intractable.
However, a known theorem provides an analytical upper bound $\lambda_{\sup}(\bm{\theta})$ for the maximum eigenvalue by utilizing the traces of both the matrix and its square \cite{wolkowiczBounds1980}.
This upper bound is frequently referred to as the Wolkowicz-Styan bound \cite{merikoskiBounds1997, merikoskiBest2001, sharmaNote2014}.
By applying this theorem to the loss Hessian, we obtain an analytical closed-form representation of the upper bound $\lambda_{\mathrm{sup}}(\bm{\theta})$ for its maximum eigenvalue.
Our analysis covers traditional Sigmoid \cite{minaiDerivatives1993} and hyperbolic tangent (Tanh) \cite{goodfellowDeep2016} activations, alongside modern SoftPlus \cite{berzalDL1012025} and GELU \cite{hendrycksGaussian2016} activations.
The case of the linear activation is also included as a foundational reference to provide a clear starting point for our theoretical discussion.

By examining the derived upper bound $\lambda_{\mathrm{sup}}(\bm{\theta})$, we reveal that sharpness is determined by factors such as the norms of parameters in affine transformations, the dimensionality of hidden layers, and the degree of orthogonality among the training samples.
These findings provide an analytical characterization of loss sharpness, showing that it is governed by factors such as the network architecture and the underlying training data.
These findings constitute the primary scholarly contribution of this paper, providing a novel theoretical perspective for understanding generalization in neural networks.

\section{Related works}
In previous studies, it has been widely recognized that the sharpness of a critical point reached during training is closely linked to generalization; specifically, flat minima tend to yield high generalization performance, whereas sharp minima are associated with larger generalization errors \cite{liVisualizing2018}.
From this perspective, several empirical techniques widely adopted for performance enhancement have been linked to the reduction of sharpness.
For instance, Ghorbani et al.~\cite{ghorbaniInvestigation2019} and Lyu et al.~\cite{lyuUnderstanding2023} observed that batch normalization effectively diminishes loss sharpness.
Similarly, Wei et al.~\cite{weiHow2019} and Wu et al.~\cite{wuImplicit2023} demonstrated that Stochastic Gradient Descent (SGD), a cornerstone of modern optimization, contributes to sharpness reduction.
Furthermore, Li et al.~\cite{liVisualizing2018} showed that skip connections, such as those used in ResNet architectures, have the effect of smoothing the loss landscape and reducing its sharpness.
Moreover, a specific study has analyzed the relationship between the learning rate and sharpness.
For instance, Marion et al.~\cite{marionDeep2024} reported that minima obtained through gradient descent in the limit where the learning rate approaches zero tend to be flat.
Additionally, a learning algorithm called Sharpness-Aware Learning Rate Scheduler (SALRS) has been proposed to reach flat minima by dynamically adjusting the learning rate \cite{yueSALR2024}.

The eigenvalues of the Hessian are frequently employed as a metric for quantifying sharpness \cite{liuHessian2023, aroraUnderstanding2022}.
However, obtaining the Hessian eigenvalues analytically is often intractable, and their numerical approximation requires significant computational resources.
Accordingly, recent works have focused on the Hessian trace as a more accessible proxy for curvature, bypassing the high costs associated with eigenvalue estimation \cite{damianLabel2021, liuSame2023}.
Since the trace is the sum of the eigenvalues, it serves as a valid indicator of sharpness in cases where the Hessian is positive definite.
The Hessian trace can be efficiently estimated at a low computational cost using Hutchinson's method \cite{hutchinsonStochastic1989}.
Leveraging this efficiency, Liu et al. \cite{liuHessian2023} proposed a regularization technique that directly penalizes the Hessian trace during training.
Furthermore, Sankar et al.~\cite{sankarDeeper2021} discovered that the Hessian eigenvalues of intermediate layers exhibit a strong similarity to those of the entire model.
Based on this insight, they proposed a novel regularization technique termed Layer-wise Hessian Trace Regularization.
Similarly, Bolshim et al.~\cite{bolshimLocal2025} reported that the layer-wise Hessian eigenspectrum is closely associated with generalization performance.
Furthermore, recent studies have proposed noise-injection techniques to guide parameters toward flat minima \cite{zhangNoise2024}, as well as relocation algorithms that shift parameters into flatter regions without compromising training performance \cite{zhouImproving2025}.
Although the Hessian trace is widely adopted as a proxy for sharpness in existing literature, the minimization of the maximum eigenvalue remains a more pivotal factor in improving a model's generalization capability.
Accordingly, an algorithm that directly targets the maximum eigenvalue for regularization has been developed \cite{luoExplicit2025}.

Although a specific algorithm for the exact computation of the Hessian in neural networks has been available since early research \cite{bishopExact1992}, its computational cost is substantial.
Consequently, learning algorithms designed to reduce sharpness without explicitly calculating the Hessian have also been proposed.
A prominent example of such an approach is Sharpness-Aware Minimization (SAM), proposed by Foret et al.~\cite{foretSharpnessAware2021}.
Although SAM does not explicitly target the reduction of the maximum eigenvalue, it has been observed to effectively decrease its value in practice.
As theoretical advancements of SAM, several variants have been proposed, such as ASAM~\cite{kwonASAM2021}, which overcomes the sensitivity to weight parameter scaling, and ImbSAM~\cite{zhouImbSAM2023a}, which functions effectively even with imbalanced data.
Significant efforts are also underway to theoretically elucidate the effectiveness of SAM \cite{andriushchenkoUnderstanding2022}.
In addition to theoretical investigations, SAM has also been reported to enhance test performance across various applied tasks \cite{chenWhen2022, huangSharpness2025}.

These studies offer highly valuable insights from the perspective of elucidating the relationship between sharpness and generalization performance.
However, many of these approaches rely on numerical approximations and do not readily yield explicit analytical characterizations.
As also pointed out by Singh et al.~\cite{singhCracking2025}, the perspective of evaluating sharpness analytically, rather than through numerical approximation, is also important.
From this perspective, Singh et al. derived a closed-form expression for the rank of the Hessian in networks with linear activations \cite{singhAnalytic2021}.
This revealed the number of eigenvectors associated with non-zero curvature.
Furthermore, in subsequent research, closed-form expressions for the eigenvalues of the Hessian in linear or ReLU networks were derived \cite{singhCracking2025}.
This has made it possible to analytically examine the factors that increase sharpness.
However, a closed-form expression for the maximum eigenvalue in three-layer neural networks with smooth nonlinearities remains unavailable.
In this work, we address this gap by deriving a closed-form upper bound for the maximum Hessian eigenvalue.

\begin{figure}[t] 
    \centering
    \includegraphics[scale=0.26]{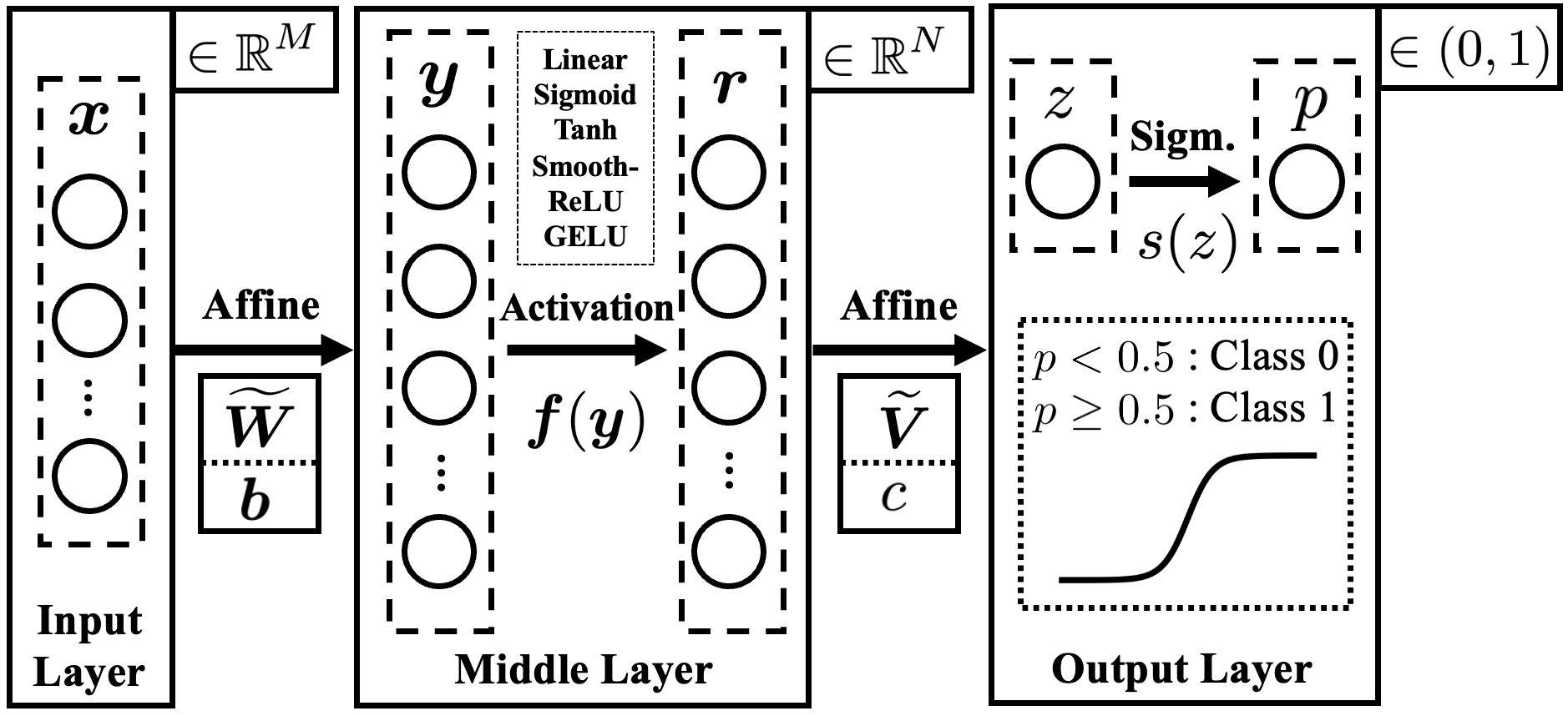}
\caption{The NN architecture analyzed in this study.}
    \label{fig_nn}
\end{figure}

\section{Model and Loss Function}
\subsection{3-Layer Neural Network}
Figure~\ref{fig_nn} illustrates the three-layer feedforward neural network analyzed in this study. 
The input to the network is denoted by $\bm{x}$, and the output is $p \in (0, 1)$.
This network is designed to solve a binary classification problem, where an input is classified as Class 1 if $p \geq 0.5$ and as Class 0 if $p < 0.5$.
The relationship between each variable is given by
\ali{
&\bm{y} = \widetilde{\bm{W}} \bm{x} + \bm{b}, \ 
\bm{r} = \bm{f}(\bm{y}), \ 
z = \widetilde{\bm{V}} \bm{r} + c, \
p = s(z), \nn \\
&\bm{x} \in \mathbb{R}^M, \ 
\widetilde{\bm{W}} \in \mathbb{R}^{N \times M}, \ 
\bm{b} \in \mathbb{R}^N, \ 
\bm{y} \in \mathbb{R}^N, \nn \\
& \bm{r} \in \mathbb{R}^N, \ 
\widetilde{\bm{V}} \in \mathbb{R}^{1 \times N}, \ 
c \in \mathbb{R}, \ 
z \in \mathbb{R}, \ 
p \in (0, 1). \label{eq_3nn}
}
The function $\bm{f}$ represents a vector-valued activation function; for further details, please refer to Appendix~\ref{sec_activation_base}.
The activation functions $\bm{f}$ for which we derive closed-form expressions in this paper include the linear activation, Sigmoid~\cite{minaiDerivatives1993}, Tanh~\cite{goodfellowDeep2016}, SmoothReLU (Softplus)~\cite{berzalDL1012025}, and GELU~\cite{hendrycksGaussian2016}. 
For their specific details, please refer to Appendices~\ref{sec_lin_details}, \ref{sec_sigm_details}, \ref{sec_tanh_details}, \ref{sec_srelu_details}, and \ref{sec_gelu_details}.
Note that $s(z)$ denotes the sigmoid function used for binary classification.
This model involves four sets of parameters: two weight matrices, $\widetilde{\bm{W}}$ and $\widetilde{\bm{V}}$, and two bias terms, $\bm{b}$ and $c$, which increases the complexity of the analysis.
To simplify the formulation, we absorb the bias terms into the weight matrices.
That is, 
\ali{
\bm{W} = \mat{\bm{b} & \widetilde{\bm{W}}} \in \mathbb{R}^{N \times (M+1)}, \
\bm{V} = \mat{c & \widetilde{\bm{V}}} \in \mathbb{R}^{1 \times (N+1)}. \nn
}
By defining $\bm{b} = \bm{W}_{:0}$, $\widetilde{\bm{W}} = \mat{\bm{W}_{:1} \cdots \bm{W}_{:M}}$, $c = v_0$, and $\widetilde{\bm{V}} = \mat{v_1 \cdots v_N}$, we obtain $\bm{W} = \mat{\bm{W}_{:0} \cdots \bm{W}_{:M}}$ and $\bm{V} = \mat{v_0 \cdots v_N}$.
Here, $\bm{W}_{:m}$ denotes a column vector consisting of the $m$-th column of $\bm{W}$.
The vectors transformed by the parameter matrices $\bm{W}$ and $\bm{V}$ are given by
\ali{
\bm{h}(\bm{x}) = \mat{1 \\ \bm{x}}  \in \mathbb{R}^{M+1}, \ 
\bm{h}(\bm{r}) = \mat{1 \\ \bm{r}}  \in \mathbb{R}^{N+1} \nn
}
where $\bm{h}$ is a function that prepends a 1 to the input vector at the 0-th dimension.
By letting $x_0 = r_0 = 1$, these vectors can also be expressed as 
$\bm{h}(\bm{x}) = \mat{x_0 & x_1 & \cdots & x_M}^\top$ and 
$\bm{h}(\bm{r}) = \mat{r_0 & r_1 & \cdots & r_N}^\top$.
Adopting these notations, the input-output relationship of the NN is expressed as
\ali{
\bm{y} = \bm{W} \bm{h}(\bm{x}), \ 
\bm{r} = \bm{f}(\bm{y}), \
z = \bm{V} \bm{h}(\bm{r}), \
p &= s(z). \label{eq_nn_forward}
}
The NN shown in Eq.~\eqref{eq_nn_forward} is a two-parameter model, which is equivalent to the four-parameter model presented in Eq.~\eqref{eq_3nn}.
We define the vectors $\bm{w}$ and $\bm{v}$ by vertically concatenating the elements of the parameter matrices $\bm{W}$ and $\bm{V}$ as
\ali{
\bm{w} &= \mat{\bm{W}_{:m}}_{m \in \{0, \cdots, M\}} \in \mathbb{R}^{(M+1)N \times 1}, \label{eq_w_v_flat} \\ 
\bm{v} &= \bm{V}^\top \in \mathbb{R}^{(N+1) \times 1}. \nn
}
Furthermore, the full parameter vector of the NN is defined by stacking $\bm{w}$ and $\bm{v}$, that is, 
\begin{align}
\bm{\theta} = \mat{\bm{w} \\ \bm{v}} \in \mathbb{R}^{D \times 1}, \ D = MN+2N+1. \nn
\end{align}

\subsection{Cross-Entropy Loss}
Since the output $p$ of the NN is a function of the input $\bm{x}$ and the parameters $\bm{\theta}$, it is denoted as $p(\bm{\theta}; \bm{x})$.
Given the ground-truth label $q \in \{0, 1\}$, the binary cross-entropy loss for a single data point is defined in \cite{suryadiJacobian2025} as
 \ali{
l(\bm{\theta} ; \bm{x}, q) = - q \log p(\bm{\theta}; \bm{x})  - (1-q) \log(1- p(\bm{\theta} ; \bm{x})). \label{eq_unit_ce}
}
The semicolon denotes the separation of variables and constants, signifying that $\bm{x}$ and $q$ are constants and $\bm{\theta}$ is a variable once the training data has been collected.
Hereafter, we will write $l(\bm{\theta} ; \bm{x}_i, q_i)$ as $l_i$ or $l_i(\bm{\theta})$.
We denote the gradient and the Hessian as
 \ali{
\frac{\pa l}{\pa \bm{\theta}} = \mat{\frac{\pa l}{\pa (\bm{\theta})_d}} \in \mathbb{R}^{D\times 1}, 
\bm{H}_{l}(\bm{\theta}, \bm{\theta}) = \frac{\pa }{\pa \bm{\theta}} \bigg( \frac{\pa l}{\pa \bm{\theta}} \bigg)^\top \in \mathbb{R}^{D \times D}, \nn
}
respectively.
Here, $(\bm{\theta})_d$ represents the $d$-th element of $\bm{\theta}$.
The cross-entropy loss for the entire training dataset is given by
 \ali{
L(\bm{\theta}) = \sum_{i=1}^{I} l_i(\bm{\theta}) \in (0, \infty) \nn
}
where $I$ is the total number of data points.
Due to the linearity of the derivative, the gradient and the Hessian of $L$ are given by
\ali{
\frac{\pa L}{\pa \bm{\theta}} = \sum_{i=1}^I \frac{\pa l_i}{\pa \bm{\theta}} \in \mathbb{R}^{D \times 1}, \ 
\bm{H}_L(\bm{\theta}, \bm{\theta}) = \sum_{i=1}^I  \bm{H}_{l_i}(\bm{\theta}, \bm{\theta}) \in \mathbb{R}^{D \times D}. \label{eq_hessian_L}
}

\begin{figure}[t] 
    \centering
    \includegraphics[scale=0.8]{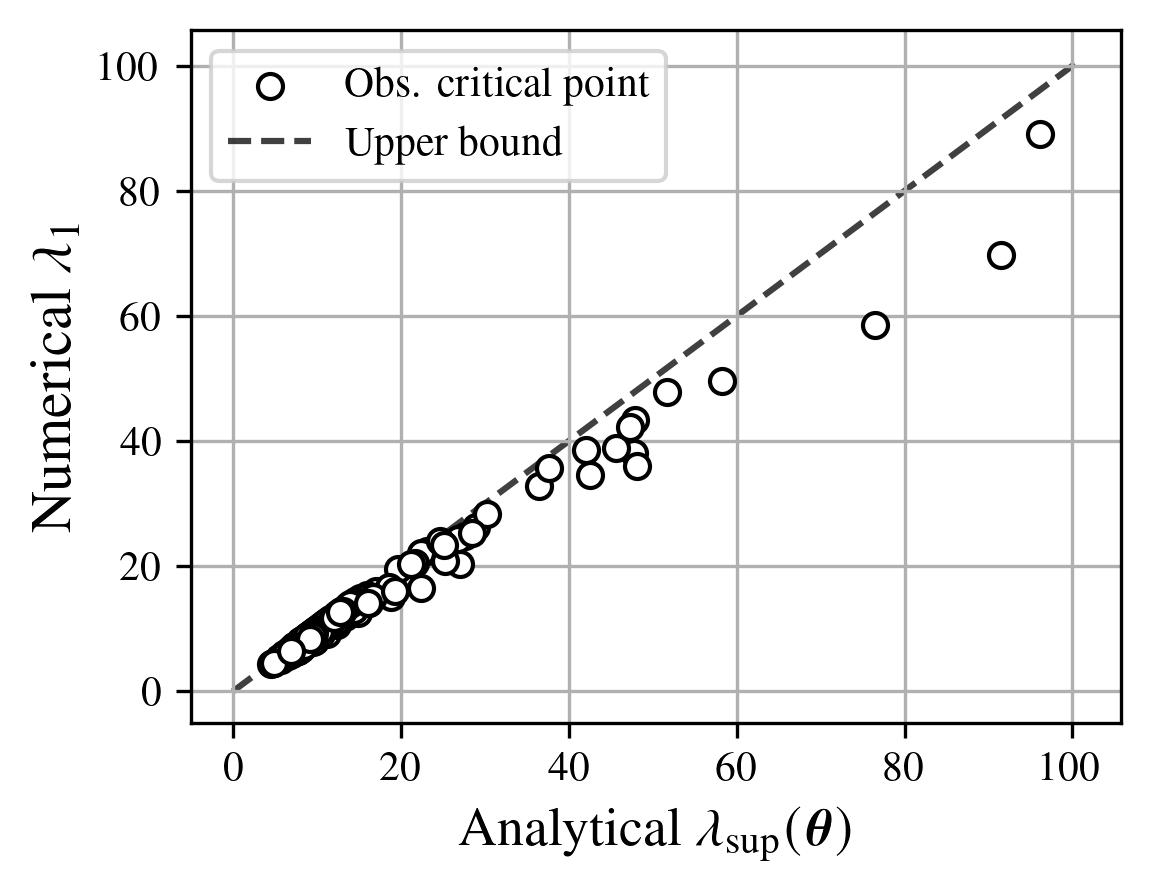}
    \caption{
    Relationship between the maximum eigenvalues at various critical points and their respective upper bounds. 
    The eigenvalues are calculated numerically, while the upper bounds are calculated analytically using the main theorem in Eq.~\eqref{eq_main_theorem}.
    All critical points observed in the experiment ($N=3$) described in Appendix~\ref{sec_expe} are plotted.
    }
    \label{fig_eigen_vs_eigenupper}
\end{figure}

\section{The Main Theorem}
\subsection{Taylor Expansion of the Loss Function}
We call $\bm{\theta}=\bm{\theta}^\sharp$ a critical point if it satisfies 
$\pa L / \pa \bm{\theta}|_{\bm{\theta}=\bm{\theta}^\sharp} = \bm{0}_D$.
$\bm{0}_D$ is a $D$-dimensional zero vector.
To analyze the sharpness of the training loss $L$ at a critical point $\bm{\theta}^\sharp$, a second-order Taylor expansion is a useful approach.
The second-order Taylor approximation of $l$ around $\bm{\theta}=\bm{\theta}^\sharp$ is given by
 \ali{
l^{[2]}(\bm{\theta}) = l(\bm{\theta}^\sharp) + \bigg( \frac{\pa l}{\pa \bm{\theta}}\bigg|_{\bm{\theta}=\bm{\theta}^\sharp}\bigg)^\top \ \Delta\bm{\theta}+\frac{1}{2}  \Delta\bm{\theta}^\top \bm{H}_l(\bm{\theta}^\sharp, \bm{\theta}^\sharp) \Delta\bm{\theta} \nn
}
where $\Delta\bm{\theta} = \bm{\theta}-\bm{\theta}^\sharp$.
The second-order Taylor approximation of $L$ around $\bm{\theta}=\bm{\theta}^\sharp$ can be expressed as $L^{[2]}(\bm{\theta}) = \sum_{i=1}^{I} l^{[2]}_i(\bm{\theta})$ due to the linearity of the derivative.
When specifically focusing on the sharpness, it is sufficient to consider only the second-order term. Therefore, the analysis can be restricted to
 \ali{
\bm{H}_{L}(\bm{\theta}^\sharp, \bm{\theta}^\sharp) = \sum_{i=1}^{I} \bm{H}_{l_i}(\bm{\theta}^\sharp, \bm{\theta}^\sharp)  \in \mathbb{R}^{D \times D}. \nn
}
It should be noted that the approximation via Taylor expansion is dominated by higher-order terms when $\| \bm{\theta}-\bm{\theta}^\sharp \|_2 > 1$.
However, since this study utilizes Taylor expansion for analysis around the critical point, we assume $\| \bm{\theta}-\bm{\theta}^\sharp\|_2 \simeq 0$.
Therefore, we note that the Hessian is the dominant factor in evaluating the sharpness of critical points.

\subsection{Upper Bound on the Maximum Eigenvalue of the Hessian}
The maximum eigenvalue of the Hessian of the loss in NNs is a crucial quantity for understanding optimization dynamics \cite{marionDeep2024}.
Therefore, we discuss the upper bound of the maximum eigenvalue in this section.
Let the vector of eigenvalues of $\bm{H}_{L}(\bm{\theta}, \bm{\theta})$ be denoted by
\ali{
\bm{\lambda} = \mat{\lambda_1 & \lambda_2 &\cdots & \lambda_D}^\top, \ 
\lambda_1 \geq \lambda_2 \geq \cdots \geq \lambda_D. \nn
}
That is, $\lambda_1$ and $\lambda_D$ are the maximum and minimum eigenvalues, respectively.
Since $\bm{H}_L(\bm{\theta}, \bm{\theta})$ is a real symmetric matrix, it follows that
\ali{
\forall d \in \mathbb{N}_{\leq D}, \ \lambda_d \in \mathbb{R}. \label{eq_real_eigen}
}
By defining the matrix $\bm{U} = \mat{\bm{u}_1 & \cdots & \bm{u}_D}$ consisting of the eigenvectors $\bm{u}_d$ corresponding to $\lambda_d$, the Hessian can be decomposed as
 \ali{
\bm{H}_{L}(\bm{\theta}, \bm{\theta}) = \bm{U}\mathrm{diag}(\bm{\lambda})\bm{U}^{-1}. \nn
}
The eigenvector $\bm{u}_1$ corresponding to $\lambda_1$ represents the direction in which the second-order derivative of the loss $L$ is maximized.
If $\lambda_D$ is positive, the critical point can be regarded as a local minimum.
In this case, if the maximum eigenvalue $\lambda_1$ is close to zero, it is guaranteed that the critical point has a flat geometry.
Therefore, observing the maximum eigenvalue of the Hessian is essential for understanding the geometry around the critical point of the loss $L$.
However, for $D > 5$, the characteristic equation cannot, in general, be solved analytically, and a closed-form expression for the maximum eigenvalue is therefore not typically obtainable.
To address this, the present study derives a closed-form expression for the upper bound of the maximum eigenvalue.
This is the main theorem of the present study; by applying the Wolkowicz-Styan bound~\cite{wolkowiczBounds1980, merikoskiBounds1997, merikoskiBest2001}, our result can be expressed as follows.
\begin{theorem}[See Eqs.~(1.8), (1.10), (1.11) in \cite{wolkowiczBounds1980}]\label{th_main_theorem}
\ali{
&\lambda_1 \leq \lambda_\mathrm{sup}(\bm{\theta}), \ 
\lambda_\mathrm{sup}(\bm{\theta}) = \mu(\bm{\theta}) + \sqrt{D-1} \sigma(\bm{\theta}), \label{eq_main_theorem} \\
&\mu(\bm{\theta}) = \frac{1}{D}\mathrm{tr} ( \bm{H}_L(\bm{\theta}, \bm{\theta}) ), \label{eq_ave_eigen} \\
&\sigma(\bm{\theta})^2 = \frac{1}{D}\mathrm{tr} ( \bm{H}_L(\bm{\theta}, \bm{\theta})^2 ) 
- \bigg( \frac{1}{D} \mathrm{tr} ( \bm{H}_L(\bm{\theta}, \bm{\theta}) ) \bigg)^2. \label{eq_var_eigen}
}
\end{theorem}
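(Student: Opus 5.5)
The statement is a general fact about real symmetric matrices applied to $\bm{H}_L(\bm{\theta},\bm{\theta})$, so the plan is to prove the Wolkowicz-Styan inequality directly for the real eigenvalue vector $\bm{\lambda}$ of Eq.~\eqref{eq_real_eigen}. The proof will not use the network structure at all. The first step is to rewrite the trace quantities spectrally. Since the Hessian is real symmetric, $\mathrm{tr}(\bm{H}_L)=\sum_{d=1}^D\lambda_d$ and $\mathrm{tr}(\bm{H}_L^2)=\sum_{d=1}^D\lambda_d^2$. Hence $\mu(\bm{\theta})$ in Eq.~\eqref{eq_ave_eigen} is the empirical mean of the eigenvalues, and $\sigma(\bm{\theta})^2$ in Eq.~\eqref{eq_var_eigen} is their empirical variance $\frac{1}{D}\sum_d(\lambda_d-\mu)^2\ge 0$. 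In particular, $\sigma$ is a well-defined nonnegative real number.

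The core step is a Cauchy-Schwarz (Samuelson-type) argument. Set $\delta_d=\lambda_d-\mu$, so that $\sum_{d=1}^D\delta_d=0$ and $\sum_{d=1}^D\delta_d^2=D\sigma^2$. From the first identity, $\delta_1=-\sum_{d=2}^D\delta_d$. Cauchy-Schwarz on the $D-1$ remaining terms then gives
\begin{equation*}
\delta_1^2\le (D-1)\sum_{d=2}^D\delta_d^2=(D-1)\left(D\sigma^2-\delta_1^2\right).
\end{equation*}
Rearranging yields $D\delta_1^2\le D(D-1)\sigma^2$, that is, $|\delta_1|\le\sqrt{D-1}\,\sigma$. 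Therefore
\begin{equation*}
\lambda_1=\mu+\delta_1\le\mu+\sqrt{D-1}\,\sigma=\lambda_{\mathrm{sup}}(\bm{\theta}),
\end{equation*}
which is Eq.~\eqref{eq_main_theorem}. The same argument also gives the companion bound $\lambda_1\ge\mu+\sigma/\sqrt{D-1}$, but the theorem only needs the upper bound.

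I do not expect a real obstacle. The only care points are the following.
\begin{itemize}
\item Use symmetry of the Hessian, which follows from the smoothness of the activations considered. This guarantees real eigenvalues and lets the traces be written as power sums of the eigenvalues.
\item Note that $D\ge 2$, which holds since $D=MN+2N+1$ with $N\ge1$.
\end{itemize}
The substantive difficulty in the paper lies afterwards: computing $\mathrm{tr}(\bm{H}_L)$ and $\mathrm{tr}(\bm{H}_L^2)$ in closed form for each activation. That is not needed for the inequality itself.
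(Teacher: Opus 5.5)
Your proof is correct and follows essentially the paper's route. The paper gives no separate derivation; it cites Wolkowicz--Styan and remarks that the bound is Samuelson's inequality applied to the eigenspectrum, and your Cauchy--Schwarz step on the centered eigenvalues $\lambda_d-\mu(\bm{\theta})$ is exactly the standard proof of that inequality.

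One small correction to your aside: the companion bound $\lambda_1\ge\mu+\sigma/\sqrt{D-1}$ does not come from the same step. That step bounds every eigenvalue on both sides by $\mu\pm\sqrt{D-1}\,\sigma$, so it cannot single out the maximum; the lower bound needs the ordering $\lambda_1\ge\lambda_d\ge\lambda_D$, e.g.\ via $\sum_d(\lambda_1-\lambda_d)(\lambda_d-\lambda_D)\ge 0$. It is not needed for the theorem anyway.
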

This theorem is equivalent to applying Samuelson's inequality~\cite{tylerLaguerre1999} to the eigenspectrum, and it is known to be tighter than Chebyshev's inequality~\cite{samuelsonHow1968}.
$\mu(\bm{\theta})$ and $\sigma(\bm{\theta})^2$ are the mean and variance of the eigenspectrum, respectively, and their naive definitions are given by
\ali{
\mu(\bm{\theta}) = \frac{1}{D}\sum_{d=1}^D \lambda_d, \ 
\sigma(\bm{\theta})^2 = \frac{1}{D}\sum_{d=1}^D (\lambda_d - \mu(\bm{\theta}))^2. \nn
}
Since closed-form expressions for $\lambda_d$ are unavailable for $D > 5$, simply relying on the naive definitions of the mean and variance fails to yield a useful upper bound in closed form.
However, as shown in Theorem~\ref{th_main_theorem}, the mean and variance of the eigenspectrum can be expressed without explicitly using the individual eigenvalues.
That is, if the closed-form expressions for 
$\mathrm{tr}(\bm{H}_L(\bm{\theta}, \bm{\theta}))$ and 
$\mathrm{tr}(\bm{H}_L(\bm{\theta}, \bm{\theta})^2)$ 
can be derived, a closed-form upper bound for the maximum eigenvalue can be obtained.
This study provides these derivations, and the closed-form expressions for $\mathrm{tr}(\bm{H}_L(\bm{\theta}, \bm{\theta}))$ and $\mathrm{tr}(\bm{H}_L(\bm{\theta}, \bm{\theta})^2)$ are presented in Theorems \ref{theorem_h1_tr} and \ref{theorem_h2_tr}, respectively.

\begin{figure}[t] 
    \centering
    \includegraphics[scale=0.75]{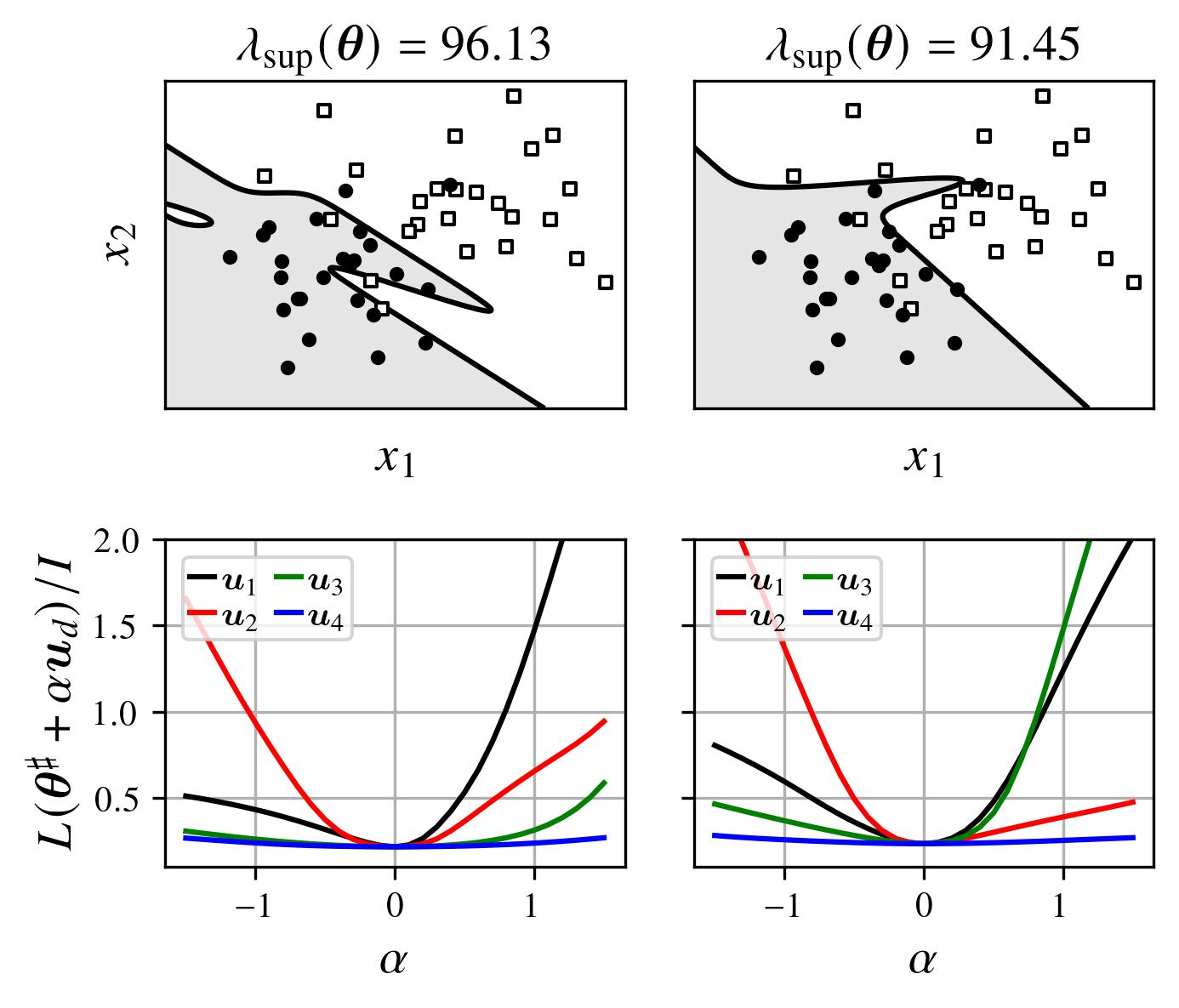}
    \caption{
    (Top) Decision boundaries in the input space for the NNs with the first and second largest upper bounds of the maximum eigenvalue $\lambda_\mathrm{sup}(\bm{\theta})$, observed at all critical points ($N=3$) in the experiments of Appendix \ref{sec_expe}. 
    Data points represent the training data, where white and black indicate Class 0 and Class 1, respectively. 
    (Bottom) One-dimensional visualization of the loss $L$ along the directions of the eigenvectors $\bm{u}_1, \cdots, \bm{u}_4$.
    }
    \label{fig_eigen_max_high}
\end{figure}

The condition for the equality $\lambda_{\mathrm{sup}}(\bm{\theta}) = \lambda_1$ can be summarized as follows.
\begin{proposition}
\ali{
\lambda_1 > 0 \land \Bigg( \bigwedge_{d=2}^D \lambda_d = 0 \Bigg) 
\Rightarrow
\lambda_\mathrm{sup}(\bm{\theta}) = \lambda_1. \label{eq_lambda_equal}
}
\end{proposition}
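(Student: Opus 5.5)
This is a direct computation once $\mu(\bm{\theta})$ and $\sigma(\bm{\theta})^2$ are written through the eigenvalues. By \eqref{eq_real_eigen} and the orthogonal diagonalization of the real symmetric matrix $\bm{H}_L(\bm{\theta},\bm{\theta})$, we have $\mathrm{tr}(\bm{H}_L)=\sum_d\lambda_d$ and $\mathrm{tr}(\bm{H}_L^2)=\sum_d\lambda_d^2$. So \eqref{eq_ave_eigen} and \eqref{eq_var_eigen} coincide with the naive mean and variance of the eigenspectrum. Under the hypothesis $\lambda_2=\cdots=\lambda_D=0$, these sums reduce to $\lambda_1$ and $\lambda_1^2$ respectively.

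Substituting gives
\begin{align}
\mu(\bm{\theta})=\frac{\lambda_1}{D},\quad
\sigma(\bm{\theta})^2=\frac{\lambda_1^2}{D}-\frac{\lambda_1^2}{D^2}=\frac{(D-1)\lambda_1^2}{D^2}. \nn
\end{align}
Taking the nonnegative square root requires $\lambda_1>0$; this is exactly where that part of the hypothesis is used. It yields $\sigma(\bm{\theta})=\sqrt{D-1}\,\lambda_1/D$. Inserting this into \eqref{eq_main_theorem} gives
\begin{align}
\lambda_\mathrm{sup}(\bm{\theta})=\frac{\lambda_1}{D}+\frac{(D-1)\lambda_1}{D}=\lambda_1, \nn
\end{align}
which is the claim \eqref{eq_lambda_equal}.

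There is no real obstacle here. The only point needing care is the sign when extracting $\sigma(\bm{\theta})=\sqrt{\sigma(\bm{\theta})^2}$. If $\lambda_1<0$, the same computation would give $|\lambda_1|$ in place of $\lambda_1$, and equality would fail. We also assume $D\ge 2$ so that the conjunction over $d$ is meaningful, although the algebra works for $D=1$ as well.

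As a sanity check, this case matches the equality case of Samuelson's inequality: $D-1$ of the values coincide and the remaining one lies above them.
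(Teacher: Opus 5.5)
Your proposal is correct and is the same computation as the paper's proof: substitute the eigenvalues into the trace formulas to get $\mu(\bm{\theta})=\lambda_1/D$ and $\sigma(\bm{\theta})^2=(D-1)\lambda_1^2/D^2$, then take the square root and add. Your attention to the sign is a fair refinement of the paper's weaker hypothesis $\lambda_1\neq 0$, with two small corrections to that side remark: the ordering $\lambda_1\ge\lambda_2=0$ already forces $\lambda_1\ge 0$, so the negative case cannot actually occur; and if it did, the formula would give $(D-2)|\lambda_1|/D$, not $|\lambda_1|$.
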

\begin{proof}
See Appendix~\ref{proof_lambda_equal}.
\end{proof}
This is related to the fact that $\lambda_{\mathrm{sup}}(\bm{\theta})$ approaches $\lambda_1$ as the Hessian becomes more highly degenerate.
It has been pointed out that the Hessian of the loss in NNs exhibits a low-rank structure~\cite{wuDissecting2022}. 
In fact, many studies have reported that the eigenspectrum consists of a small number of large eigenvalues and a vast majority of zero eigenvalues~\cite{yaoPyHessian2020, sagunEigenvalues2017, xiePowerLaw2022, papyanTraces2020}.

\subsection{Relationship between the Upper Bound on the Maximum Eigenvalue and Generalization}
Using the critical points obtained from a set of three-layer NNs, each trained to solve the binary classification problem described in Appendix~\ref{sec_expe}, we investigate several characteristics of the upper bound $\lambda_{\mathrm{sup}}(\bm{\theta})$.
Figure~\ref{fig_eigen_vs_eigenupper} shows a scatter plot of the numerically calculated maximum eigenvalue $\lambda_1$ versus the analytical upper bound $\lambda_{\mathrm{sup}}(\bm{\theta})$ based on Theorem \ref{th_main_theorem}.
From this figure, it can be observed that the proposed upper bound $\lambda_{\mathrm{sup}}(\bm{\theta})$ is close to the maximum eigenvalue $\lambda_1$. Therefore, it can be considered a tight and effective upper bound in practice.
The upper part of Figure~\ref{fig_eigen_max_high} visualizes the decision boundaries for the binary classification problem using the parameters of the critical points where $\lambda_{\mathrm{sup}}(\bm{\theta})$ ranked first and second largest.
The lower part of Figure~\ref{fig_eigen_max_high} visualizes the loss $L(\bm{\theta}^\sharp + \alpha \bm{u})$ along the direction of the eigenvector $\bm{u}$ around the critical points.
To visualize the loss landscape, we employed the method of Goodfellow et al.~\cite{goodfellowQualitatively2015}.
These results confirm that when the upper bound on the maximum eigenvalue is large, the loss at the critical point is sharply peaked, and the decision boundaries are significantly distorted.
In contrast, Figure~\ref{fig_eigen_max_low} presents the results for the cases where $\lambda_{\mathrm{sup}}(\bm{\theta})$ ranks first and second smallest.
This figure demonstrates that when the upper bound on the maximum eigenvalue is small, the loss at the critical point is flat, and the distortion of the decision boundaries is minimal.
Consequently, a classifier with a smaller $\lambda_{\mathrm{sup}}(\bm{\theta})$ can be regarded as a more desirable model for classification.

\begin{figure}[t] 
    \centering
    \includegraphics[scale=0.75]{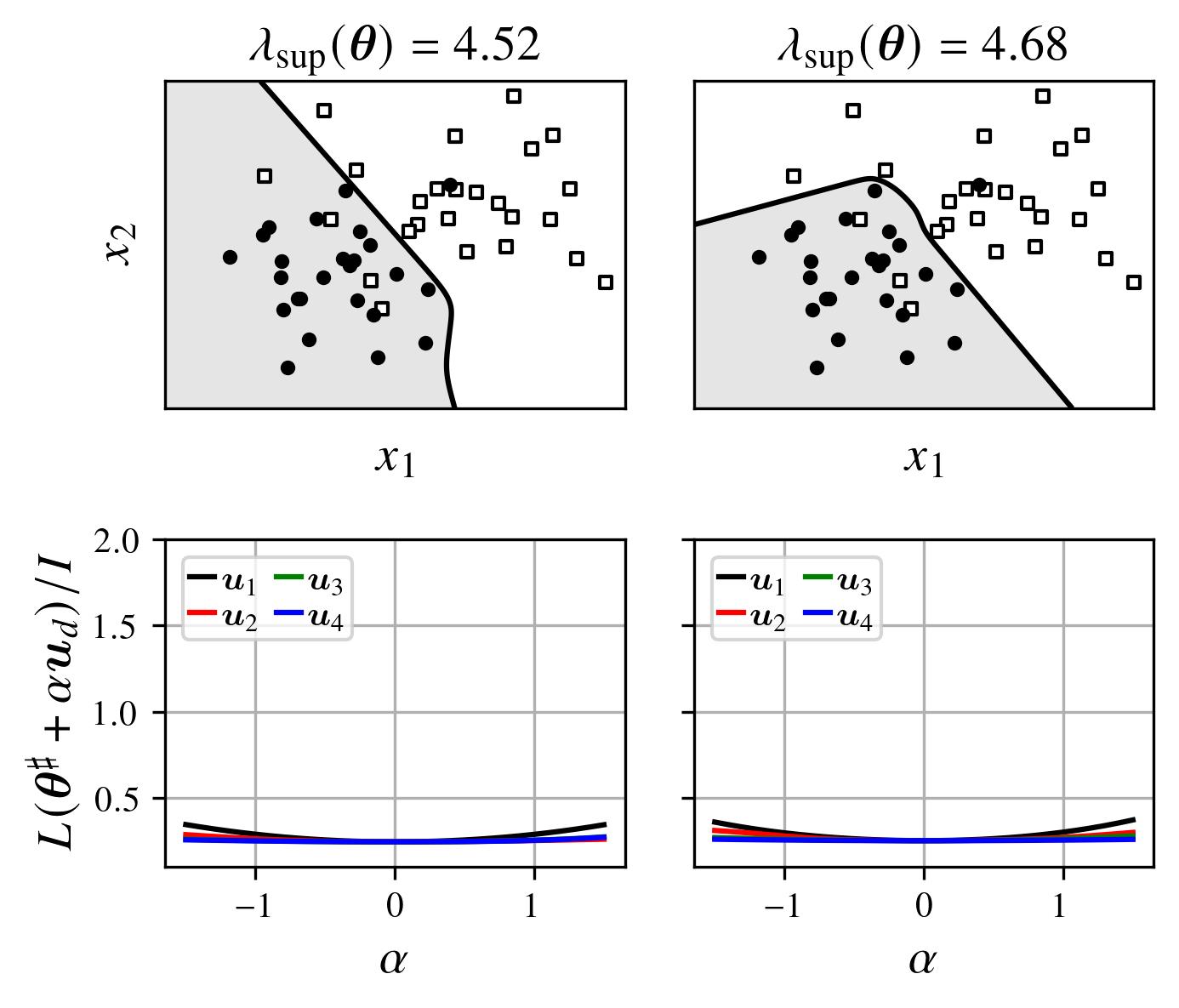}
    \caption{Cases where the upper bound of the maximum eigenvalue, $\lambda_\mathrm{sup}(\bm{\theta})$, ranks first and second smallest. Other descriptions are the same as in Figure~\ref{fig_eigen_max_high}.}
    \label{fig_eigen_max_low}
\end{figure}

\begin{figure}[t] 
    \centering
    \includegraphics[scale=0.65]{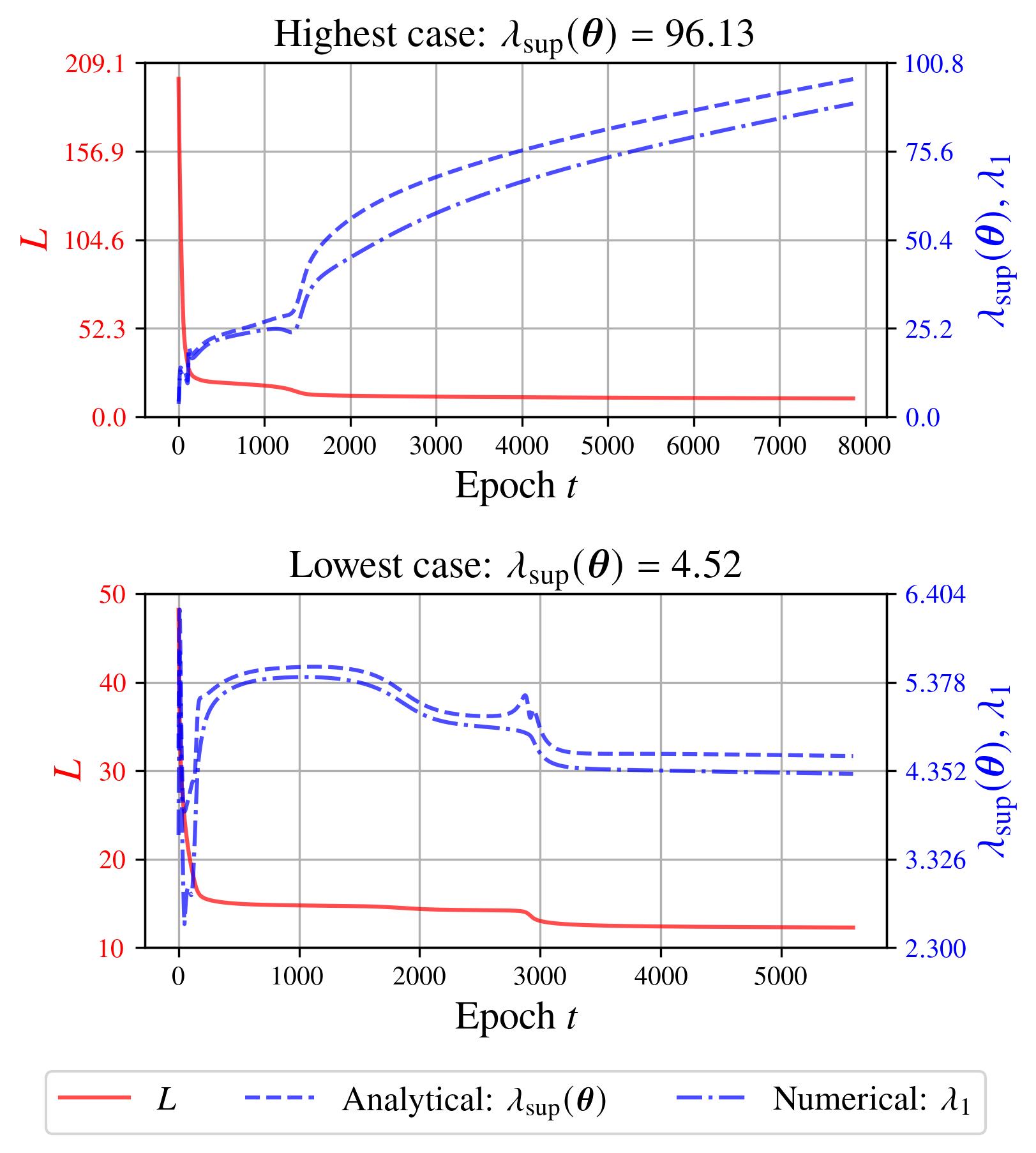}    
    \caption{Dynamics of $L$, $\lambda_{\mathrm{sup}}(\bm{\theta})$, and $\lambda_1$ with respect to training epochs for the highest and lowest $\lambda_{\mathrm{sup}}(\bm{\theta})$ cases.}
    \label{fig_dyn}
\end{figure}

\begin{figure}[t] 
    \centering
    \includegraphics[scale=0.75]{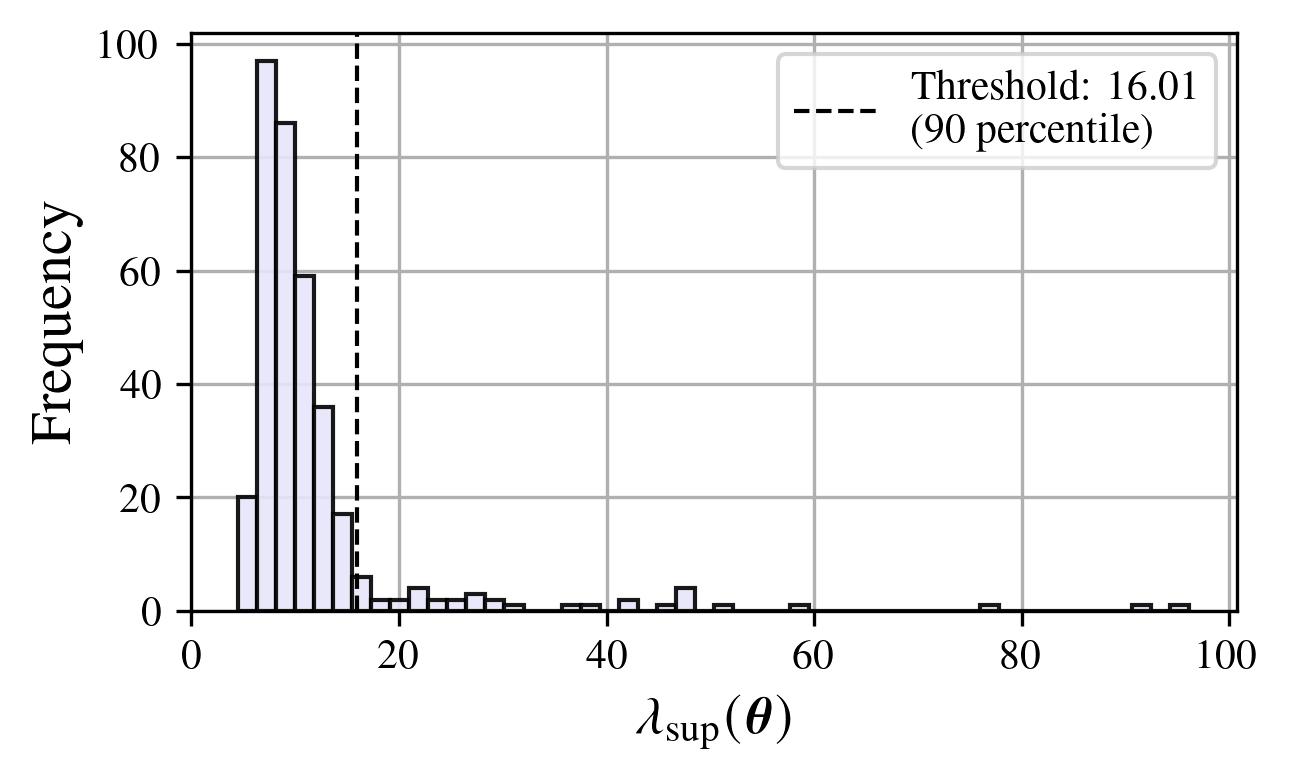}
    \caption{Distribution of the maximum eigenvalue upper bounds at critical points. 
All critical points observed in the experiment ($N=3$) described in Appendix \ref{sec_expe} were used.}
    \label{fig_eigen_hist}
\end{figure}

\begin{figure}[t] 
    \centering
    \includegraphics[scale=0.75]{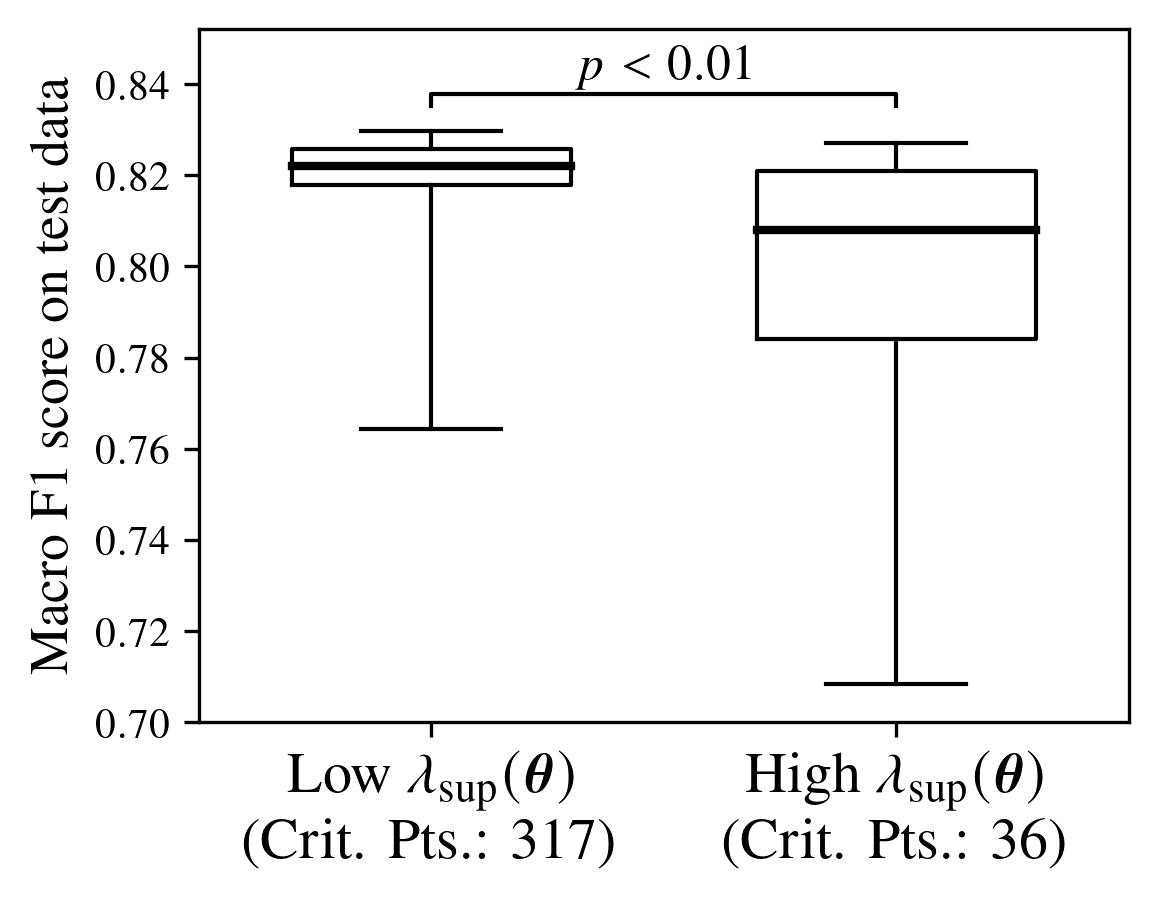}
    \caption{
    Relationship between the maximum eigenvalue upper bound and the Macro F1-score (test data). 
    The boxes represent the 25th--75th percentile range, the whiskers indicate the 10th--90th percentile range, and the thick horizontal lines denote the medians. 
The $p$-value was calculated using the two-sided Mann-Whitney U test. 
Critical points from the experiment ($N=3$) described in Appendix \ref{sec_expe} were used.}
    \label{fig_box_f1}
\end{figure}

Figure~\ref{fig_dyn} shows the dynamics of the loss $L$ and the upper bound $\lambda_{\mathrm{sup}}(\bm{\theta})$ with respect to the training epochs of the gradient descent method.
The upper part shows the case with the highest $\lambda_{\mathrm{sup}}(\bm{\theta})$, while the lower part shows the case with the lowest $\lambda_{\mathrm{sup}}(\bm{\theta})$.
These results reveal two distinct types of learning dynamics: one in which the loss decreases while $\lambda_{\mathrm{sup}}(\bm{\theta})$ increases, and another in which the loss decreases without an increase in $\lambda_{\mathrm{sup}}(\bm{\theta})$.
Furthermore, it can be confirmed that the analytical upper bound $\lambda_{\mathrm{sup}}(\bm{\theta})$ derived in Theorem \ref{th_main_theorem} closely tracks the maximum eigenvalue $\lambda_1$, which can only be determined numerically.

Figure~\ref{fig_eigen_hist} shows the distribution of the upper bound $\lambda_{\mathrm{sup}}(\bm{\theta})$.
This figure shows that the distribution of $\lambda_{\mathrm{sup}}(\bm{\theta})$ is heavily right-skewed; while most critical points are concentrated at low values, there exists a long tail of a few cases with exceptionally high upper bounds.
In other words, these results suggest that while NN training typically converges to flat minima, it can occasionally reach sharp minima.
As shown in Figure~\ref{fig_eigen_max_high}, sharp minima with a large $\lambda_{\mathrm{sup}}(\bm{\theta})$ correspond to inappropriate decision boundaries.
Therefore, we investigated whether a large $\lambda_{\mathrm{sup}}(\bm{\theta})$ leads to a degradation in test performance.
Figure~\ref{fig_box_f1} shows the macro F1-score on the test data.
The ``Low $\lambda_{\mathrm{sup}}(\bm{\theta})$'' group consists of 317 critical points, representing approximately 90\% of the total critical points, which have small maximum eigenvalue upper bounds.
The ``High $\lambda_{\mathrm{sup}}(\bm{\theta})$'' group comprises the remaining 10\% (36 critical points), which represent the outliers with large maximum eigenvalue upper bounds.
The dashed line in the histogram shown in Figure~\ref{fig_eigen_hist} represents this boundary.
Observations of the macro F1-scores reveal that critical points with a low $\lambda_{\mathrm{sup}}(\bm{\theta})$ exhibit stable test performance.
In contrast, those with a high $\lambda_{\mathrm{sup}}(\bm{\theta})$ show significant variance in performance, along with a lower median score.
The results of a two-sided Mann-Whitney U test showed a significant difference at the 1\% level.
This indicates that the upper bound of the maximum eigenvalue is indeed associated with the test performance.
Figure~\ref{fig_eigen_max_high} suggests that the distortion of decision boundaries is the cause of the poor test performance.
Therefore, it is crucial to observe and analyze the upper bound of the maximum eigenvalue.

\begin{figure}[t] 
    \centering
    \includegraphics[scale=0.73]{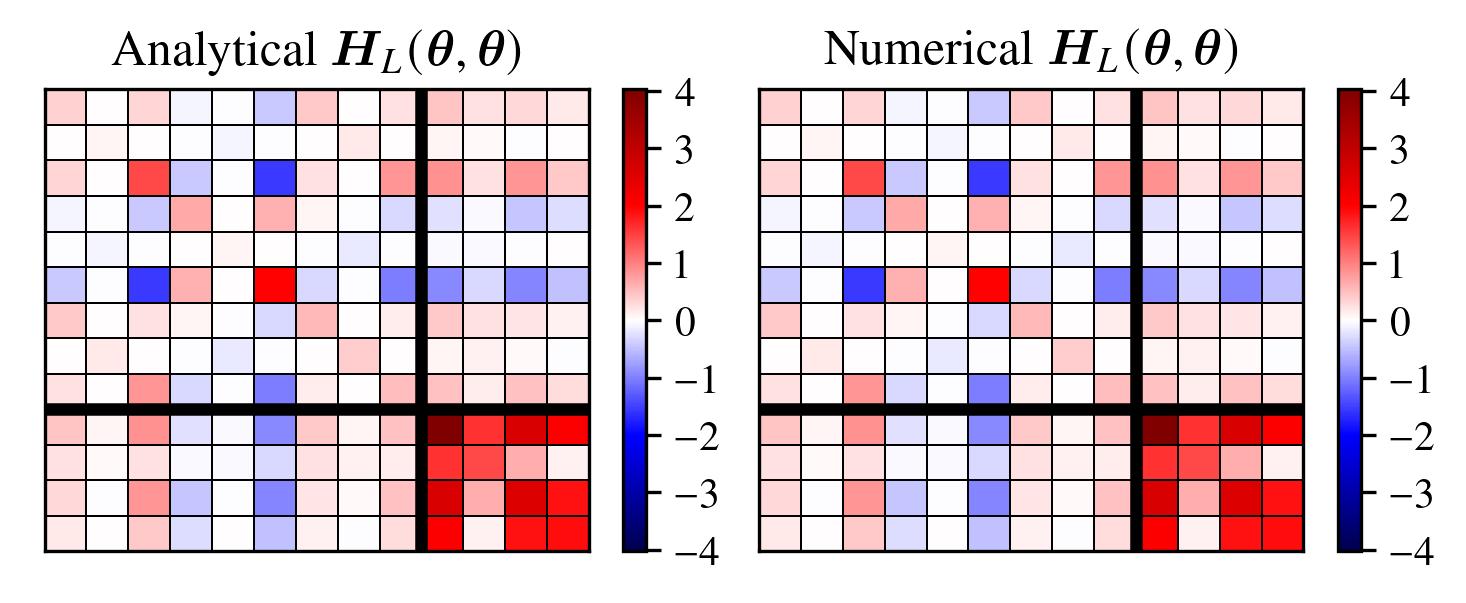}
\caption{Hessian (Matrix size: $13 \times 13$) of a single critical point obtained in the experiment in Appendix \ref{sec_expe}. 
The left shows the analytical solution based on Eqs.~\eqref{eq_ww}--\eqref{eq_wv}, and the right shows the numerical solution obtained by the three-point central difference approximation. 
Thick lines indicate the boundaries of the block components. 
Top-left: $\bm{H}_L(\bm{w}, \bm{w})$, top-right: $\bm{H}_L(\bm{w}, \bm{v})$, bottom-left: $\bm{H}_L(\bm{v}, \bm{w})$, and bottom-right: $\bm{H}_L(\bm{v}, \bm{v})$.}
    \label{fig_hessian_heat}
\end{figure}

\begin{figure}[t] 
    \centering
    \includegraphics[scale=0.73]{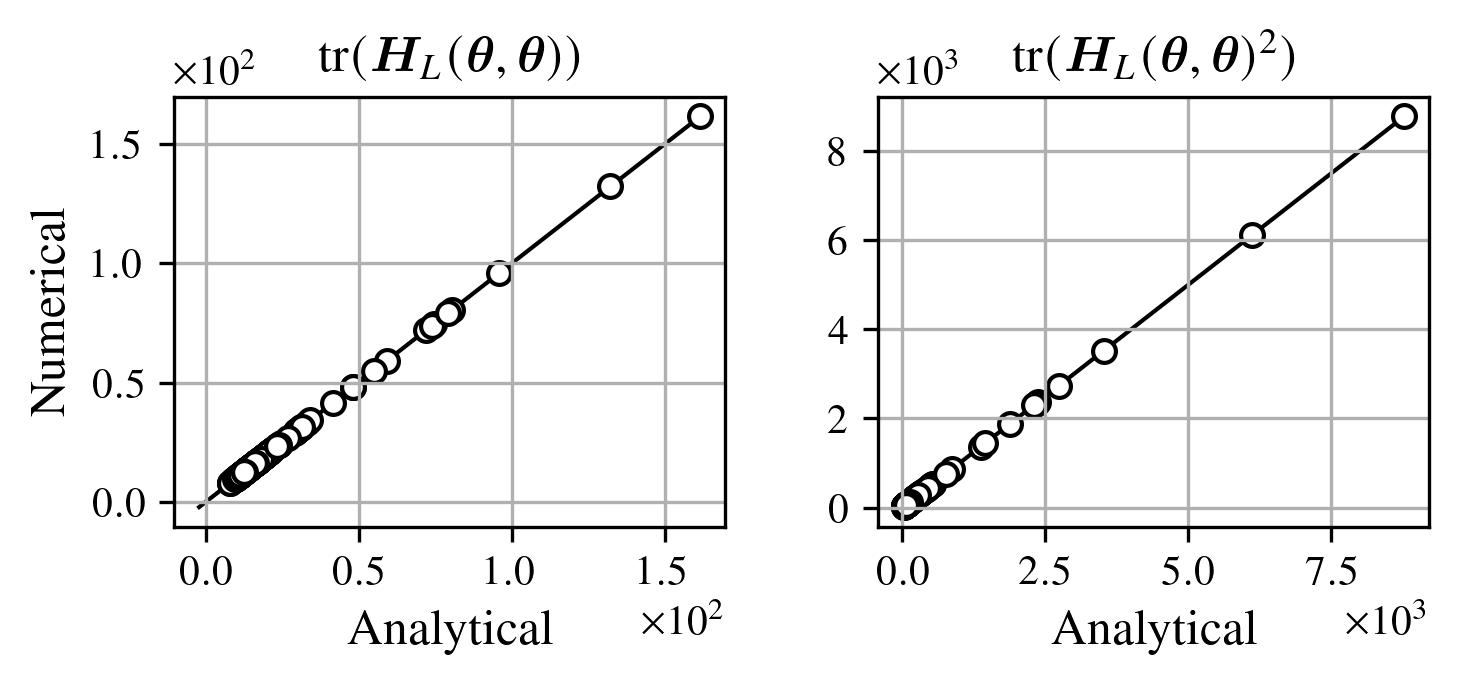}    
    \caption{Comparison of the trace between numerical solutions obtained via ``numpy.trace'' and analytical solutions derived from Theorems \ref{theorem_h1_tr} and \ref{theorem_h2_tr}. All critical points identified in Appendix \ref{sec_expe} are employed.}
    \label{fig_hessian_error}
\end{figure}

\section{Key Ingredients for the Main Theorem}
\subsection{Gradients}
Theorem~\ref{th_main_theorem} includes the Hessian. 
Since the Hessian is a matrix consisting of second-order derivatives, we first obtain the first-order derivatives.
The gradient of the loss $l$ for a single data point shown in Eq.~\eqref{eq_unit_ce} is as follows.
\begin{proposition}
\ali{
 \frac{\pa l}{\pa \bm{\theta}} &= \mat{\frac{\pa l}{\pa \bm{w}} \\ \frac{\pa l}{\pa \bm{v}}} \in \mathbb{R}^{D \times 1}, \label{eq_d_ltheta_1} \\
 \frac{\partial l}{\partial \bm{w}} &= \delta \bm{h}(\bm{x}) \otimes (\bm{F}^\prime(\bm{y})\widetilde{\bm{V}}^\top) \in \mathbb{R}^{(M+1)N \times 1}, \label{eq_d_lW}\\
\frac{\partial l}{\partial \bm{v}} &= \delta\bm{h}(\bm{r})\in \mathbb{R}^{(N+1) \times 1}, \ \delta = p-q. \label{eq_d_lV}
}
\end{proposition}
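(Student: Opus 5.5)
The plan is to apply the chain rule through the forward pass of Eq.~\eqref{eq_nn_forward}, handling the output layer first and then backpropagating to $\bm{W}$. \textbf{Step 1 (scalar output derivative).} Differentiate Eq.~\eqref{eq_unit_ce} with respect to $z$. Using $s'(z)=s(z)(1-s(z))=p(1-p)$,
\begin{align}
\frac{\pa l}{\pa z} = -\frac{q}{p}p(1-p) + \frac{1-q}{1-p}p(1-p) = -q(1-p)+(1-q)p = p-q = \delta. \nn
\end{align}
This cancellation between the logarithm and the sigmoid is what makes $\delta$ appear cleanly.

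\textbf{Step 2 (output weights).} Since $z=\bm{V}\bm{h}(\bm{r}) = \bm{v}^\top\bm{h}(\bm{r})$ and $\bm{r}$ does not depend on $\bm{v}$, we have $\pa z/\pa \bm{v} = \bm{h}(\bm{r})$. Hence $\pa l/\pa\bm{v} = \delta\bm{h}(\bm{r})$, which gives Eq.~\eqref{eq_d_lV}.

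\textbf{Step 3 (hidden weights).} Write $z = c + \widetilde{\bm{V}}\bm{f}(\bm{y})$. Then
\begin{align}
\frac{\pa z}{\pa \bm{y}} = \bm{F}'(\bm{y})\widetilde{\bm{V}}^\top, \nn
\end{align}
where $\bm{F}'(\bm{y})$ is the Jacobian of $\bm{f}$ (diagonal for elementwise activations, as defined in Appendix~\ref{sec_activation_base}). For the column-stacked vectorization of Eq.~\eqref{eq_w_v_flat}, $\bm{y}=\bm{W}\bm{h}(\bm{x}) = \sum_{m=0}^{M} x_m \bm{W}_{:m}$. Therefore $\pa \bm{y}^\top/\pa \bm{W}_{:m} = x_m \bm{I}_N$, and so
\begin{align}
\frac{\pa l}{\pa \bm{W}_{:m}} = \delta\, x_m\, \bm{F}'(\bm{y})\widetilde{\bm{V}}^\top. \nn
\end{align}
Stacking these blocks over $m=0,\dots,M$ is exactly the Kronecker product $\bm{h}(\bm{x})\otimes(\bm{F}'(\bm{y})\widetilde{\bm{V}}^\top)$, which gives Eq.~\eqref{eq_d_lW}. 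The bias column $m=0$ is covered automatically because $x_0=1$.

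\textbf{Main obstacle.} The only delicate point is bookkeeping rather than analysis. One must check that the ordering in the Kronecker product matches the stacking convention of $\bm{w}$ (columns of $\bm{W}$ stacked, each of length $N$). One must also check that the transpose and Jacobian conventions of $\bm{F}'$ make $\bm{F}'(\bm{y})\widetilde{\bm{V}}^\top$ an $N\times 1$ vector. I will settle both by verifying a single block $m$ explicitly, as in Step 3, before assembling.
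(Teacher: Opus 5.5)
Your proposal is correct and follows essentially the same route as the paper: the chain rule with $\partial l/\partial z = p-q$, then $\partial z/\partial \bm{v} = \bm{h}(\bm{r})$, then the per-column block $x_m \delta \bm{F}'(\bm{y})\widetilde{\bm{V}}^\top$ stacked over $m=0,\dots,M$ into the Kronecker product. The only cosmetic difference is that you merge the paper's intermediate Jacobians $\partial \bm{r}/\partial \bm{y}$, $\partial \bm{h}(\bm{r})/\partial \bm{r}$ and $\partial z/\partial \bm{h}(\bm{r})$ into the single factor $\partial z/\partial \bm{y} = \bm{F}'(\bm{y})\widetilde{\bm{V}}^\top$.
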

\begin{proof}
See Appendices \ref{proof_1st_deri_W} and \ref{proof_1st_deri_V}.
\end{proof}
Note that $\otimes$ denotes the Kronecker product.
The gradient of the loss for all data, $\partial L / \partial \bm{\theta}$, is obtained by substituting Eq.~\eqref{eq_d_ltheta_1} into Eq.~\eqref{eq_hessian_L}.

\subsection{Hessian}
The Hessian can be obtained by using the closed-form expression of the gradient.
The Hessian of $l$ is as follows.
\begin{proposition}
\ali{
&\bm{H}_l(\bm{\theta}, \bm{\theta}) = \mat{
\bm{H}_l(\bm{w}, \bm{w}) & \bm{H}_l(\bm{w}, \bm{v}) \\
\bm{H}_l(\bm{v}, \bm{w}) & \bm{H}_l(\bm{v}, \bm{v})
}  \in \mathbb{R}^{D \times D}, \label{eq_H_th2} \\
&\bm{H}_l(\bm{w}, \bm{w}) \in \mathbb{R}^{(M+1)N \times (M+1)N}, 
\bm{H}_l(\bm{v}, \bm{v}) \in \mathbb{R}^{(N+1) \times (N+1)}, \nn \\
&\bm{H}_l(\bm{v}, \bm{w}) \in \mathbb{R}^{(N+1) \times (M+1)N},
\bm{H}_l(\bm{w}, \bm{v}) \in \mathbb{R}^{(M+1)N \times (N+1)}. \nn
}
\end{proposition}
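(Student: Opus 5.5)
The statement is essentially a definitional block decomposition of $\bm{H}_l(\bm{\theta},\bm{\theta})$ induced by the partition $\bm{\theta} = [\bm{w}^\top\ \bm{v}^\top]^\top$. The plan is to verify the partition directly from the definition
\[
\bm{H}_l(\bm{\theta},\bm{\theta}) = \frac{\pa}{\pa \bm{\theta}}\Big(\frac{\pa l}{\pa \bm{\theta}}\Big)^\top,
\]
and then confirm that each block has the stated size.

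First, I would use Eq.~\eqref{eq_d_ltheta_1}. The gradient is the stacked vector $[(\pa l/\pa \bm{w})^\top\ (\pa l/\pa \bm{v})^\top]^\top$, so its transpose is a row vector. That row splits into the $\bm{w}$-part of length $(M+1)N$ and the $\bm{v}$-part of length $N+1$.

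Next, I would differentiate this row with respect to the column vector $\bm{\theta} = [\bm{w}^\top\ \bm{v}^\top]^\top$. The $(d,d')$ entry of the Hessian is $\pa^2 l/\pa(\bm{\theta})_d\pa(\bm{\theta})_{d'}$. The row index $d$ ranges first over the $(M+1)N$ entries of $\bm{w}$ and then over the $N+1$ entries of $\bm{v}$; the column index $d'$ follows the same ordering. Grouping indices this way gives four blocks, which I define by
\[
\bm{H}_l(\bm{a},\bm{b}) = \frac{\pa}{\pa \bm{a}}\Big(\frac{\pa l}{\pa \bm{b}}\Big)^\top, \qquad \bm{a},\bm{b}\in\{\bm{w},\bm{v}\}.
\]
The block sizes then follow from the dimensions: $\dim\bm{w} = (M+1)N$ and $\dim\bm{v} = N+1$. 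Their sum is $(M+1)N + N + 1 = MN + 2N + 1 = D$, which confirms that the full matrix is $D\times D$.

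Finally, I would note the symmetry $\bm{H}_l(\bm{v},\bm{w}) = \bm{H}_l(\bm{w},\bm{v})^\top$. This follows from Schwarz's theorem, because $l$ is $C^2$ in $\bm{\theta}$: the activations considered are smooth, as are $s$ and $\log$ on $(0,1)$. The same symmetry is what justifies Eq.~\eqref{eq_real_eigen}.

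There is no real obstacle. The only point needing care is the bookkeeping of the ordering convention for $\bm{w}$ from Eq.~\eqref{eq_w_v_flat}, which stacks columns $\bm{W}_{:m}$. This ordering must agree with the Kronecker ordering $\bm{h}(\bm{x})\otimes(\cdot)$ in Eq.~\eqref{eq_d_lW}, so that the blocks are indexed consistently. Computing the explicit contents of each block is a separate task and is not required for this statement.
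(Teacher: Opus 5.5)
Your proposal is correct and matches the paper's proof. The paper likewise writes $\frac{\partial}{\partial\bm{\theta}}$ as the stacked operator $[\partial/\partial\bm{w};\,\partial/\partial\bm{v}]$ acting on the row $[(\partial l/\partial\bm{w})^\top\ (\partial l/\partial\bm{v})^\top]$, reads off the four blocks and their sizes, and records $\bm{H}_l(\bm{v},\bm{w})^\top=\bm{H}_l(\bm{w},\bm{v})$ by exchanging the order of differentiation; your appeal to Schwarz's theorem with the $C^2$ smoothness of $l$ is simply a more explicit justification of that last step.
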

\begin{proof}
See Appendix \ref{sec_hesse_start}.
\end{proof}
\begin{proposition}
\begin{align}
&\bm{H}_l(\bm{w}, \bm{w}) = \bm{h}(\bm{x})\bm{h}(\bm{x})^\top \otimes \nn \\
& \big( s^\prime(z)\bm{F}^\prime(\bm{y})\widetilde{\bm{V}}^\top \widetilde{\bm{V}} \bm{F}^\prime(\bm{y})  +\delta \mathrm{diag}(\widetilde{\bm{V}}^\top) \bm{F}^{\prime\prime}(\bm{y}) \big), \label{eq_ww} \\
&\bm{H}_l(\bm{v}, \bm{v}) =  s^\prime(z) \bm{h}(\bm{r}) \bm{h}(\bm{r})^\top, \label{eq_vv}\\
& \bm{H}_l(\bm{v}, \bm{w}) = \nn \\
& \bm{h}(\bm{x})^\top \otimes \Bigg( s^\prime(z) \bm{h}(\bm{r})  \widetilde{\bm{V}} \bm{F}^\prime(\bm{y}) + \delta \mat{\bm{0}_N^\top \\ \bm{F}^\prime(\bm{y})}\Bigg), \label{eq_vw} \\
&\bm{H}_l(\bm{w}, \bm{v}) = \nn \\
&\bm{h}(\bm{x})  \otimes \big(s^\prime(z) \bm{F}^\prime(\bm{y}) \widetilde{\bm{V}}^\top \bm{h}(\bm{r})^\top  + \delta \mat{\bm{0}_N & \bm{F}^\prime(\bm{y})} \big). \label{eq_wv}
 \end{align}
 \end{proposition}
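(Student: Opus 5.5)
The plan is to differentiate the gradient expressions of Eqs.~\eqref{eq_d_lW}--\eqref{eq_d_lV} once more, block by block, using the chain rule through the forward pass of Eq.~\eqref{eq_nn_forward}. Two basic derivatives drive everything. First, for binary cross-entropy with a sigmoid output, $\pa l/\pa z = p-q = \delta$, so $\pa \delta/\pa z = s^\prime(z)$. Second, $z = \bm{V}\bm{h}(\bm{r})$ gives $\pa z/\pa \bm{v} = \bm{h}(\bm{r})$ and $\pa z/\pa \bm{w} = \bm{h}(\bm{x}) \otimes (\bm{F}^\prime(\bm{y})\widetilde{\bm{V}}^\top)$, which is exactly the gradient with $\delta$ divided out. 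The Jacobian of $\bm{y}$ with respect to $\bm{w}$ is $\bm{h}(\bm{x})^\top \otimes \bm{I}_N$, because of the column-stacking convention in Eq.~\eqref{eq_w_v_flat}. I take $\bm{F}^\prime(\bm{y})=\mathrm{diag}(f^\prime(y_n))$ and $\bm{F}^{\prime\prime}(\bm{y})=\mathrm{diag}(f^{\prime\prime}(y_n))$, as in Appendix~\ref{sec_activation_base}.

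\textbf{$(\bm{v},\bm{v})$ block.} Since $\bm{h}(\bm{r})$ does not depend on $\bm{v}$, differentiating $\delta\bm{h}(\bm{r})$ hits only $\delta$. This gives $s^\prime(z)\bm{h}(\bm{r})\bm{h}(\bm{r})^\top$, which is Eq.~\eqref{eq_vv}.

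\textbf{$(\bm{v},\bm{w})$ block.} Differentiate $\delta\bm{h}(\bm{r})$ with respect to $\bm{w}^\top$, which produces two terms.
\begin{itemize}
\item The $\delta$ factor gives $s^\prime(z)\bm{h}(\bm{r})\,(\pa z/\pa\bm{w})^\top = s^\prime(z)\bm{h}(\bm{r})\big(\bm{h}(\bm{x})^\top\otimes \widetilde{\bm{V}}\bm{F}^\prime(\bm{y})\big)$.
\item The factor $\bm{h}(\bm{r})$ gives $\delta\,\pa\bm{h}(\bm{r})/\pa\bm{w}^\top = \delta \mat{\bm{0}_N^\top \\ \bm{F}^\prime(\bm{y})}(\bm{h}(\bm{x})^\top\otimes\bm{I}_N)$. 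Its first row is zero because $r_0=1$ is constant.
\end{itemize}
The mixed-product property lets me pull $\bm{h}(\bm{x})^\top\otimes(\cdot)$ out of both terms, because $\bm{a}(\bm{h}^\top\otimes\bm{b}^\top)=\bm{h}^\top\otimes(\bm{a}\bm{b}^\top)$ for a column $\bm{a}$ and a row $\bm{h}^\top$. This yields Eq.~\eqref{eq_vw}. Eq.~\eqref{eq_wv} then follows by transposing and using symmetry of the Hessian, with $(\bm{h}^\top\otimes \bm{A})^\top=\bm{h}\otimes\bm{A}^\top$.

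\textbf{$(\bm{w},\bm{w})$ block.} Write $\pa l/\pa\bm{w} = \delta\,\bm{h}(\bm{x})\otimes\bm{g}(\bm{y})$ with $\bm{g}(\bm{y})=\bm{F}^\prime(\bm{y})\widetilde{\bm{V}}^\top$, whose entries are $v_n f^\prime(y_n)$. Differentiating with respect to $\bm{w}^\top$ again gives two terms.
\begin{itemize}
\item The $\delta$ factor gives $s^\prime(z)\,(\bm{h}(\bm{x})\otimes\bm{g})(\bm{h}(\bm{x})\otimes\bm{g})^\top = s^\prime(z)\,\bm{h}(\bm{x})\bm{h}(\bm{x})^\top\otimes \bm{g}\bm{g}^\top$.
\item The $\bm{g}$ factor gives $\delta\,(\bm{h}(\bm{x})\otimes \pa\bm{g}/\pa\bm{y}^\top)(\bm{h}(\bm{x})^\top\otimes\bm{I}_N)$. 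Here $\pa\bm{g}/\pa\bm{y}^\top = \mathrm{diag}(\widetilde{\bm{V}}^\top)\bm{F}^{\prime\prime}(\bm{y})$, and the mixed-product rule collapses the term to $\delta\,\bm{h}(\bm{x})\bm{h}(\bm{x})^\top\otimes\mathrm{diag}(\widetilde{\bm{V}}^\top)\bm{F}^{\prime\prime}(\bm{y})$.
\end{itemize}
Adding the two terms gives Eq.~\eqref{eq_ww}.

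The main obstacle is the index bookkeeping for the Kronecker structure rather than any analytic step. In particular, I must confirm that the ordering in Eq.~\eqref{eq_w_v_flat} (columns $\bm{W}_{:m}$ stacked) makes $\pa\bm{y}/\pa\bm{w}^\top=\bm{h}(\bm{x})^\top\otimes\bm{I}_N$ rather than $\bm{I}_N\otimes\bm{h}(\bm{x})^\top$. I also need to handle the bias row of $\bm{h}(\bm{r})$ correctly in the mixed blocks. To guard against errors, I would verify each entry by a scalar check of $\pa^2 l/\pa W_{nm}\pa W_{n'm'}$ and $\pa^2 l/\pa v_k \pa W_{nm}$ computed directly. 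The scalar entries are
\begin{align}
\frac{\pa^2 l}{\pa W_{nm}\pa W_{n'm'}} &= x_m x_{m'}\big(s^\prime v_n v_{n'} f^\prime(y_n)f^\prime(y_{n'}) + \delta\,[n=n']\,v_n f^{\prime\prime}(y_n)\big), \nn \\
\frac{\pa^2 l}{\pa v_k\pa W_{nm}} &= x_m\big(s^\prime r_k v_n f^\prime(y_n) + \delta\,[k=n]\,f^\prime(y_n)\big). \nn
\end{align}
Here $[\cdot]$ is the indicator and $r_0=1$, so the second term vanishes for $k=0$. Each of these matches the corresponding entry of the claimed block, which would be the final consistency check.
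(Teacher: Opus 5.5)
Your proposal is correct and follows essentially the paper's route: you apply the product rule to $\delta$ times the remaining factor, with $\partial\delta = s^\prime(z)\,\partial z$ generating the $s^\prime(z)$ outer-product terms, then assemble the blocks as Kronecker products and obtain the $(\bm{w},\bm{v})$ block by transposition. The differences are only bookkeeping: you use the vectorized Jacobian $\partial\bm{y}/\partial\bm{w}^\top=\bm{h}(\bm{x})^\top\otimes\bm{E}_N$ and the mixed-product rule where the paper computes the $N\times N$ blocks $\bm{H}_l(\bm{W}_{:m},\bm{W}_{:n})$ and $\bm{H}_l(\bm{v},\bm{W}_{:m})$ via $\partial/\partial\bm{W}_{:m}=x_m\,\partial/\partial\bm{y}$ and stacks them, and for the mixed block you differentiate $\partial l/\partial\bm{v}$ with respect to $\bm{w}$ rather than $\partial l/\partial\bm{w}$ with respect to $\bm{v}$, which gives the same matrix because $l$ is $C^2$ and mixed partials commute.
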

\begin{proof}
See Appendices \ref{sec_proof_HWW}, \ref{sec_proof_HVV}, and \ref{sec_proof_HVW}.
\end{proof}
Note that the Hessian of the loss for all data, $\bm{H}_L(\bm{\theta}, \bm{\theta})$, is obtained by substituting Eq.~\eqref{eq_H_th2} into Eq.~\eqref{eq_hessian_L}, and is expressed as
\ali{
 \bm{H}_L(\bm{\theta}, \bm{\theta}) 
&= \sum_{i=1}^I \mat{
\bm{H}_{l_i}(\bm{w}, \bm{w}) & \bm{H}_{l_i}(\bm{w}, \bm{v}) \\
\bm{H}_{l_i}(\bm{v}, \bm{w}) & \bm{H}_{l_i}(\bm{v}, \bm{v})
}\nn \\
&= \mat{
\bm{H}_L(\bm{w}, \bm{w}) & \bm{H}_L(\bm{w}, \bm{v}) \\
\bm{H}_L(\bm{v}, \bm{w}) & \bm{H}_L(\bm{v}, \bm{v})
} \in \mathbb{R}^{D \times D}. \nn
}
To evaluate the correctness of the analytical solution, we compared the analytical and numerical solutions using a single critical point obtained from the experiment in Appendix~\ref{sec_expe}.
The results are shown in Figure~\ref{fig_hessian_heat}, and it can be seen that they are consistent with each other.
For this quantitative evaluation, we calculated the Frobenius norm of the matrix $\Delta \bm{H}_L(\bm{\theta}, \bm{\theta})$ obtained by the difference between the numerical and analytical solutions for all critical points in the experiment,
\ali{
\| \Delta \bm{H}_L(\bm{\theta}, \bm{\theta}) \|_\mathrm{F} = \sqrt{\sum_{i=1}^{D}\sum_{j=1}^{D} ((\Delta \bm{H}_L(\bm{\theta}, \bm{\theta}))_{ij})^2}. \nn
}
As a result, the average value of $\| \Delta \bm{H}_L(\bm{\theta}, \bm{\theta}) \|_\mathrm{F}$ for all critical points was $5.44 \times 10^{-5}$.
Since the numerical and analytical Hessians are consistent with each other, it can be concluded that the Hessian derived in this study is correct.

\subsection{Trace of the Hessian}
Since the Hessian is a real symmetric matrix, its eigenvalues are real numbers.
Therefore, the trace of the Hessian, which is a component of Theorem~\ref{th_main_theorem}, is given by
\ali{
\mathrm{tr}(\bm{H}_L(\bm{\theta}, \bm{\theta})) &=  \sum_{d=1}^D \lambda_d \in \mathbb{R}. \nn
}
This can be expressed as follows.
\begin{theorem} \label{theorem_h1_tr}
\ali{
&\mathrm{tr}(\bm{H}_L(\bm{\theta}, \bm{\theta})) 
= \mathrm{tr}(\bm{H}_L(\bm{w}, \bm{w})) + \mathrm{tr}(\bm{H}_L(\bm{v}, \bm{v})), \label{eq_H_l_the2_known_all_vvww} \\
&\mathrm{tr}(\bm{H}_L(\bm{w}, \bm{w})) = 
\big( \bm{s}^\prime \odot (\bm{F}^{\prime\odot 2}_{\mathrm{y}} \widetilde{\bm{V}}^{\odot 2 \top} )  
+\bm{\delta} \odot (\bm{F}^{\prime\prime}_{\mathrm{y}} \widetilde{\bm{V}}^\top ) \big)^\top \nonumber \\
& \cdots (\bm{E}_I + \bm{X}^\top \bm{X} \odot \bm{E}_I) \bm{1}_I, \label{eq_H_l_the2_known_all_ww} \\
&\mathrm{tr}(\bm{H}_L(\bm{v}, \bm{v})) = \bm{s}^{\prime \top}(\bm{E}_I + \bm{R}^\top \bm{R} \odot \bm{E}_I) \bm{1}_I. \label{eq_H_l_the2_known_all_vv}
}
\end{theorem}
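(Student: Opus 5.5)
We compute the trace blockwise and then sum over data points. First, because $\bm{H}_L(\bm{\theta},\bm{\theta})$ has the block form of Eq.~\eqref{eq_H_th2} summed over $i$, its trace equals the sum of the traces of its diagonal blocks. This gives Eq.~\eqref{eq_H_l_the2_known_all_vvww} immediately. By linearity of the trace, it then suffices to compute $\mathrm{tr}(\bm{H}_{l_i}(\bm{w},\bm{w}))$ and $\mathrm{tr}(\bm{H}_{l_i}(\bm{v},\bm{v}))$ for a single data point, using Eqs.~\eqref{eq_ww} and \eqref{eq_vv}. Afterwards we rewrite the sum over $i$ in the stated matrix form.

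For the $\bm{v}$ block, Eq.~\eqref{eq_vv} gives
\[
\mathrm{tr}(\bm{H}_{l_i}(\bm{v},\bm{v})) = s'(z_i)\,\bm{h}(\bm{r}_i)^\top\bm{h}(\bm{r}_i) = s'(z_i)\,(1+\|\bm{r}_i\|_2^2).
\]
For the $\bm{w}$ block we use $\mathrm{tr}(\bm{A}\otimes\bm{B})=\mathrm{tr}(\bm{A})\,\mathrm{tr}(\bm{B})$, which gives
\[
\mathrm{tr}(\bm{H}_{l_i}(\bm{w},\bm{w})) = (1+\|\bm{x}_i\|_2^2)\,\mathrm{tr}(\bm{B}_i),
\]
where $\bm{B}_i$ is the bracketed $N\times N$ matrix in Eq.~\eqref{eq_ww}. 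Since $\bm{F}'(\bm{y})$ and $\bm{F}''(\bm{y})$ are diagonal (Appendix~\ref{sec_activation_base}), we have
\[
\mathrm{tr}(\bm{F}'\widetilde{\bm{V}}^\top\widetilde{\bm{V}}\bm{F}') = \sum_n f'(y_n)^2 v_n^2, \qquad
\mathrm{tr}(\mathrm{diag}(\widetilde{\bm{V}}^\top)\bm{F}'') = \sum_n v_n f''(y_n).
\]
Hence
\[
\mathrm{tr}(\bm{B}_i) = s'(z_i)\sum_n f'(y_{in})^2 v_n^2 + \delta_i \sum_n v_n f''(y_{in}).
\]

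The remaining step is to match this with the paper's vectorized notation, which is the main source of possible error. We will interpret $\bm{s}'\in\mathbb{R}^I$ as the vector $(s'(z_i))_i$ and $\bm{\delta}=(\delta_i)_i$. We take $\bm{F}'_{\mathrm{y}}$ and $\bm{F}''_{\mathrm{y}}$ to be the $I\times N$ matrices whose rows are $f'(\bm{y}_i)^\top$ and $f''(\bm{y}_i)^\top$. We take $\bm{X}\in\mathbb{R}^{M\times I}$ and $\bm{R}\in\mathbb{R}^{N\times I}$ to be the column-stacked data and hidden representations, consistent with the appendix definitions.

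Under this reading, $\bm{F}'^{\odot2}_{\mathrm{y}}\widetilde{\bm{V}}^{\odot2\top}$ has $i$-th entry $\sum_n f'(y_{in})^2v_n^2$, and $\bm{F}''_{\mathrm{y}}\widetilde{\bm{V}}^\top$ has $i$-th entry $\sum_n f''(y_{in})v_n$. So the vector $\bm{a}$ in parentheses in Eq.~\eqref{eq_H_l_the2_known_all_ww} has entries $a_i=\mathrm{tr}(\bm{B}_i)$. Next, $\bm{X}^\top\bm{X}\odot\bm{E}_I$ is the diagonal matrix $\mathrm{diag}(\|\bm{x}_i\|^2)$, which we will read with $\bm{E}_I$ the identity. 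Therefore $(\bm{E}_I+\bm{X}^\top\bm{X}\odot\bm{E}_I)\bm{1}_I$ is the vector $(1+\|\bm{x}_i\|^2)_i$, and $\bm{a}^\top$ applied to it equals $\sum_i (1+\|\bm{x}_i\|^2)\,a_i$, as required. The $\bm{v}$-block formula Eq.~\eqref{eq_H_l_the2_known_all_vv} follows in exactly the same way with $\bm{R}$ in place of $\bm{X}$.

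The only real obstacle is bookkeeping. We must confirm that the diagonality of $\bm{F}'$ and $\bm{F}''$ holds for every activation considered, since each acts elementwise. We must also confirm that the orientation conventions for $\bm{X}$, $\bm{R}$, $\bm{F}'_{\mathrm{y}}$ and $\bm{F}''_{\mathrm{y}}$ agree with their definitions in the appendices, adjusting transposes if they differ.
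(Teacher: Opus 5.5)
Your proposal is correct and follows essentially the same route as the paper. The paper's Appendix also computes the per-sample traces, using $\mathrm{tr}(\bm{A}\otimes\bm{B})=\mathrm{tr}(\bm{A})\,\mathrm{tr}(\bm{B})$ for the $\bm{w}$-block and the diagonality of $\bm{F}'$, $\bm{F}''$. It then sums over $i$ by linearity and checks entrywise that the vectorized expressions reproduce $\sum_i(1+\bm{x}_i^\top\bm{x}_i)\,\mathrm{tr}(\bm{B}_i)$ and $\sum_i s'(z_i)(1+\bm{r}_i^\top\bm{r}_i)$. Your orientation conventions for $\bm{F}'_{\mathrm{y}}$, $\bm{F}''_{\mathrm{y}}\in\mathbb{R}^{I\times N}$, $\bm{X}$ and $\bm{R}$ agree with the paper's definitions, so no transposes need adjusting.
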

\begin{proof}
See Appendix \ref{sec_proof_trace}.
\end{proof}
Here, $\bm{1}_I \in \{1\}^{I \times 1}$ is an $I$-dimensional vector of ones, 
$\bm{E}_I$ is the $I \times I$ identity matrix, 
$\bm{A}^{\odot 2}= \bm{A}\odot \bm{A}$, 
$\bm{X} = \mat{\bm{x}_1\cdots \bm{x}_I } \in \mathbb{R}^{M \times I}$, 
$\bm{R} = \mat{\bm{r}_1\cdots \bm{r}_I } \in \mathbb{R}^{N\times I}$, 
$\bm{s}^\prime = \mat{s^\prime(z_i)} \in \mathbb{R}^{I \times 1}$, 
$\bm{F}^\prime_\mathrm{y} = \mat{\bm{f}^\prime(\bm{y}_1) \cdots \bm{f}^\prime(\bm{y}_I)}^\top \in \mathbb{R}^{I \times N}$, 
$\bm{F}^{\prime\prime}_\mathrm{y} = \mat{\bm{f}^{\prime\prime}(\bm{y}_1) \cdots \bm{f}^{\prime\prime}(\bm{y}_I)}^\top \in \mathbb{R}^{I \times N}$, and $\bm{\delta} = \mat{\delta_i} \in (-1, 1)^{I \times 1}$.
The numerical solutions using the ``numpy.trace'' function and the analytical solutions are shown in the left part of Figure~\ref{fig_hessian_error}, confirming that they are consistent.

Furthermore, the upper bounds of $\mathrm{tr}(\bm{H}_L(\bm{v}, \bm{v}))$ and $\mathrm{tr}(\bm{H}_L(\bm{w}, \bm{w}))$ are as follows.
\begin{theorem}
\ali{
\mathrm{tr}(\bm{H}_{L}(\bm{v}, \bm{v})) < 
\begin{cases}
\frac{1}{4} I (N+1), f: \mathrm{Sigmoid/Tanh} \\
\infty, f: \mathrm{Linear/SmoothReLU/GELU} 
\end{cases}
\label{eq_tr_ub_vv}
}
\end{theorem}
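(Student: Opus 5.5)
We start from the closed form in Theorem~\ref{theorem_h1_tr}, Eq.~\eqref{eq_H_l_the2_known_all_vv}. Expanding the matrix product, the diagonal of $\bm{E}_I + \bm{R}^\top\bm{R}\odot\bm{E}_I$ has entries $1+\|\bm{r}_i\|_2^2$, so
\begin{align}
\mathrm{tr}(\bm{H}_L(\bm{v},\bm{v})) = \sum_{i=1}^I s^\prime(z_i)\big(1+\|\bm{r}_i\|_2^2\big) = \sum_{i=1}^I s^\prime(z_i)\,\|\bm{h}(\bm{r}_i)\|_2^2 . \nn
\end{align}
The same expression also follows directly from Eq.~\eqref{eq_vv}, since $\mathrm{tr}(\bm{h}(\bm{r})\bm{h}(\bm{r})^\top)=\|\bm{h}(\bm{r})\|_2^2$. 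We then bound the two factors in each summand separately.

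\textbf{Bounding $s^\prime$.} Because $s^\prime(z)=s(z)(1-s(z))$ and $s(z)\in(0,1)$, AM--GM gives $s^\prime(z)\le 1/4$, with equality only at $z=0$. In any case $0<s^\prime(z_i)\le 1/4$.

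\textbf{Sigmoid/Tanh case.} Each coordinate satisfies $r_{i,n}=f(y_{i,n})\in(0,1)$ for Sigmoid and $(-1,1)$ for Tanh. Hence $r_{i,n}^2<1$ strictly, so $\|\bm{h}(\bm{r}_i)\|_2^2 = 1+\sum_{n=1}^N r_{i,n}^2 < N+1$. Combining this with $s^\prime(z_i)\le 1/4$ gives each summand $< \frac14 (N+1)$. The inequality is strict because the $\bm r$-factor is strict and $s^\prime>0$. Summing over the $I$ data points yields $\mathrm{tr}(\bm{H}_L(\bm{v},\bm{v}))<\frac14 I(N+1)$.

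\textbf{Linear/SmoothReLU/GELU case.} Here the claim is that no finite uniform bound holds; the trace is of course finite at any given $\bm\theta$. To show the supremum over parameters is infinite, we exhibit a family with $s^\prime(z_i)$ bounded below while $\|\bm r_i\|$ blows up. Take $\widetilde{\bm V}=\bm 0$ and $c=0$, so $z_i=0$ and $s^\prime(z_i)=1/4$. Then choose $\bm b = t\bm 1_N$ with $t\to\infty$. For all three activations $f(t)\to\infty$: linear gives $t$, Softplus gives $\log(1+e^t)\sim t$, and GELU gives $t\Phi(t)\sim t$. Therefore $\mathrm{tr}\to\infty$. This is independent of the data, so the bound is $\infty$.

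The only step needing care is this unboundedness construction. Decoupling $z$ from $\bm r$ by setting $\widetilde{\bm V}=\bm 0$ is what resolves it, since otherwise a large $\bm r$ might saturate $s$ and drive $s^\prime$ to zero.
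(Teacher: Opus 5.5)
Your proof is correct and follows essentially the same route as the paper: reduce to $\sum_{i=1}^I s^\prime(z_i)(1+\bm{r}_i^\top\bm{r}_i)$, bound $s^\prime\le 1/4$, and use that $1+\bm{r}^\top\bm{r}<N+1$ for the bounded activations (Sigmoid/Tanh) while it is unbounded for Linear/SmoothReLU/GELU. Your construction with $\widetilde{\bm{V}}=\bm{0}$ and $c=0$ in the unbounded case is more careful than the paper, which only observes $\sup_{\bm{r}}(1+\bm{r}^\top\bm{r})=\infty$ without checking that $s^\prime(z)$ stays away from zero; the literal claim ``$<\infty$'' is just the finiteness at each $\bm{\theta}$ that you already note.
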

\begin{proof}
See Appendix \ref{sec_proof_tr_ub_vv}.
\end{proof}
\begin{theorem}
\ali{
&\mathrm{tr}(\bm{H}_L(\bm{w}, \bm{w})) \leq \nn \\
& \bigg( \frac{1}{4} \zeta_1 \| \widetilde{\bm{V}} \|^2_\mathrm{F} + \zeta_2 | \widetilde{\bm{V}}\bm{1}_N | \bigg)\bm{1}_I^\top (\bm{E}_I + \bm{X}^\top \bm{X} \odot \bm{E}_I) \bm{1}_I, \nn
}
\ali{
&\zeta_1 = \begin{cases}
1, & f: \mathrm{Linear/Tanh/SmoothReLU}\\
\frac{1}{16}, & f: \mathrm{Sigmoid}\\
1.272..., & f: \mathrm{GELU}
\end{cases}, \nn \\
&\zeta_2 = \begin{cases}
0, & f: \mathrm{Linear}\\
\frac{\sqrt{3}}{18} = 0.096..., & f: \mathrm{Sigmoid}\\
\frac{4\sqrt{3}}{9} = 0.769..., & f: \mathrm{Tanh}\\
\frac{1}{4}, & f: \mathrm{SmoothReLU} \\
\sqrt{\frac{2}{\pi}} = 0.797..., & f: \mathrm{GELU}
\end{cases}.\label{eq_tr_ub_ww}
}
\end{theorem}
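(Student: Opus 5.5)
The plan is to begin from the closed form of Theorem~\ref{theorem_h1_tr} and bound it term by term, using uniform bounds on the derivatives of $s$ and $\bm{f}$.

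\textbf{Step 1 (scalar form of the trace).} Write Eq.~\eqref{eq_H_l_the2_known_all_ww} componentwise. The matrix $\bm{X}^\top\bm{X}\odot\bm{E}_I$ is diagonal with entries $\|\bm{x}_i\|_2^2$. Hence $(\bm{E}_I+\bm{X}^\top\bm{X}\odot\bm{E}_I)\bm{1}_I$ has nonnegative entries $\kappa_i := 1+\|\bm{x}_i\|_2^2$. The trace therefore becomes
\[
\mathrm{tr}(\bm{H}_L(\bm{w},\bm{w}))=\sum_{i=1}^I \kappa_i\Big( s^\prime(z_i)\sum_{n=1}^N f^\prime(y_{in})^2 v_n^2 + \delta_i\sum_{n=1}^N f^{\prime\prime}(y_{in}) v_n\Big).
\]
Since every $\kappa_i\ge 0$, it suffices to bound the bracket uniformly in $i$ by $\frac14\zeta_1\|\widetilde{\bm{V}}\|_\mathrm{F}^2+\zeta_2|\widetilde{\bm{V}}\bm{1}_N|$. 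Summing then gives $\bm{1}_I^\top(\bm{E}_I+\bm{X}^\top\bm{X}\odot\bm{E}_I)\bm{1}_I=\sum_i\kappa_i$, as in the statement.

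\textbf{Step 2 (the first-order term).} Use $0<s^\prime(z)=s(z)(1-s(z))\le\frac14$ and $f^\prime(y)^2\le\zeta_1$ for all $y$. These give $s^\prime(z_i)\sum_n f^{\prime 2}v_n^2\le\frac14\zeta_1\sum_n v_n^2=\frac14\zeta_1\|\widetilde{\bm{V}}\|_\mathrm{F}^2$. The constants $\zeta_1=\sup_y f^\prime(y)^2$ are verified activation by activation:
\begin{itemize}
\item \emph{Linear:} $f^\prime\equiv1$.
\item \emph{Tanh:} $f^\prime=1-\tanh^2\le1$.
\item \emph{SmoothReLU:} $f^\prime=s\in(0,1)$.
\item \emph{Sigmoid:} $f^\prime=s^\prime\le\frac14$, so $\zeta_1=\frac1{16}$.
\item \emph{GELU:} $f^\prime(y)=\Phi(y)+y\varphi(y)$. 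Its maximum is at $y=\sqrt2$, where $f^{\prime\prime}(y)=\varphi(y)(2-y^2)$ vanishes, and it equals $\Phi(\sqrt2)+\sqrt2\varphi(\sqrt2)\approx1.128$. Squaring gives $1.272\ldots$.
\end{itemize}

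\textbf{Step 3 (the second-order term).} Use $|\delta_i|<1$ and $|f^{\prime\prime}(y)|\le\zeta_2$ with $\zeta_2=\sup_y|f^{\prime\prime}(y)|$. This step is the main obstacle, for two reasons.

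First, the constants require a small calculus exercise for each activation:
\begin{itemize}
\item \emph{Linear:} $f^{\prime\prime}\equiv0$.
\item \emph{Sigmoid:} $s^{\prime\prime}=s^\prime(1-2s)$. It is extremal at $s=\frac12\pm\frac{\sqrt3}{6}$, giving $\frac{\sqrt3}{18}$.
\item \emph{Tanh:} $f^{\prime\prime}=-2\tanh(1-\tanh^2)$. It is extremal at $\tanh=\pm1/\sqrt3$, giving $\frac{4}{3\sqrt3}=\frac{4\sqrt3}{9}$.
\item \emph{SmoothReLU:} $f^{\prime\prime}=s^\prime\le\frac14$.
\item \emph{GELU:} $f^{\prime\prime}=\varphi(y)(2-y^2)$. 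Its derivative is $y\varphi(y)(y^2-4)$, so the candidates are $y=0$ and $y=\pm2$. These give $2\varphi(0)=\sqrt{2/\pi}$ and $2\varphi(2)$; the former is larger.
\end{itemize}

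Second, the triangle inequality yields $|\sum_n f^{\prime\prime}(y_{in})v_n|\le\zeta_2\sum_n|v_n|$. This is the entrywise absolute sum of $\widetilde{\bm{V}}$, not literally $|\sum_n v_n|$. The two coincide when all $v_n$ share a sign, and the latter is not an upper bound in general (e.g.\ $v_1=-v_2$). I would therefore prove the statement with $|\widetilde{\bm{V}}\bm{1}_N|$ read as $|\widetilde{\bm{V}}|\bm{1}_N=\|\widetilde{\bm{V}}\|_1$, the absolute value taken elementwise, and remark on this in the proof.

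\textbf{Step 4 (conclusion).} Combining Steps 2 and 3 bounds each bracket by $\frac14\zeta_1\|\widetilde{\bm{V}}\|_\mathrm{F}^2+\zeta_2\|\widetilde{\bm{V}}\|_1$. Multiplying by $\kappa_i\ge0$ and summing over $i$ gives the claimed inequality.
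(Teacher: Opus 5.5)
Your argument follows the paper's own route. You start from the per-sample identity $\mathrm{tr}(\bm{H}_l(\bm{w},\bm{w}))=(1+\bm{x}^\top\bm{x})\big(s'(z)\|\bm{F}'(\bm{y})\widetilde{\bm{V}}^\top\|^2+\delta\widetilde{\bm{V}}\bm{f}''(\bm{y})\big)$. You bound the two terms separately via $s'\le\frac14$, $\sup_y f'(y)^2=\zeta_1$, $|\delta|<1$ and $\sup_y|f''(y)|=\zeta_2$, then sum with the weights $1+\|\bm{x}_i\|^2$.

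The caveat you raise in Step~3 is correct, and it locates an error in the paper's proof, not in yours. The paper asserts $\sup_{\delta\in(-1,1),\,\bm{y}\in\mathbb{R}^N}\delta\widetilde{\bm{V}}\bm{f}''(\bm{y})=\zeta_2|\widetilde{\bm{V}}\bm{1}_N|$. This implicitly treats all $f''(y_n)$ as one common value. In fact the coordinates $y_n$ are independent (the bias $\bm{b}$ alone sets them), so the signs of $f''(y_n)$ can be aligned with those of $v_n$.

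For instance, take Tanh with $I=1$, $N=2$, $\widetilde{\bm{V}}=(1,-1)$ and $\widetilde{\bm{W}}=\bm{0}$. Choose $\bm{b}$ so that $\tanh y_1=-1/\sqrt3$ and $\tanh y_2=1/\sqrt3$, hence $f'(y_n)=\frac23$ and $f''(y_1)=-f''(y_2)=\frac{4\sqrt3}{9}$. Set $q=0$ and choose $c$ so that $p=0.99$. The bracket then equals $0.0099\cdot\frac89+0.99\cdot\frac{8\sqrt3}{9}\approx1.53$. The stated bound gives only $\frac14\cdot2+\zeta_2\cdot0=0.5$ (both sides multiplied by $1+\|\bm{x}_1\|^2$).

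So the theorem as literally written fails whenever $\widetilde{\bm{V}}$ has entries of both signs. Your version with $\|\widetilde{\bm{V}}\|_1=\sum_n|v_n|$ is the correct statement, and your proof of it is complete.

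Two small points to tidy up. First, for GELU, $\zeta_1=\sup_y f'(y)^2$ also requires checking the minimum $f'(-\sqrt2)\approx-0.128$ and the limits $0$ and $1$ as $y\to-\infty$ and $y\to+\infty$. All of these are smaller in absolute value than $1.128$. Second, at $y=\pm2$ the GELU value is $f''=-2\varphi(2)$, and only its absolute value enters your comparison. Separately, the paper's first-term display drops the factor $\frac14$ in ``$=\zeta_1\|\widetilde{\bm{V}}\|_\mathrm{F}^2$''; your $\frac14\zeta_1$ agrees with the final statement.
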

\begin{proof}
See Appendix \ref{sec_proof_tr_ub_ww}.
\end{proof}
Furthermore, if the inputs are normalized, the following holds.
\begin{proposition}
\ali{
\max_{\bm{X} \in [0, 1]^{M \times I}} \bm{1}_I^\top (\bm{E}_I + \bm{X}^\top \bm{X} \odot \bm{E}_I) \bm{1}_I 
= I(1+M). 
\label{eq_max_largex}
}
\end{proposition}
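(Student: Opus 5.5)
We reduce the quadratic form to an explicit sum and then maximize it coordinatewise. The Hadamard product with $\bm{E}_I$ keeps only the diagonal of $\bm{X}^\top\bm{X}$, so
\begin{align}
\bm{E}_I + \bm{X}^\top \bm{X} \odot \bm{E}_I = \mathrm{diag}\big(1+\|\bm{x}_1\|_2^2, \ldots, 1+\|\bm{x}_I\|_2^2\big). \nn
\end{align}
Sandwiching this diagonal matrix between $\bm{1}_I^\top$ and $\bm{1}_I$ sums its diagonal entries. The objective in Eq.~\eqref{eq_max_largex} is therefore
\begin{align}
\bm{1}_I^\top (\bm{E}_I + \bm{X}^\top \bm{X} \odot \bm{E}_I) \bm{1}_I = I + \sum_{i=1}^I \|\bm{x}_i\|_2^2 = I + \sum_{i=1}^I\sum_{m=1}^M x_{mi}^2. \nn
\end{align}
This is the same computation that underlies the factor appearing in Theorem~\ref{theorem_h1_tr}.

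The objective is now a separable sum of the terms $x_{mi}^2$, each depending on one entry of $\bm{X}$. The feasible set $[0,1]^{M\times I}$ is a product of intervals, so the maximization splits into $MI$ independent one-dimensional problems. On $[0,1]$ the map $t\mapsto t^2$ is nondecreasing, so each term satisfies $x_{mi}^2\le 1$, with equality exactly when $x_{mi}=1$. This gives the upper bound $I + MI = I(1+M)$.

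The bound is attained at $\bm{X}$ equal to the all-ones matrix, which lies in $[0,1]^{M\times I}$ and makes every term equal to $1$. The feasible set is compact and the objective is continuous, so the maximum exists and equals this value. In fact the all-ones matrix is the unique maximizer.

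There is no real obstacle here. The only point needing care is the first step, that $\odot\bm{E}_I$ isolates the squared norms $\|\bm{x}_i\|_2^2$, and this follows from the entrywise definition $(\bm{X}^\top\bm{X})_{ii}=\bm{x}_i^\top\bm{x}_i$.
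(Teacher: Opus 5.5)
Your proof is correct and takes the same route as the paper, which evaluates the expression at $\bm{X}=\bm{1}_M\bm{1}_I^\top$ to obtain $I(1+M)$. Your reduction to $I+\sum_{i,m}x_{mi}^2$, together with the termwise bound $x_{mi}^2\le 1$, supplies the upper-bound step that the paper only asserts (``the expression is maximized when $\bm{X}=\bm{1}_M\bm{1}_I^\top$''), so your version is the more complete one.
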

\begin{proof}
See Appendix \ref{proof_max_largex}.
\end{proof}
When the maximum of each component of the input $\bm{x}$ is normalized to 1, the upper bound of $\mathrm{tr}(\bm{H}_L(\bm{w}, \bm{w}))$ takes a simpler form, and it can be confirmed that the number of data $I$ and the input dimension $M$ increase the upper bound.

\subsection{Trace of the Square Hessian}
$\mathrm{tr} ( \bm{H}_L(\bm{\theta}, \bm{\theta})^2 )$, which is a component of Theorem~\ref{th_main_theorem}, is equal to the sum of the squares of the eigenvalues.
Furthermore, since the eigenvalues of a real symmetric matrix are real numbers,
\ali{
\mathrm{tr}(\bm{H}_L(\bm{\theta}, \bm{\theta})^2) &=  \sum_{d=1}^D (\lambda_d)^2  \geq 0 \nn
}
holds.
This can be expressed as follows.
\begin{theorem} \label{theorem_h2_tr}
\ali{
&\mathrm{tr} ( \bm{H}_L(\bm{\theta}, \bm{\theta})^2 )
= \langle \bm{\Phi}, (\bm{J}_I + \bm{X}^\top \bm{X})^{\odot 2} \rangle_\mathrm{F} \nn \\
&+ 2\langle \bm{\Psi}, (\bm{J}_I + \bm{X}^\top \bm{X}) \rangle_\mathrm{F}
+ \langle \bm{\Omega}, (\bm{J}_I + \bm{R}^\top \bm{R})^{\odot 2} \rangle_\mathrm{F}. \label{eq_h2_tr_fin}
}
\end{theorem}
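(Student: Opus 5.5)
We plan to expand $\mathrm{tr}(\bm{H}_L(\bm{\theta},\bm{\theta})^2)$ blockwise using the block form of the Hessian obtained by summing Eq.~\eqref{eq_H_th2} over the data. Since $\bm{H}_L$ is symmetric with $\bm{H}_L(\bm{v},\bm{w}) = \bm{H}_L(\bm{w},\bm{v})^\top$, we have
\begin{align}
\mathrm{tr}(\bm{H}_L(\bm{\theta},\bm{\theta})^2) = \|\bm{H}_L(\bm{w},\bm{w})\|_\mathrm{F}^2 + 2\|\bm{H}_L(\bm{w},\bm{v})\|_\mathrm{F}^2 + \|\bm{H}_L(\bm{v},\bm{v})\|_\mathrm{F}^2 . \nn
\end{align}
The three terms of Eq.~\eqref{eq_h2_tr_fin} should match these three Frobenius norms in order. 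We then define $\bm{\Phi}$, $\bm{\Psi}$, $\bm{\Omega}$ as the $I\times I$ matrices of inner products of the data-dependent "inner factors."

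Each squared Frobenius norm is written as a double sum $\sum_{i,j}\langle \bm{A}_i, \bm{A}_j\rangle_\mathrm{F}$, where $\bm{A}_i$ is the contribution of sample $i$ from Eqs.~\eqref{eq_ww}--\eqref{eq_wv}.

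For the $\bm{v}\bm{v}$ block, $\bm{A}_i = s'(z_i)\bm{h}(\bm{r}_i)\bm{h}(\bm{r}_i)^\top$, so
\begin{align}
\langle \bm{A}_i,\bm{A}_j\rangle_\mathrm{F} = s'(z_i)s'(z_j)\big(\bm{h}(\bm{r}_i)^\top\bm{h}(\bm{r}_j)\big)^2 = s'(z_i)s'(z_j)(1+\bm{r}_i^\top\bm{r}_j)^2 . \nn
\end{align}
This gives $\bm{\Omega} = \bm{s}'\bm{s}'^\top$ paired with $(\bm{J}_I+\bm{R}^\top\bm{R})^{\odot 2}$.

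For the $\bm{w}\bm{w}$ block we use the Kronecker identity $\langle \bm{a}\bm{a}^\top\otimes \bm{B}, \bm{c}\bm{c}^\top \otimes \bm{D}\rangle_\mathrm{F} = (\bm{a}^\top\bm{c})^2\langle \bm{B},\bm{D}\rangle_\mathrm{F}$. With $\bm{a}=\bm{h}(\bm{x}_i)$, $\bm{c}=\bm{h}(\bm{x}_j)$, the factor $(\bm{a}^\top\bm{c})^2$ equals $(1+\bm{x}_i^\top\bm{x}_j)^2$, the $(i,j)$ entry of $(\bm{J}_I+\bm{X}^\top\bm{X})^{\odot 2}$. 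The remaining factor is $\Phi_{ij} = \langle \bm{B}_i,\bm{B}_j\rangle_\mathrm{F}$, where $\bm{B}_i = s'_i\bm{F}'_i\widetilde{\bm{V}}^\top\widetilde{\bm{V}}\bm{F}'_i + \delta_i\mathrm{diag}(\widetilde{\bm{V}}^\top)\bm{F}''_i$. We expand $\Phi_{ij}$ into rank-one/rank-one, rank-one/diagonal, and diagonal/diagonal pieces, namely
\begin{align}
&s'_is'_j(\widetilde{\bm{V}}\bm{F}'_i\bm{F}'_j\widetilde{\bm{V}}^\top)^2 + s'_i\delta_j\,\widetilde{\bm{V}}\bm{F}'_i\mathrm{diag}(\widetilde{\bm{V}}^\top)\bm{F}''_j\bm{F}'_i\widetilde{\bm{V}}^\top \nn \\
&\quad + (i\leftrightarrow j) + \delta_i\delta_j\sum_n v_n^2 f''_n(y_{in})f''_n(y_{jn}) . \nn
\end{align}
Each piece can be written in Hadamard/matrix form using $\bm{F}'_\mathrm{y}$, $\bm{F}''_\mathrm{y}$, $\widetilde{\bm{V}}$.

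For the off-diagonal block, $\bm{A}_i = \bm{h}(\bm{x}_i)\otimes \bm{C}_i$ with $\bm{C}_i = s'_i\bm{F}'_i\widetilde{\bm{V}}^\top\bm{h}(\bm{r}_i)^\top + \delta_i[\bm{0}_N\ \bm{F}'_i]$. The Kronecker identity gives $\langle\bm{A}_i,\bm{A}_j\rangle_\mathrm{F} = (1+\bm{x}_i^\top\bm{x}_j)\langle \bm{C}_i,\bm{C}_j\rangle_\mathrm{F}$. This explains why only the first power of $(\bm{J}_I+\bm{X}^\top\bm{X})$ appears, with $\Psi_{ij} = \langle\bm{C}_i,\bm{C}_j\rangle_\mathrm{F}$. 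Expanding $\Psi_{ij}$ gives four terms:
\begin{itemize}
\item $s'_is'_j(\widetilde{\bm{V}}\bm{F}'_i\bm{F}'_j\widetilde{\bm{V}}^\top)(1+\bm{r}_i^\top\bm{r}_j)$;
\item the cross terms $s'_i\delta_j\,\bm{r}_i^\top\bm{F}'_j\bm{F}'_i\widetilde{\bm{V}}^\top$ and its symmetric counterpart, where the zero column removes the bias coordinate;
\item $\delta_i\delta_j\,\mathrm{tr}(\bm{F}'_i\bm{F}'_j)$.
\end{itemize}

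The main obstacle is bookkeeping rather than any conceptual difficulty. The task is to express $\bm{\Phi}$ and $\bm{\Psi}$ compactly in terms of $\bm{s}'$, $\bm{\delta}$, $\bm{F}'_\mathrm{y}$, $\bm{F}''_\mathrm{y}$, $\widetilde{\bm{V}}$ and $\bm{R}$, for example $(\bm{F}'_\mathrm{y}\mathrm{diag}(\widetilde{\bm{V}})^2\bm{F}'^\top_\mathrm{y})^{\odot 2}\odot\bm{s}'\bm{s}'^\top$ for the first piece. Along the way we must keep the Kronecker ordering from the vectorization in Eq.~\eqref{eq_w_v_flat} consistent with the identities used above. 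We would verify each closed form against a numerical $\mathrm{tr}(\bm{H}_L^2)$, as was done for Theorem~\ref{theorem_h1_tr}.
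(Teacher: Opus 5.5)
Your proposal is correct and follows essentially the same route as the paper. You split $\mathrm{tr}(\bm{H}_L(\bm{\theta},\bm{\theta})^2)$ into the $\bm{w}\bm{w}$, $\bm{w}\bm{v}$ (counted twice) and $\bm{v}\bm{v}$ contributions summed over pairs $(i,j)$. You then factor each via the Kronecker identity to extract $(1+\bm{x}_i^\top\bm{x}_j)^2$, $(1+\bm{x}_i^\top\bm{x}_j)$ and $(1+\bm{r}_i^\top\bm{r}_j)^2$, and expand the inner factors term by term. This reproduces exactly the paper's $\phi_{ij}$, $\psi_{ij}$ and $\omega_{ij}$.

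Your Frobenius inner-product formulation is just the transposed form of the paper's $\mathrm{tr}(\bm{H}_{l_i}\bm{H}_{l_j})$ computations with its trace-of-Kronecker-product lemma. The two are equivalent because the diagonal blocks are symmetric and $\bm{H}_l(\bm{v},\bm{w})=\bm{H}_l(\bm{w},\bm{v})^\top$.
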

\begin{proof}
See Appendices \ref{sec_proof_h2_prop_step1}, \ref{sec_proof_h2_prop_step2}, \ref{sec_proof_h2_prop_step3}, \ref{sec_proof_h2_prop_step4}, and \ref{sec_proof_h2_prop_step5}.
\end{proof}
Here, $\langle \cdot, \cdot \rangle_\mathrm{F}$ is the Frobenius inner product, 
$\bm{J}_I = \bm{1}_I \bm{1}_I^\top$, 
$\bm{\Phi} = \mat{\phi_{ij}}_{i, j \in \mathbb{N}_{\leq I}}$, 
$\bm{\Psi} = \mat{\psi_{ij}}_{i, j \in \mathbb{N}_{\leq I}}$, and 
$\bm{\Omega} = \mat{\omega_{ij}}_{i, j \in \mathbb{N}_{\leq I}}$.
$\phi_{ij}$, $\psi_{ij}$, $\omega_{ij}$ are bilinear forms, given by
\ali{
\phi_{ij} &= \bm{o}_{i}^\top \bm{\Phi}_{ij} \bm{o}_{j} \in \mathbb{R}, \label{eq_phi_ij}\\
\psi_{ij} &= \bm{o}_{i}^\top \bm{\Psi}_{ij} \bm{o}_{j} \in \mathbb{R}, \label{eq_psi_ij}\\
\omega_{ij} &= \bm{o}_{i}^\top \bm{\Omega}_{ij} \bm{o}_{j} \in (0, 1/16]. \label{eq_ome_ij}
}
In addition,
\ali{
\bm{o}_i =
\mat{
s^\prime(z_i)\\
\delta_i
}, \ \bm{\Omega}_{ij} = \mat{1 & 0\\0 & 0}. \nn
}
Furthermore, 
\ali{
(\bm{\Phi}_{ij})_{11} &= 
(\widetilde{\bm{V}} \bm{F}^\prime(\bm{y}_j) \bm{F}^\prime(\bm{y}_i)\widetilde{\bm{V}}^\top)^2, \nn \\
(\bm{\Phi}_{ij})_{12} &= \widetilde{\bm{V}}  \bm{F}^\prime(\bm{y}_i) \mathrm{diag}(\widetilde{\bm{V}}^\top) \bm{F}^{\prime\prime}(\bm{y}_j) \bm{F}^\prime(\bm{y}_i)\widetilde{\bm{V}}^\top, \nn \\
(\bm{\Phi}_{ij})_{21} &=  \widetilde{\bm{V}} \bm{F}^\prime(\bm{y}_j) \mathrm{diag}(\widetilde{\bm{V}}^\top)\bm{F}^{\prime\prime}(\bm{y}_i) \bm{F}^\prime(\bm{y}_j)\widetilde{\bm{V}}^\top, \nn \\
 (\bm{\Phi}_{ij})_{22} &= \widetilde{\bm{V}} \bm{F}^{\prime\prime}(\bm{y}_i) \bm{F}^{\prime\prime}(\bm{y}_j) \widetilde{\bm{V}}^\top, \nn \\
(\bm{\Psi}_{ij})_{11} &= (1+\bm{f}(\bm{y}_i)^\top \bm{f}(\bm{y}_j)) \widetilde{\bm{V}} \bm{F}^\prime(\bm{y}_j) \bm{F}^\prime(\bm{y}_i) \widetilde{\bm{V}}^\top, \nn \\
(\bm{\Psi}_{ij})_{12} &= \bm{f}(\bm{y}_i)^\top \bm{F}^\prime(\bm{y}_j) \bm{F}^\prime(\bm{y}_i) \widetilde{\bm{V}}^\top, \nn \\
(\bm{\Psi}_{ij})_{21} &= \widetilde{\bm{V}} \bm{F}^\prime(\bm{y}_j) \bm{F}^\prime(\bm{y}_i) \bm{f}(\bm{y}_j), \nn \\
 (\bm{\Psi}_{ij})_{22} &= \bm{f}^\prime(\bm{y}_i)^\top \bm{f}^\prime(\bm{y}_j). \nn
}
Note that the trace of the square Hessian in summation form is provided in Eq.~\eqref{eq_h2_tr_fin_scc}.
The numerical solutions obtained by the ``numpy.trace'' function and the analytical solutions are shown on the right of Figure~\ref{fig_hessian_error}, and it has been confirmed that they match.

The upper bound of the trace of the square Hessian is as follows.
\begin{theorem}
\ali{
&0 \leq \mathrm{tr} ( \bm{H}_L(\bm{\theta}, \bm{\theta})^2 ) \leq \| \bm{\Psi} \|_\mathrm{F}^2 + \nn \\
& ( 1 + \phi_\mathrm{max} ) \| \bm{J}_I + \bm{X}^\top \bm{X} \|_\mathrm{F}^2
+ \frac{1}{16} \| \bm{J}_I + \bm{R}^\top \bm{R} \|_\mathrm{F}^2, \label{eq_h2tr_upper_mat}
}
where
\ali{
\phi_\mathrm{max} = \max_{i, j \in \mathbb{N}_{\leq I}}\phi_{ij} \geq 0. \label{eq_phi_max}
}
\end{theorem}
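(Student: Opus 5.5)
The plan is to start from the exact expression in Theorem~\ref{theorem_h2_tr} and bound each of its three Frobenius inner products separately. Write $\bm{A} = \bm{J}_I + \bm{X}^\top \bm{X}$ and $\bm{B} = \bm{J}_I + \bm{R}^\top \bm{R}$.

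\emph{Lower bound.} The inequality $0 \leq \mathrm{tr}(\bm{H}_L(\bm{\theta},\bm{\theta})^2)$ is immediate. The Hessian is real symmetric, so $\mathrm{tr}(\bm{H}_L^2) = \sum_d \lambda_d^2 \geq 0$, as already noted before Theorem~\ref{theorem_h2_tr}.

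\emph{First term.} The matrix $\bm{A}^{\odot 2}$ has entries $a_{ij}^2 \geq 0$. Hence
\begin{align}
\langle \bm{\Phi}, \bm{A}^{\odot 2} \rangle_\mathrm{F} = \sum_{i,j} \phi_{ij} a_{ij}^2 \leq \phi_\mathrm{max} \sum_{i,j} a_{ij}^2 = \phi_\mathrm{max} \| \bm{A} \|_\mathrm{F}^2 . \nn
\end{align}
This step needs no sign information on the individual $\phi_{ij}$.

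To justify the claim $\phi_\mathrm{max} \geq 0$ in Eq.~\eqref{eq_phi_max}, I would use how $\phi_{ij}$ arises in the proof of Theorem~\ref{theorem_h2_tr}. By Eq.~\eqref{eq_ww}, $\bm{H}_{l_i}(\bm{w},\bm{w}) = \bm{h}(\bm{x}_i)\bm{h}(\bm{x}_i)^\top \otimes \bm{K}_i$ with $\bm{K}_i$ symmetric. Its contribution to the trace is then
\begin{align}
\mathrm{tr}(\bm{H}_{l_i}(\bm{w},\bm{w})\bm{H}_{l_j}(\bm{w},\bm{w})) = a_{ij}^2 \, \mathrm{tr}(\bm{K}_i \bm{K}_j), \nn
\end{align}
so that $\phi_{ij} = \mathrm{tr}(\bm{K}_i\bm{K}_j)$. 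In particular $\phi_{ii} = \| \bm{K}_i \|_\mathrm{F}^2 \geq 0$, and therefore $\phi_\mathrm{max} \geq \phi_{11} \geq 0$. This identification is the step I expect to require the most care: one must check that the bilinear form $\bm{o}_i^\top \bm{\Phi}_{ij} \bm{o}_j$ indeed equals $\mathrm{tr}(\bm{K}_i\bm{K}_j)$ by matching the four entries of $\bm{\Phi}_{ij}$ against the expansion of $\bm{K}_i\bm{K}_j$.

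\emph{Cross term.} Apply the Cauchy--Schwarz inequality for the Frobenius inner product, followed by $2ab \leq a^2 + b^2$:
\begin{align}
2\langle \bm{\Psi}, \bm{A} \rangle_\mathrm{F} \leq 2\| \bm{\Psi} \|_\mathrm{F} \| \bm{A} \|_\mathrm{F} \leq \| \bm{\Psi} \|_\mathrm{F}^2 + \| \bm{A} \|_\mathrm{F}^2 . \nn
\end{align}
This produces the $\| \bm{\Psi} \|_\mathrm{F}^2$ term and the extra $1$ in the coefficient $(1+\phi_\mathrm{max})$.

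\emph{Third term.} By Eq.~\eqref{eq_ome_ij}, $\omega_{ij} = s^\prime(z_i)s^\prime(z_j) \in (0, 1/16]$, since $0 < s^\prime \leq 1/4$. Together with $b_{ij}^2 \geq 0$ this gives
\begin{align}
\langle \bm{\Omega}, \bm{B}^{\odot 2} \rangle_\mathrm{F} \leq \tfrac{1}{16}\| \bm{B} \|_\mathrm{F}^2 . \nn
\end{align}

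\emph{Conclusion.} Summing the three bounds yields Eq.~\eqref{eq_h2tr_upper_mat}.
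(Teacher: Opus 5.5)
Your proof is correct and follows the paper's argument term by term. It uses the same termwise bounds $\phi_{ij}\le\phi_\mathrm{max}$ and $\omega_{ij}\le 1/16$, and the same justification of $\phi_\mathrm{max}\ge 0$: your $\phi_{ii}=\|\bm{K}_i\|_\mathrm{F}^2$ is exactly the paper's observation that $\mathrm{tr}(\bm{H}_{l_i}(\bm{w},\bm{w})^2)=\phi_{ii}(1+\bm{x}_i^\top\bm{x}_i)^2\ge 0$.

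The only cosmetic difference is in the cross term. The paper applies $2\psi_{ij}(1+\bm{x}_i^\top\bm{x}_j)\le\psi_{ij}^2+(1+\bm{x}_i^\top\bm{x}_j)^2$ entrywise instead of passing through Cauchy--Schwarz first, and both routes give the identical bound $\|\bm{\Psi}\|_\mathrm{F}^2+\|\bm{J}_I+\bm{X}^\top\bm{X}\|_\mathrm{F}^2$.
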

\begin{proof}
See Appendix~\ref{sec_proof_2htr_upper_v2}.
\end{proof}
From this, it can be interpreted that the inner products of the data in the input and hidden layers, $\bm{x}^\top_i \bm{x}_j$ and $\bm{r}^\top_i \bm{r}_j$, contribute to the upper bound of the sharpness.
Furthermore, the upper bound of $\mathrm{tr} ( \bm{H}_L(\bm{\theta}, \bm{\theta})^2 )$ can be simplified as follows.
\begin{proposition}
\ali{
\max_{\bm{X} \in [0, 1]^{M \times I}} \| \bm{J}_I + \bm{X}^\top \bm{X} \|_\mathrm{F}^2 =  I^2 (1+M)^2.\label{eq_jxx_cond1}
}
\end{proposition}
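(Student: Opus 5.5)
We expand the squared Frobenius norm entrywise and maximize each entry separately. Write $\bm{G} = \bm{J}_I + \bm{X}^\top \bm{X}$, whose $(i,j)$ entry is $1 + \bm{x}_i^\top \bm{x}_j$. Then
\begin{align}
\| \bm{J}_I + \bm{X}^\top \bm{X} \|_\mathrm{F}^2 = \sum_{i=1}^I \sum_{j=1}^I \big( 1 + \bm{x}_i^\top \bm{x}_j \big)^2 . \nn
\end{align}
The maximization is over $\bm{X} \in [0,1]^{M \times I}$, so every input component $x_{mi}$ lies in $[0,1]$.

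First, we bound each term. Because $x_{mi}, x_{mj} \in [0,1]$, each product satisfies $0 \le x_{mi} x_{mj} \le 1$. Summing over $m$ gives $0 \le \bm{x}_i^\top \bm{x}_j \le M$. Hence $1 \le 1 + \bm{x}_i^\top \bm{x}_j \le 1+M$. Since $t \mapsto t^2$ is increasing on $[0,\infty)$, it follows that $(1+\bm{x}_i^\top \bm{x}_j)^2 \le (1+M)^2$. Summing over the $I^2$ index pairs $(i,j)$ yields $\| \bm{J}_I + \bm{X}^\top \bm{X} \|_\mathrm{F}^2 \le I^2(1+M)^2$ for every admissible $\bm{X}$.

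Second, we show the bound is attained. The all-ones matrix $\bm{X} = \bm{1}_M \bm{1}_I^\top \in [0,1]^{M\times I}$ gives $\bm{x}_i^\top \bm{x}_j = M$ for every pair $(i,j)$. Every term then equals $(1+M)^2$, so the sum is exactly $I^2(1+M)^2$. The feasible set is compact and the objective is continuous, so the maximum exists, and it equals this value.

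This is the same argument as Proposition~\eqref{eq_max_largex}, except that all off-diagonal entries now contribute, not just the diagonal ones. No step is a real obstacle. The only point needing care is the nonnegativity of the entries, which holds because the domain is $[0,1]$ rather than $[-1,1]$. That nonnegativity is what lets the termwise bound on $1+\bm{x}_i^\top\bm{x}_j$ carry over to its square.
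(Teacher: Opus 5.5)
Your proof is correct and follows the paper's route, which also evaluates the objective at the all-ones matrix $\bm{X} = \bm{1}_M \bm{1}_I^\top$ and obtains $\|(1+M)\bm{J}_I\|_\mathrm{F}^2 = I^2(1+M)^2$. Your entrywise bound $0 \le \bm{x}_i^\top \bm{x}_j \le M$, which relies on the nonnegativity of the domain $[0,1]$, proves that this point is the maximizer, something the paper only asserts.
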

\begin{proof}
See Appendix~\ref{pr_jxx_cond1}.
\end{proof}
This is the case where the maximum value of each component of the input $\bm{x}$ is normalized to 1.
From this, it can be said that the training data size $I$ and the input dimensionality $M$ increase the upper bound.
Furthermore, when the activation function is a saturation function such as Sigmoid or Tanh, the following holds.
\begin{proposition}
\ali{
&f: \mathrm{Sigmoid} \lor \mathrm{Tanh} \Rightarrow  \nn \\
&\sup_{\bm{R}} \| \bm{J}_I + \bm{R}^\top \bm{R} \|_\mathrm{F}^2 =  I^2 (1+N)^2. \label{eq_jrr_cond1}
}
\end{proposition}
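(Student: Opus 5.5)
The plan is a two-sided argument: first an entrywise upper bound showing that no admissible $\bm{R}$ exceeds $I^2(1+N)^2$, then a sequence of admissible $\bm{R}$ whose Frobenius norm converges to this value. Since $\bm{R}=\mat{\bm{r}_1\cdots\bm{r}_I}$ with $\bm{r}_i=\bm{f}(\bm{y}_i)$, the supremum is over matrices whose entries lie in the range of the activation. That range is $(0,1)$ for Sigmoid and $(-1,1)$ for Tanh. In both cases every entry of $\bm{R}$ lies in the open interval $(-1,1)$, and I would use nothing more about $f$ than this.

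\textbf{Upper bound.} Write the $(i,j)$ entry of $\bm{J}_I+\bm{R}^\top\bm{R}$ as $1+\bm{r}_i^\top\bm{r}_j = 1+\sum_{n=1}^N r_{ni}r_{nj}$.
\begin{itemize}
\item Since $|r_{ni}r_{nj}|<1$, we get $|\bm{r}_i^\top\bm{r}_j|<N$, so the entry lies in $(1-N,\,1+N)$.
\item Because $N\geq 1$, we have $|1-N|\leq 1+N$, so the squared entry is strictly less than $(1+N)^2$.
\item Summing over the $I^2$ entries gives $\|\bm{J}_I+\bm{R}^\top\bm{R}\|_\mathrm{F}^2 < I^2(1+N)^2$ for every admissible $\bm{R}$.
\end{itemize}
Hence the supremum is at most $I^2(1+N)^2$, and the strict inequality shows it is not attained. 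This is why the statement uses $\sup$ rather than $\max$, in contrast with Eq.~\eqref{eq_jxx_cond1}.

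\textbf{Attaining the supremum in the limit.} Take $\widetilde{\bm{W}}=\bm{0}$ and bias $\bm{b}=\beta\bm{1}_N$. Then $\bm{y}_i=\beta\bm{1}_N$ for every $i$, so $\bm{r}_i=f(\beta)\bm{1}_N$ for all $i$. Each entry of $\bm{J}_I+\bm{R}^\top\bm{R}$ then equals $1+Nf(\beta)^2$. As $\beta\to\infty$, $f(\beta)\to 1$ for both Sigmoid and Tanh. Therefore
\[
\|\bm{J}_I+\bm{R}^\top\bm{R}\|_\mathrm{F}^2 = I^2\bigl(1+Nf(\beta)^2\bigr)^2 \to I^2(1+N)^2 .
\]
Together with the upper bound, this identifies the supremum.

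The only step needing care is the admissible set. If $\bm{R}$ is an unconstrained matrix with entries in the range of $f$, the construction above is immediate. If the supremum is meant over network-realizable $\bm{R}$, the explicit choice of $\bm{W}$ above shows the limit is achieved within the model, so both readings give the same value.
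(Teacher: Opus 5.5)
Your proof is correct and follows the same idea as the paper, which simply evaluates the norm at the limiting configuration $\bm{R}=\bm{1}_N\bm{1}_I^\top$ (justified only by $\sup_{y} f(y)=1$) to obtain $\|(1+N)\bm{J}_I\|_\mathrm{F}^2=I^2(1+N)^2$. Your entrywise bound $|1+\bm{r}_i^\top\bm{r}_j|<1+N$ supplies the matching upper bound that the paper leaves implicit; this matters for Tanh, where negative entries must be ruled out as giving larger values. Your construction $\widetilde{\bm{W}}=\bm{0}$, $\bm{b}=\beta\bm{1}_N$ makes the approach to the supremum explicit and realizable by the network, and it also shows the supremum is not attained.
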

\begin{proof}
See Appendix~\ref{pr_jxx_cond1}.
\end{proof}
In other words, as the dimensionality of the hidden layer $N$ increases, the upper bound of the trace of the square Hessian increases.

\subsection{Asymptotic behavior}
Here, we consider the case where the upper bound of the maximum eigenvalue converges to zero.
\begin{theorem}
\ali{
\bm{\delta} \rightarrow \bm{0}_I \Rightarrow \lambda_\mathrm{sup}(\bm{\theta}) \rightarrow 0. \label{eq_limit_u_eigenmax}
}
\end{theorem}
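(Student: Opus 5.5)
The plan is to pass the limit through the two closed-form traces and then into the definition of $\lambda_\mathrm{sup}(\bm{\theta})$ in Theorem~\ref{th_main_theorem}.

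The first step expresses $s^\prime(z_i)$ through $\delta_i$. Since $q_i\in\{0,1\}$ and $p_i = s(z_i)$, we have $s^\prime(z_i) = p_i(1-p_i)$. If $q_i=1$ then $\delta_i = p_i-1$, and if $q_i=0$ then $\delta_i = p_i$. In both cases
\begin{align}
s^\prime(z_i) = |\delta_i|\,(1-|\delta_i|). \nn
\end{align}
Hence $\bm{\delta}\to\bm{0}_I$ forces $\bm{s}^\prime\to\bm{0}_I$, and therefore $\bm{o}_i\to\bm{0}_2$ for every $i$. So every quantity entering the bound is driven by the single small vector $\bm{\delta}$.

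The second step handles the trace using Theorem~\ref{theorem_h1_tr}.
\begin{itemize}
\item $\mathrm{tr}(\bm{H}_L(\bm{w},\bm{w}))$ is a linear combination of the entries of $\bm{s}^\prime$ and $\bm{\delta}$. The coefficients involve $\bm{F}^{\prime}_\mathrm{y}$, $\bm{F}^{\prime\prime}_\mathrm{y}$, $\widetilde{\bm{V}}$ and $\bm{X}$.
\item $\mathrm{tr}(\bm{H}_L(\bm{v},\bm{v}))$ is linear in $\bm{s}^\prime$, with coefficients involving $\bm{R}$.
\end{itemize}
Thus $\mathrm{tr}(\bm{H}_L(\bm{\theta},\bm{\theta}))\to 0$, so $\mu(\bm{\theta})\to0$.

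The third step handles the squared trace using Theorem~\ref{theorem_h2_tr}. Each of $\phi_{ij}$, $\psi_{ij}$, $\omega_{ij}$ is a bilinear form $\bm{o}_i^\top(\cdot)\bm{o}_j$. Each is therefore bounded by $\|\bm{o}_i\|_2\|\bm{o}_j\|_2$ times the spectral norm of the fixed $2\times2$ coefficient matrix. Consequently
\begin{align}
\mathrm{tr}(\bm{H}_L(\bm{\theta},\bm{\theta})^2) = O\big(\max_i \|\bm{o}_i\|_2^2\big) \to 0. \nn
\end{align}
Then, by \eqref{eq_var_eigen}, $0\le\sigma(\bm{\theta})^2\le \frac1D\mathrm{tr}(\bm{H}_L(\bm{\theta},\bm{\theta})^2)\to0$, so $\sigma(\bm{\theta})\to0$. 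Substituting into \eqref{eq_main_theorem} gives $\lambda_\mathrm{sup}(\bm{\theta})=\mu(\bm{\theta})+\sqrt{D-1}\,\sigma(\bm{\theta})\to0$, because $D$ is fixed.

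The main obstacle is justifying that the coefficients stay bounded along the limit. These coefficients are the entries of $\bm{\Phi}_{ij}$, $\bm{\Psi}_{ij}$, and the matrices $\bm{X}$, $\bm{R}$, $\bm{F}^\prime_\mathrm{y}$, $\bm{F}^{\prime\prime}_\mathrm{y}$, $\widetilde{\bm{V}}$. The difficulty is that $\delta_i\to0$ requires $|z_i|\to\infty$, which typically means $\|\widetilde{\bm{V}}\|$ or $\bm{R}$ grows, and $(\bm{\Phi}_{ij})_{11}$ scales like $\|\widetilde{\bm{V}}\|^4$.

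\begin{itemize}
\item I would first state the result under the interpretation that $\bm{\delta}$ (hence $\bm{o}_i$) is the varying quantity, with the remaining coefficient matrices held in a bounded set. This is the reading suggested by the factorized forms \eqref{eq_phi_ij}--\eqref{eq_ome_ij}.
\item I would then try to strengthen it. For each factor, I would compare the growth of the coefficients with the exponential decay $s^\prime(z_i)\approx \me^{-|z_i|}$. For Sigmoid/Tanh, $\bm{R}$ is bounded and $|z_i|\le |v_0|+\|\widetilde{\bm{V}}\|_1$, so polynomial growth in $\widetilde{\bm{V}}$ would be dominated.
\item The $\bm{\delta}$-linear terms would still need their coefficients to be $o(1/|\delta_i|)$. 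For those I would invoke the coefficient bounds $\zeta_1,\zeta_2$ from the earlier upper-bound theorems.
\end{itemize}
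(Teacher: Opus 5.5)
Your main line is the paper's proof. Both arguments say that $\bm{\delta}\to\bm{0}_I$ gives $\bm{s}^\prime\to\bm{0}_I$ and $\bm{o}_i\to\bm{0}_2$, that the closed forms in Theorems~\ref{theorem_h1_tr} and~\ref{theorem_h2_tr} then send both traces to $0$, and hence that $\mu(\bm{\theta}),\sigma(\bm{\theta})\to 0$. Your identity $s^\prime(z_i)=|\delta_i|(1-|\delta_i|)$ is a cleaner version of the paper's first step. The paper never confronts the obstacle you raise; it simply asserts $\bm{\Phi},\bm{\Psi},\bm{\Omega}\to\bm{0}_{I\times I}$, tacitly holding $\bm{\Phi}_{ij}$, $\bm{\Psi}_{ij}$ and the trace coefficients bounded. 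Your obstacle is genuine and in fact fatal: without an extra hypothesis the statement is false.

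\textbf{Counterexample.} Take $f=\tanh$, $N=2$, and linearly separable data.
\begin{itemize}
\item Give hidden unit~1 zero incoming weights, bias included. Then $r_{i1}=0$, $f^\prime(y_{i1})=1$ and $f^{\prime\prime}(y_{i1})=0$ for all $i$, so its outgoing weight $v_1=a$ never enters any $z_i$.
\item Give unit~2 fixed weights such that $\mathrm{sign}(r_{i2})$ matches the class and $|r_{i2}|\ge\kappa>0$. Set $v_2=t$ and $c=0$.
\item Then $t\kappa\le|z_i|<t$, so $|\delta_i|\le\me^{-t\kappa}\to0$ and $s^\prime(z_i)\ge\me^{-t}/4$.
\item The per-sample formula $\mathrm{tr}(\bm{H}_{l_i}(\bm{w},\bm{w}))=(1+\bm{x}_i^\top\bm{x}_i)\big(s^\prime(z_i)\|\bm{F}^\prime(\bm{y}_i)\widetilde{\bm{V}}^\top\|^2+\delta_i\widetilde{\bm{V}}\bm{f}^{\prime\prime}(\bm{y}_i)\big)$ gives $\mathrm{tr}(\bm{H}_{l_i}(\bm{w},\bm{w}))\ge(1+\bm{x}_i^\top\bm{x}_i)\big(s^\prime(z_i)a^2+\delta_i\,t\,f^{\prime\prime}(y_{i2})\big)$.
\item With $a=\me^{t}$, the first term is at least $\me^{t}/4\to\infty$, while $\delta_i t\to0$.
\end{itemize}
Since $\mathrm{tr}(\bm{H}_L(\bm{v},\bm{v}))\ge0$, we get $\lambda_1\ge\mu(\bm{\theta})\to\infty$, so $\lambda_\mathrm{sup}(\bm{\theta})\to\infty$ although $\bm{\delta}\to\bm{0}_I$. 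The maximum eigenvalue itself blows up, so this is not looseness of the bound.

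\textbf{What this means for your plan.}
\begin{itemize}
\item The strengthening step fails. The inequality $|z_i|\le|v_0|+\|\widetilde{\bm{V}}\|_1$ bounds $|z_i|$ from above, i.e.\ it bounds $s^\prime(z_i)$ from \emph{below}. Domination would require $|z_i|$ to grow with $\|\widetilde{\bm{V}}\|$. The example shows that a component of $\widetilde{\bm{V}}$ can be invisible to every $z_i$, even for the saturating Tanh. The $\zeta_1,\zeta_2$ bounds do not help either, since they still scale with $\widetilde{\bm{V}}$.
\item Your conservative reading is only vacuously true. If $\widetilde{\bm{V}}$ and $\bm{R}$ stay bounded and both labels occur, then $z_i-z_j=\widetilde{\bm{V}}(\bm{r}_i-\bm{r}_j)$ stays bounded. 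In that case $\bm{\delta}\to\bm{0}_I$ cannot happen at all.
\end{itemize}

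\textbf{A meaningful corrected statement.} Put the hypothesis on the composite quantities weighted by $\bm{\delta}$. For example, require
\[
|\delta_i||\delta_j|\big(\|\bm{\Phi}_{ij}\|(1+\bm{x}_i^\top\bm{x}_j)^2+\|\bm{\Psi}_{ij}\|\,|1+\bm{x}_i^\top\bm{x}_j|+(1+\bm{r}_i^\top\bm{r}_j)^2\big)\to0
\]
for all $i,j$. This holds, e.g., along $\bm{V}=t\bm{V}_0$ with $\bm{W}$ fixed at a separating configuration. Since $\|\bm{o}_i\|\le\sqrt{2}|\delta_i|$ and $\omega_{ij}\le|\delta_i||\delta_j|$, this condition gives $\mathrm{tr}(\bm{H}_L(\bm{\theta},\bm{\theta})^2)\to0$. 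That alone suffices, because both $|\mu(\bm{\theta})|$ and $\sigma(\bm{\theta})$ are at most $\sqrt{\mathrm{tr}(\bm{H}_L(\bm{\theta},\bm{\theta})^2)/D}$. Your separate trace step is then unnecessary.
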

\begin{proof}
See Appendix~\ref{proof_limit_u_eigenmax}.
\end{proof}
The condition $\bm{\delta} \rightarrow \bm{0}_I$ implies $L \rightarrow 0$; thus, it represents a state where the NN has correctly predicted all training data with maximum confidence.
In general, forcing a strong fit to all training data leads to overfitting, which in turn results in a degradation of generalization performance.
However, the proposition shown in Eq.~\eqref{eq_limit_u_eigenmax} asserts that when the model is overfitted to the limit, $\lambda_\mathrm{sup}(\bm{\theta})$ becomes zero, resulting in a flat solution.
Therefore, in extreme scenarios, there may be cases where generalization cannot be fully explained by sharpness alone, and caution is required.

\begin{figure}[t] 
    \centering
    \includegraphics[scale=0.66]{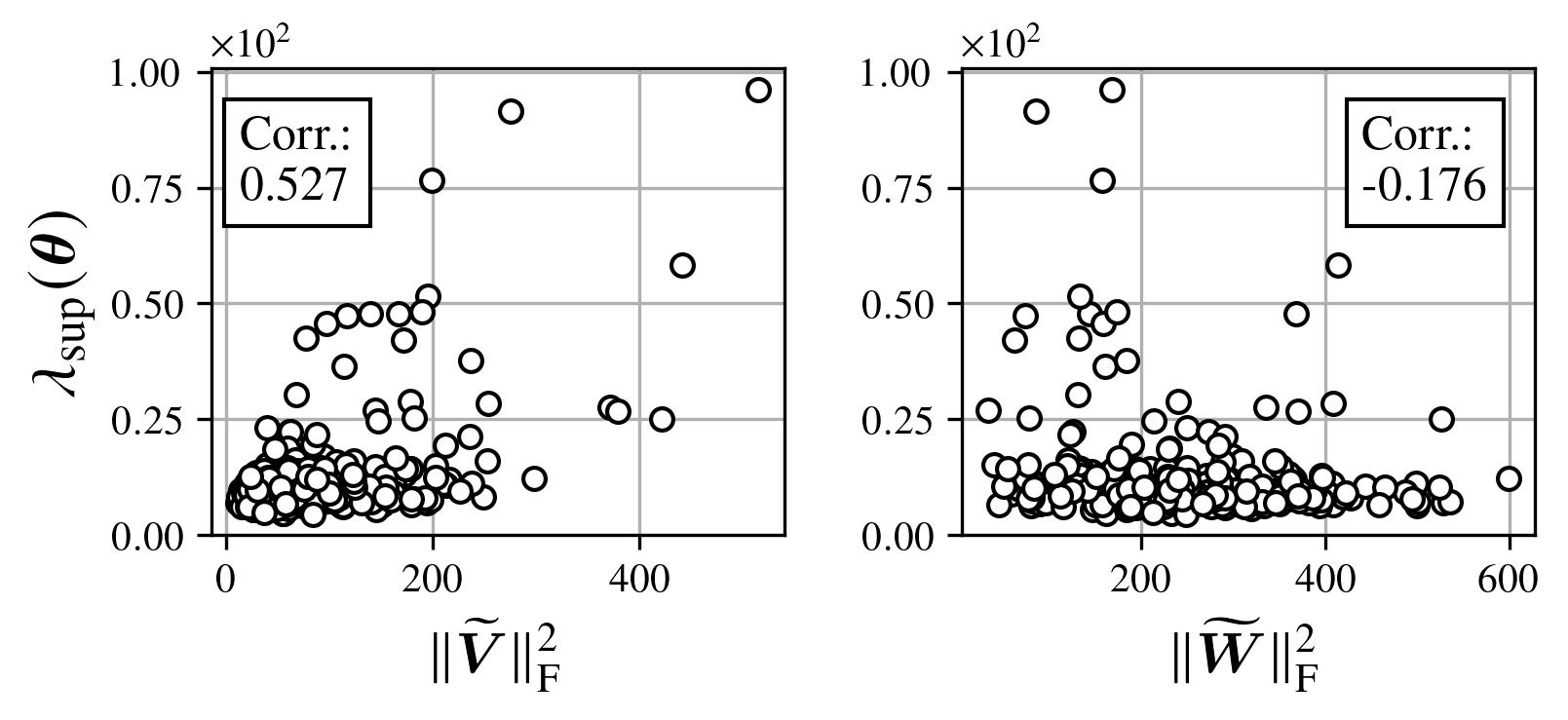}
    \caption{Relationship between the Frobenius norm of the parameters and the upper bound of the maximum eigenvalue.
    A total of 353 critical points mentioned in Appendix \ref{sec_expe} were utilized ($N=3$).}
    \label{fig_norm_vs_upper}
\end{figure}

\section{Discussion}
From Theorem~\ref{th_main_theorem}, it can be interpreted that a decrease in the squared trace $\mathrm{tr} ( \bm{H}_L(\bm{\theta}, \bm{\theta})^2 ) $ leads to a reduction in the upper bound of the maximum eigenvalue $\lambda_{\mathrm{sup}}(\bm{\theta})$.
On the other hand, it can be observed that the trace $\mathrm{tr} ( \bm{H}_L(\bm{\theta}, \bm{\theta}) )$ has both positive and negative impacts on the upper bound $\lambda_{\mathrm{sup}}(\bm{\theta})$.
To simplify the interpretation, we assume that the critical points reached during training possess an ideal shape; specifically, the Hessian is assumed to be a positive semi-definite matrix.
In this case, the following relationship holds between $\mathrm{tr} ( \bm{H}_L(\bm{\theta}, \bm{\theta}) )$ and $\mathrm{tr} ( \bm{H}_L(\bm{\theta}, \bm{\theta})^2 )$.
\begin{proposition}\label{prop_tr_th2}
 \ali{
&\bm{H}_L(\bm{\theta}, \bm{\theta}) \succeq 0 \Rightarrow \nn \\
&\frac{1}{D}\mathrm{tr} ( \bm{H}_L(\bm{\theta}, \bm{\theta}) )^2 
\leq \mathrm{tr} ( \bm{H}_L(\bm{\theta}, \bm{\theta})^2 ) 
\leq \mathrm{tr} ( \bm{H}_L(\bm{\theta}, \bm{\theta}) )^2. \label{eq_btw_tr_tr2}
 }
\end{proposition}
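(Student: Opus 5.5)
The plan is to work entirely with the eigenvalues, using the spectral facts already recorded in the paper: $\bm{H}_L(\bm{\theta}, \bm{\theta})$ is real symmetric, so by Eq.~\eqref{eq_real_eigen} all $\lambda_d$ are real, $\mathrm{tr}(\bm{H}_L(\bm{\theta}, \bm{\theta})) = \sum_{d=1}^D \lambda_d$, and $\mathrm{tr}(\bm{H}_L(\bm{\theta}, \bm{\theta})^2) = \sum_{d=1}^D \lambda_d^2$. The hypothesis $\bm{H}_L(\bm{\theta}, \bm{\theta}) \succeq 0$ is used only to give $\lambda_d \geq 0$ for every $d$. Both inequalities in Eq.~\eqref{eq_btw_tr_tr2} then become elementary statements about a nonnegative vector $\bm{\lambda} \in \mathbb{R}^D$, namely
\begin{align}
\frac{1}{D}\|\bm{\lambda}\|_1^2 \leq \|\bm{\lambda}\|_2^2 \leq \|\bm{\lambda}\|_1^2 . \nn
\end{align}

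For the left inequality, I would apply the Cauchy--Schwarz inequality to $\bm{\lambda}$ and $\bm{1}_D$:
\begin{align}
\Big(\sum_{d=1}^D \lambda_d\Big)^2 = (\bm{1}_D^\top \bm{\lambda})^2 \leq D \sum_{d=1}^D \lambda_d^2 . \nn
\end{align}
Dividing by $D$ gives the lower bound. This step does not use positive semidefiniteness.

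An equivalent route is to note that $\sigma(\bm{\theta})^2 \geq 0$ in Eq.~\eqref{eq_var_eigen}, since $\sigma(\bm{\theta})^2$ equals the variance $\frac{1}{D}\sum_d(\lambda_d-\mu(\bm{\theta}))^2$ of the eigenspectrum. Then $\frac{1}{D}\mathrm{tr}(\bm{H}_L^2) \geq \big(\frac{1}{D}\mathrm{tr}(\bm{H}_L)\big)^2$, and multiplying by $D$ gives the same bound. I would mention this to link the result back to Theorem~\ref{th_main_theorem}.

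For the right inequality, I would expand
\begin{align}
\Big(\sum_{d=1}^D \lambda_d\Big)^2 = \sum_{d=1}^D \lambda_d^2 + 2\sum_{d<d'} \lambda_d \lambda_{d'} . \nn
\end{align}
The cross terms $\lambda_d\lambda_{d'}$ are nonnegative because every $\lambda_d \geq 0$. Hence $\mathrm{tr}(\bm{H}_L^2) \leq \mathrm{tr}(\bm{H}_L)^2$.

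The only step needing care is this second inequality, because it is the one place where $\bm{H}_L(\bm{\theta}, \bm{\theta}) \succeq 0$ is essential. A counterexample without it is $\bm{\lambda} = (1,-1,0,\dots,0)$, which gives trace $0$ but square trace $2$.

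I would also note the equality cases, since they connect to Eq.~\eqref{eq_lambda_equal}. The right inequality is an equality exactly when at most one eigenvalue is nonzero, which is the rank-one situation in which $\lambda_\mathrm{sup}(\bm{\theta}) = \lambda_1$. The left inequality is an equality exactly when all eigenvalues are equal.
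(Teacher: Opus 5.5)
Your proof is correct and follows essentially the same route as the paper: Cauchy--Schwarz applied to $\bm{\lambda}$ and $\bm{1}_D$ for the lower bound, and expanding $(\sum_d \lambda_d)^2$ with cross terms that are nonnegative under $\bm{H}_L(\bm{\theta},\bm{\theta}) \succeq 0$ for the upper bound. Your extra remarks are also correct: the lower bound does not need positive semidefiniteness, the counterexample $(1,-1,0,\dots,0)$ works, and the equality cases are right.
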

\begin{proof}
See Appendix \ref{sec_proof_btw_tr_tr2}.
\end{proof}
This proposition asserts that increasing $\mathrm{tr} ( \bm{H}_L(\bm{\theta}, \bm{\theta}))$ results in an elevation of both the lower and upper bounds of $\mathrm{tr} ( \bm{H}_L(\bm{\theta}, \bm{\theta})^2 )$.
According to Theorem~\ref{th_main_theorem}, an increase in the squared trace $\mathrm{tr} ( \bm{H}_L(\bm{\theta}, \bm{\theta})^2 )$ leads to an increase in the upper bound of the maximum eigenvalue $\lambda_{\mathrm{sup}}(\bm{\theta})$; therefore, it is also crucial to reduce the trace $\mathrm{tr} ( \bm{H}_L(\bm{\theta}, \bm{\theta}))$.

\begin{figure}[t] 
    \centering
    \includegraphics[scale=0.75]{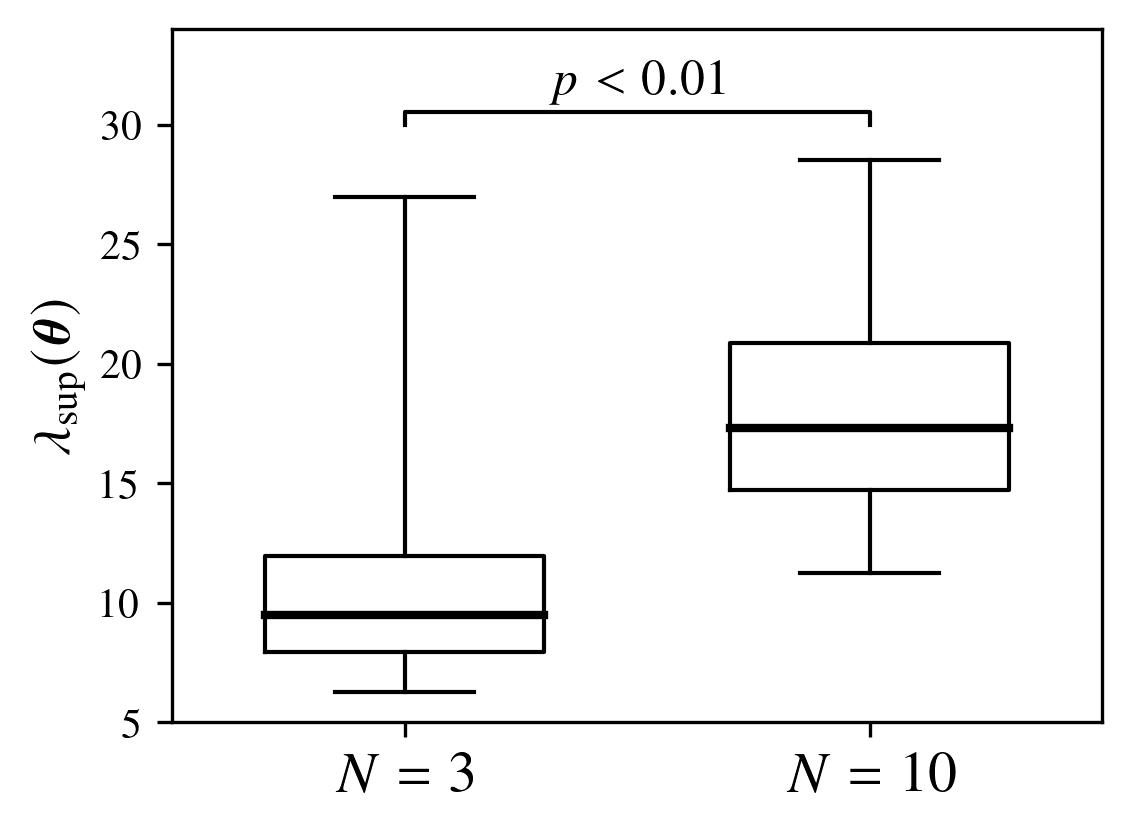}
    \caption{Relationship between the upper bound of the maximum eigenvalue and the dimensionality of the hidden layer $N$. 
    The boxes represent the 25--75\% range, the whiskers indicate the 10--90\% range, and the thick horizontal lines denote the median. 
    The $p$-value is obtained from the two-sided Mann-Whitney U test.}
    \label{fig_box_upper}
\end{figure}

\begin{figure}[t] 
    \centering
    \includegraphics[scale=0.67]{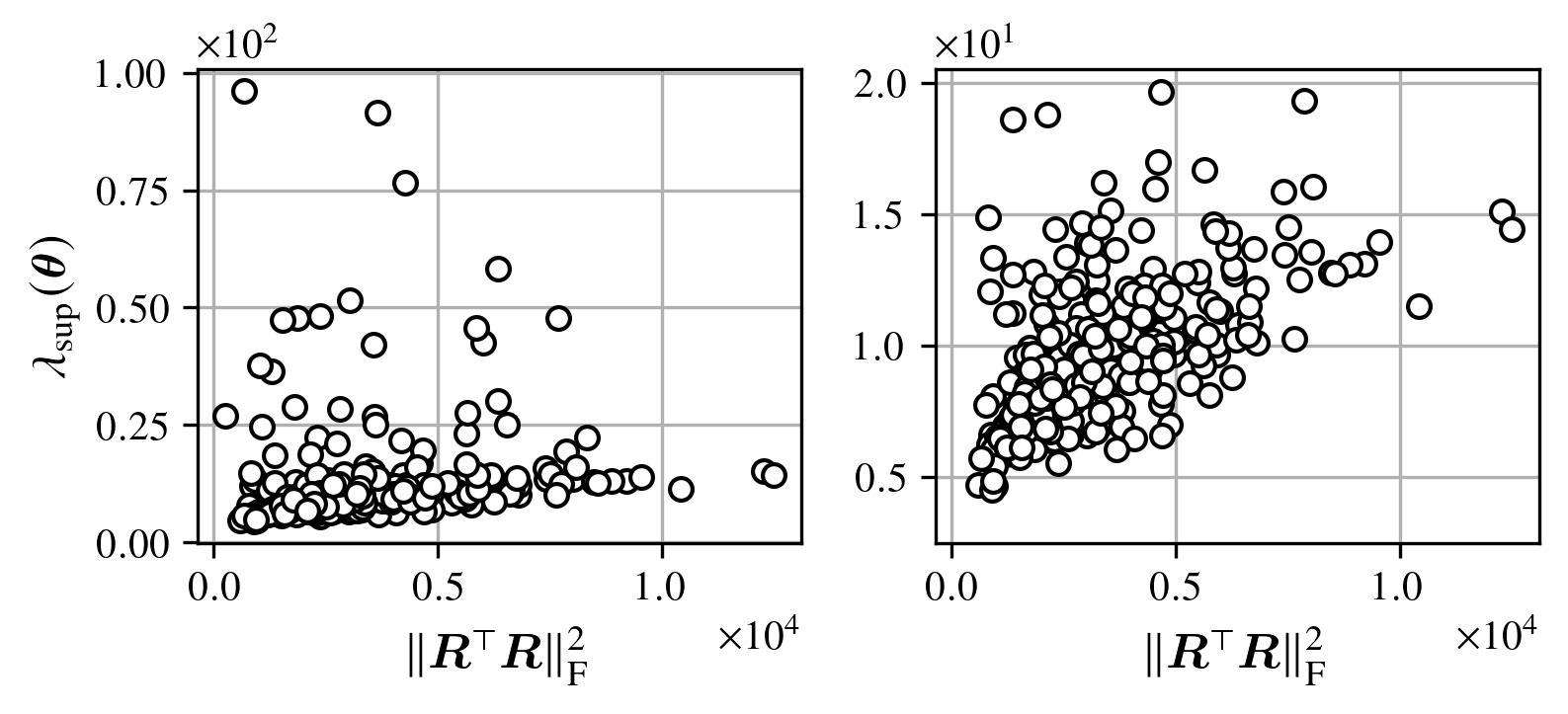}
    \caption{Relationship between the Frobenius norm of the hidden layer and the upper bound of the maximum eigenvalue. 
    The left part shows a scatter plot of all critical points, while the right part provides an enlarged view of the same plot. The critical points used were obtained as described in Appendix~\ref{sec_expe} ($N=3$).}
    \label{fig_RF_vs_upper}
\end{figure}

The form of the upper bound of the trace in Eq.~\eqref{eq_tr_ub_ww} indicates that the norm of the affine mapping parameters from the hidden layer to the output layer, $\|\widetilde{\bm{V}}\|^2_\mathrm{F}$, contributes to an increase in the upper bound of the maximum eigenvalue, $\lambda_{\mathrm{sup}}(\bm{\theta})$.
To investigate whether actual observations support this theoretical implication, we plotted a scatter diagram of $\|\widetilde{\bm{V}}\|^2_\mathrm{F}$ versus $\lambda_{\mathrm{sup}}(\bm{\theta})$, utilizing all critical points observed in the experiments described in Appendix~\ref{sec_expe}.
This result is shown in the left part of Figure~\ref{fig_norm_vs_upper}, and it can be seen that $\lambda_\mathrm{sup}(\bm{\theta})$ tends to increase as $\|\widetilde{\bm{V}}\|^2_\mathrm{F}$ becomes larger.
For reference, we also created a scatter plot of $\|\widetilde{\bm{W}}\|^2_\mathrm{F}$ versus $\lambda_\mathrm{sup}(\bm{\theta})$.
As shown in the right part of Figure~\ref{fig_norm_vs_upper}, no clear relationship was observed between the parameters used for the transformation from the input to the hidden layer and $\lambda_\mathrm{sup}(\bm{\theta})$.
Therefore, it can be concluded that suppressing the scale of the parameters from the hidden layer to the output layer, such as by applying L2 regularization, is crucial.
Furthermore, Eq.~\eqref{eq_tr_ub_vv} reveals that the dimensionality of the hidden layer, $N$, contributes to an increase in the upper bound of the trace.
Therefore, we compared the upper bounds of the maximum eigenvalue at the critical points for $N=3$ and $N=10$.
This result is shown in Figure~\ref{fig_box_upper}. 
A comparison of the upper bounds between the two groups using a two-sided Mann-Whitney U test revealed a significant difference at the 1\% level.
Consequently, it can be concluded that there is a significant relationship between the dimensionality of the hidden layer, $N$, and $\lambda_\mathrm{sup}(\bm{\theta})$.

Next, we examine Eq.~\eqref{eq_h2tr_upper_mat}.
In this equation, we can identify the presence of the Frobenius norms of the inner products of the data matrices, namely $\|\bm{X}^\top\bm{X}\|_\mathrm{F}^2$ and $\|\bm{R}^\top\bm{R}\|_\mathrm{F}^2$.
Therefore, it can be said that the inner products of the input data, $\bm{x}_i^\top \bm{x}_j$, and those of the hidden layer data, $\bm{r}_i^\top \bm{r}_j$, are associated with the upper bound of $\bm{H}_L(\bm{\theta}, \bm{\theta})^2$.
In other words, a lower degree of orthogonality among the training data samples is expected to increase the upper bound of the maximum eigenvalue $\lambda_\mathrm{sup}(\bm{\theta})$. 
Since the terms involve the squared inner products, this implies that any strong alignment, regardless of whether it is in the same or opposite direction, contributes to a larger upper bound.
However, it should be noted that when the input data $\bm{x}$ changes, the loss function $L$ being analyzed also changes. 
Therefore, it is difficult to experimentally verify whether the inner products of the input data affect the upper bound of the maximum eigenvalue in isolation.
Accordingly, we used the critical points obtained in Appendix~\ref{sec_expe} to verify whether a lower degree of orthogonality among the hidden layer data leads to an increase in $\lambda_\mathrm{sup}(\bm{\theta})$.
These results are shown in Figure~\ref{fig_RF_vs_upper}, where the left part shows a scatter plot of all critical points and the right part provides an enlarged view of the lower region.
From this figure, it can be observed that as $\|\bm{R}^\top\bm{R}\|_\mathrm{F}^2$ increases, the lower bound of $\lambda_\mathrm{sup}(\bm{\theta})$ also rises.
Therefore, to reduce the sharpness of critical points, it is important to increase the degree of orthogonality among the data samples.

\section{Conclusion}
Previous studies on the loss landscapes of NNs have primarily focused on analyses based on numerical approximations or theoretical investigations restricted to linear networks.
In other words, there has been a lack of research providing a closed-form expression for sharpness in nonlinear and smooth multilayer NNs, which are widely used in practical deep learning.
Therefore, in this study, we derived a closed-form expression for the upper bound of the maximum eigenvalue of the Hessian, specifically targeting the cross-entropy loss of nonlinear and smooth multilayer NNs.
This explicitly clarifies the influence of model parameters and training data on sharpness. 
These results are expected to contribute to the theoretical advancement of deep learning.
Although this analysis is limited to three-layer models, we intend to extend it to deeper architectures in future work, aiming to further advance deep learning theory.

\appendices

\begin{figure*}[t] 
    \centering
    \includegraphics[scale=0.77]{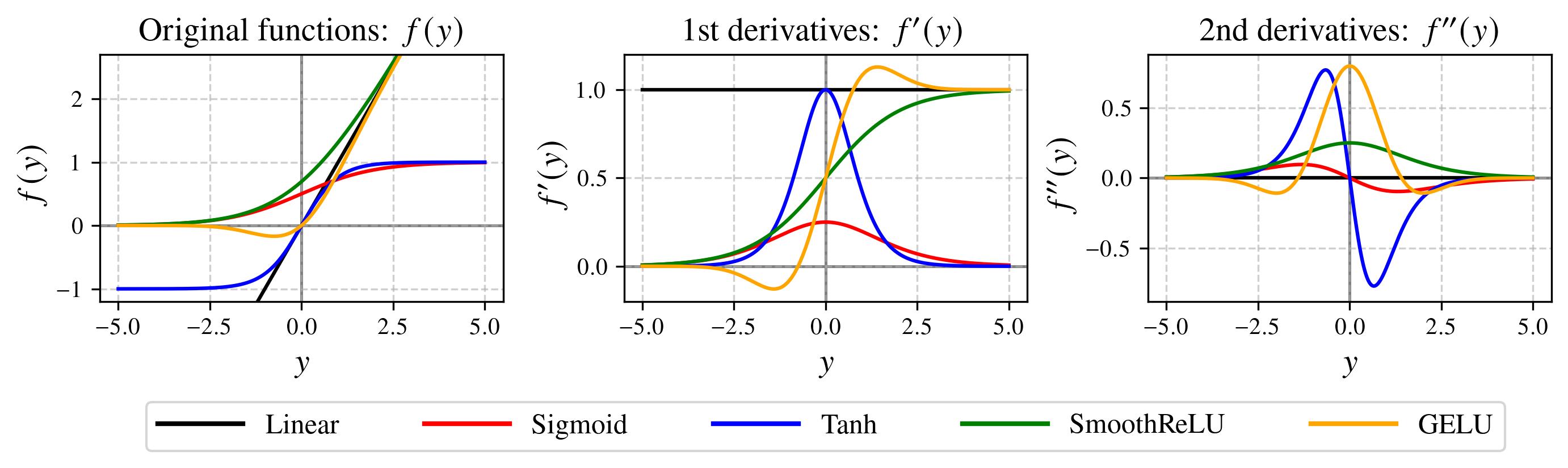}
    \caption{Activation function $f(y)$ and its first and second derivatives.}
    \label{fig_acts5}
\end{figure*}

\section{Activation Functions}
\subsection{Derivatives and Matrix Representations of Activation Functions} \label{sec_activation_base}
Let $f(y) \in \mathbb{R}$ be an activation function for a scalar value $y \in \mathbb{R}$.
We denote its derivatives as
\ali{
f^{\prime}(y) = \frac{\md f(y)}{\md y}, \ 
f^{\prime\prime}(y) = \frac{\md f^\prime(y)}{\md y}, \ 
f^{\prime\prime\prime}(y) = \frac{\md f^{\prime\prime}(y)}{\md y}. \nn
}
We define the vector-valued activation function and its derivatives for a vector $\bm{y} \in \mathbb{R}^{N\times 1}$ as
$\bm{f}(\bm{y}) = \mat{f(y_n)} \in \mathbb{R}^{N\times 1}$, 
$\bm{f}^{\prime}(\bm{y}) = \mat{f^{\prime}(y_n)} \in \mathbb{R}^{N\times 1}$, 
$\bm{f}^{\prime\prime}(\bm{y}) = \mat{f^{\prime\prime}(y_n)} \in \mathbb{R}^{N\times 1}$.

The Jacobian matrices of $\bm{f}(\bm{y})$ and $\bm{f}^\prime(\bm{y})$ with respect to the input $\bm{y}$ are given by
\ali{
\frac{\partial \bm{f}(\bm{y})}{\partial \bm{y}} 
&= \frac{\pa }{\pa \bm{y}}\bm{f}(\bm{y})^\top
=\mathrm{diag}(\bm{f}^\prime(\bm{y}) ), \nn \\
\frac{\partial \bm{f}^\prime(\bm{y})}{\partial \bm{y}} 
&= \frac{\pa }{\pa \bm{y}}\bm{f}^\prime(\bm{y})^\top
=\mathrm{diag}(\bm{f}^{\prime\prime}(\bm{y}) ). \nn
}
To represent this concisely, we define
\ali{
\bm{F}^{\prime}(\bm{y}) = \mathrm{diag}(\bm{f}^{\prime}(\bm{y})), \ 
\bm{F}^{\prime \prime}(\bm{y}) = \mathrm{diag}(\bm{f}^{\prime \prime}(\bm{y})). \label{eq_jacob_sigm}
}
Note that the five activation functions considered in this study are shown in Figure~\ref{fig_acts5}.

\subsection{Linear activation}\label{sec_lin_details}
The linear activation and its first and second-order derivatives are given by
\ali{
f(y) = y \in \mathbb{R}, \ f^\prime(y) = 1, \ f^{\prime\prime}(y) = 0. \label{eq_s_lin_max}
}
Furthermore,
\ali{
\max_{y \in \mathbb{R} } f^\prime(y) &= 
\min_{y \in \mathbb{R} } f^\prime(y) = 1, \label{eq_s_lin_1st_max} \\
\max_{y \in \mathbb{R} } f^{\prime\prime}(y) &= 
\min_{y \in \mathbb{R} } f^{\prime\prime}(y) = 0. \label{eq_s_lin_2nd_max}
}

\subsection{Sigmoid Activation} \label{sec_sigm_details}
The sigmoid activation~\cite{minaiDerivatives1993} is given by
\ali{
f(y) = \frac{1}{1+ \exp(-y)} \in (0, 1). \label{eq_s_sig_max}
}
Its first, second, and third-order derivatives are expressed as
\ali{
f^\prime(y) &= (1-f(y))f(y), \nn \\
f^{\prime\prime}(y) &= (1-2f(y))(1-f(y))f(y), \nn \\
f^{\prime\prime\prime}(y) &= 6 \Bigg(f(y) - \frac{3+\sqrt{3}}{6}\Bigg)\Bigg(f(y) - \frac{3-\sqrt{3}}{6}\Bigg)\nn \\
&\times (1-f(y))f(y). \nn
}
Since $f^{\prime\prime}(y)=0$ when $f(y) = 1/2$ or $f(y) \rightarrow 0 \lor 1$, the lower bound and maximum of $f^\prime(y)$ are given by
\ali{
\inf_{y \in \mathbb{R}} f^\prime(y) = 0, \ 
\max_{y \in \mathbb{R}} f^\prime(y) =  \frac{1}{4}. \label{eq_sigm_ran_1}
}
Since $f^{\prime\prime\prime}(y)=0$ at $f(y) = (3 \pm \sqrt{3})/6$, the minimum and maximum of $f^{\prime\prime}(y)$ are given by
\ali{
\min_{y \in \mathbb{R}} f^{\prime\prime}(y) &= -\frac{\sqrt{3}}{18} = -0.096..., \nn \\
\max_{y \in \mathbb{R}} f^{\prime\prime}(y) &= \frac{\sqrt{3}}{18} = 0.096... \ .
\label{eq_2nd_sigm_range}
}

\subsection{Hyperbolic Tangent Activation}\label{sec_tanh_details}
The tanh activation~\cite{goodfellowDeep2016} is given by
\ali{
f(y) &= \mathrm{tanh}(y) \in (-1, 1). \label{eq_s_tanh_max}
}
From Eqs.~(4.5.17) and (4.5.73) in~\cite{abramowitzHandbook1965}, the first, second, and third-order derivatives are given by
\ali{
f^\prime(y) &= \mathrm{sech}^2(y) = 1-f(y)^2, \nn \\
f^{\prime\prime}(y) &= -2 f(y) (1-f(y)^2), \nn \\
f^{\prime\prime\prime}(y) &= -2 (1 - f(y)^2) (1-3f(y)^2). \nn
}
Since $f^{\prime}(y)$ approaches 0 as $f(y) \rightarrow -1 \lor 1$, and attains its maximum at $f(y)=0$, we have
\ali{
\inf_{y \in \mathbb{R}} f^{\prime}(y) = 0, \ 
\max_{y \in \mathbb{R}} f^{\prime}(y) = 1. \label{eq_tanh_1st_range}
}
Since the condition $f^{\prime\prime\prime}(y)=0$ is satisfied at $f(y) = \pm 1/\sqrt{3}$, we obtain
\ali{
\min_{y \in \mathbb{R}} f^{\prime\prime}(y) &= -\frac{4\sqrt{3}}{9} = -0.769..., \nn \\ 
\max_{y \in \mathbb{R}} f^{\prime\prime}(y) &= \frac{4\sqrt{3}}{9} = 0.769...\ . \label{eq_tanh_2nd_deri_range}
}

\subsection{SmoothReLU Activation} \label{sec_srelu_details}
The SmoothReLU \cite{berzalDL1012025} is defined as
\ali{
f(y) &= \log (1+ \exp(y)) \in (0, \infty). \label{eq_s_srelu_max}
}
By applying the chain rule, the first-order derivative of SmoothReLU is obtained as
\ali{
f^\prime(y) &= \frac{1}{1+\exp(-y)}. \nn
}
Since this is identical to the sigmoid function, the second-order derivative of SmoothReLU is given by
\ali{
f^{\prime\prime}(y) & = (1-f^\prime(y))f^\prime(y). \nn
}
Furthermore, from the properties of the sigmoid function,
\ali{
&\inf_{y \in \mathbb{R}} f^\prime(y) = 0, \ 
\sup_{y \in \mathbb{R}} f^\prime(y) = 1, \label{eq_srelu_1st_deri_max}\\
&\inf_{y \in \mathbb{R}} f^{\prime\prime}(y) = 0, \ 
\max_{y \in \mathbb{R}} f^{\prime\prime}(y) = 1/4 \label{eq_srelu_2nd_deri_max}
}
holds.

\subsection{GELU activation}\label{sec_gelu_details}
GELU~\cite{hendrycksGaussian2016} is an activation function that employs the cumulative distribution function of the standard Gaussian distribution.
Letting the standard Gaussian distribution and its cumulative distribution function be
\ali{
g(y) &= \frac{1}{\sqrt{2 \pi}} \me^{-\frac{y^{2}}{2}}, \nn \\
G(y) &= \frac{1}{\sqrt{2 \pi}} \int_{-\infty}^{y} \me^{-\frac{a^{2}}{2}} \md a
= \frac{1}{2} ( 1+\mathrm{erf}(y/\sqrt{2}) ), \nn
}
the relation
\ali{
\frac{\md G(y)}{\md y} = g(y), \ \frac{\md g(y)}{\md y} = -yg(y) \nn
}
holds.
In this case, the GELU function is given by
\ali{
f(y) = y G(y), \ \sup_{y \in \mathbb{R}} f(y) = \infty \label{eq_s_gelu_max}
}
as described in~\cite{hendrycksGaussian2016}.
The first, second, and third-order derivatives are given by
\ali{
f^\prime(y) &= G(y) + yg(y), \nn \\
f^{\prime\prime}(y) &= g(y) ( \sqrt{2} - y )( \sqrt{2} + y ), \nn \\
f^{\prime\prime\prime}(y) &= -y g(y) ( 2 -y )( 2 + y ). \nn
}
From $f^{\prime\prime}(y) = 0$, the maximum and minimum values of the first-order derivative are given by
\ali{
\max_{y \in \mathbb{R}} f^\prime(y) &= G(\sqrt{2}) + \sqrt{2}g(\sqrt{2}) = 1.128..., \label{eq_GELU_1st_deri_max} \\
\min_{y \in \mathbb{R}} f^\prime(y) &= G(-\sqrt{2}) -\sqrt{2}g(-\sqrt{2}) = -0.128...\ . \nn
}
From $f^{\prime\prime\prime}(y) = 0$, the maximum and minimum values of the second-order derivative are given by
\ali{
\max_{y \in \mathbb{R}} f^{\prime\prime}(y) &= 2g(0) = \sqrt{\frac{2}{\pi}} = 0.797..., \label{eq_GELU_2nd_deri_max} \\
\min_{y \in \mathbb{R}} f^{\prime\prime}(y) &= g(2) ( \sqrt{2} - 2 )( \sqrt{2} + 2 ) = -0.107... \ .\nn
}

\section{Proofs}
\subsection{Proof for Eq.~\eqref{eq_lambda_equal}} \label{proof_lambda_equal}
From Eqs.~\eqref{eq_main_theorem}, \eqref{eq_ave_eigen}, \eqref{eq_var_eigen}, the relation
\ali{
&\lambda_1 \neq 0 \land \Bigg( \bigwedge_{d=2}^D \lambda_d = 0 \Bigg) \nn \\
& \Rightarrow
\mu(\bm{\theta}) = \frac{\lambda_1}{D}, \ 
\sigma(\bm{\theta})^2 = \frac{\lambda_1^2}{D^2} (D-1) \nn \\
& \Rightarrow \lambda_\mathrm{sup}(\bm{\theta}) = \mu(\bm{\theta}) + \sqrt{D-1} \sigma(\bm{\theta}) = \lambda_1 \nn
}
holds.

\subsection{Proof for Eq.~\eqref{eq_d_lW}} \label{proof_1st_deri_W}
From Eqs.~\eqref{eq_nn_forward} and \eqref{eq_unit_ce}, the path from $\bm{W}$ to $l$ is given by $\bm{W} \rightarrow \bm{y} \rightarrow \bm{r} \rightarrow \bm{h}(\bm{r}) \rightarrow z \rightarrow p \rightarrow l$.
Therefore, the Jacobian matrices between these variables are given by
\ali{
\frac{\partial \bm{y}}{\partial \bm{W}_{:m}} &= x_{m} \bm{E}_N \in \mathbb{R}^{N \times N}, \ 
\frac{\partial \bm{r}}{\partial \bm{y}} = 
\bm{F}^{\prime}(\bm{y}) \in \mathbb{R}^{N \times N}, \nn \\
\frac{\partial \bm{h}(\bm{r})}{\partial \bm{r}} &= \mat{ \bm{0}_N & \bm{E}_N} \in \mathbb{R}^{N \times (N+1)}, \nn \\
\frac{\partial z}{\partial \bm{h}(\bm{r})} &= \bm{V}^\top \in \mathbb{R}^{(N+1) \times 1}, \ 
\frac{\partial p}{\partial z}\frac{\partial l}{\partial p} = p - q. \label{eq_d_pz} 
}
Note that Eq.~\eqref{eq_jacob_sigm} was used for $\partial \bm{r}/\partial \bm{y}$.
In this study, we define the Jacobian matrix as $\partial \bm{\alpha} / \partial \bm{\beta} = (\partial / \partial \bm{\beta})\bm{\alpha}^\top$.
That is, we adopt the denominator layout.
Here, $\bm{E}_N$ denotes the $N \times N$ identity matrix.
Furthermore, we have
\ali{
\frac{\partial p}{\partial z} = s^\prime(z) = p (1-p), \ \frac{\partial l}{\partial p} = \frac{p - q}{p(1-p)}. \label{eq_d_pz_one}
}
By letting the estimation error be
\ali{
\delta = p-q \in (-1, 1), \label{eq_delta_def}
}
we obtain
\ali{
\frac{\partial l}{\partial \bm{W}_{:m}} &= 
\frac{\partial \bm{y}}{\partial \bm{W}_{:m}}
\frac{\partial \bm{r}}{\partial \bm{y}}
\frac{\partial \bm{h}(\bm{r})}{\partial \bm{r}}
\frac{\partial z}{\partial \bm{h}(\bm{r})}
\frac{\partial p}{\partial z}
\frac{\partial l}{\partial p} \nn \\
&= x_{m} \delta \bm{F}^{\prime}(\bm{y})\widetilde{\bm{V}}^\top \in \mathbb{R}^{N \times 1}. \label{eq_d_lWm}
}
From Eq.~\eqref{eq_w_v_flat}, we obtain
\ali{
\frac{\partial l}{\partial \bm{w}} = \mat{\frac{\partial l}{\partial \bm{W}_{:0}} \\ \vdots \\ \frac{\partial l}{\partial \bm{W}_{:M}}} 
= \delta \bm{h}(\bm{x}) \otimes (\bm{F}^{\prime}(\bm{y})\widetilde{\bm{V}}^\top) \in \mathbb{R}^{(M+1)N \times 1}, \nn
}
where $\otimes$ denotes the Kronecker product.

\subsection{Proof for Eq.~\eqref{eq_d_lV}} \label{proof_1st_deri_V}
Since $z = \sum_{n=0}^{N} r_n v_n$ from Eq.~\eqref{eq_nn_forward}, we obtain
\ali{
 \frac{\partial z}{\partial \bm{v}}  = \bm{h}(\bm{r})  \in \mathbb{R}^{(N+1) \times 1}. \label{eq_d_zV}
}
From Eq.~\eqref{eq_nn_forward}, the path from $\bm{V}$ to $l$ is given by $\bm{V} \rightarrow z \rightarrow p \rightarrow l$.
Using Eqs.~\eqref{eq_d_pz} and \eqref{eq_d_zV}, we obtain
\begin{align}
\frac{\partial l}{\partial \bm{v}} = 
 \frac{\partial z}{\partial \bm{v}}
 \frac{\partial p}{\partial z}
 \frac{\partial l}{\partial p} = \delta \bm{h}(\bm{r}) \in \mathbb{R}^{(N+1) \times 1}. \label{eq_lv_app}
\end{align}

\subsection{Proof for Eq.~\eqref{eq_H_th2}} \label{sec_hesse_start}
From Eq.~\eqref{eq_d_ltheta_1}, the Hessian matrix of $l$ is given by
\ali{
& \bm{H}_l(\bm{\theta}, \bm{\theta}) = \frac{\pa }{\pa \bm{\theta}} \bigg( \frac{\pa l}{\pa \bm{\theta}} \bigg)^\top 
= \mat{\frac{\pa }{\pa \bm{w}} \\ \frac{\pa }{\pa \bm{v}}} 
\mat{\big(\frac{\pa l}{\pa \bm{w}}\big)^\top & \big(\frac{\pa l}{\pa \bm{v}}\big)^\top}
\nn \\
&=\mat{
\frac{\pa }{\pa \bm{w}}  \big(\frac{\pa l}{\pa \bm{w}}\big)^\top &
\frac{\pa }{\pa \bm{w}}  \big(\frac{\pa l}{\pa \bm{v}}\big)^\top \\
\frac{\pa }{\pa \bm{v}}  \big(\frac{\pa l}{\pa \bm{w}}\big)^\top &
\frac{\pa }{\pa \bm{v}}  \big(\frac{\pa l}{\pa \bm{v}}\big)^\top \\
}
= \mat{
\bm{H}_l(\bm{w}, \bm{w}) & \bm{H}_l(\bm{w}, \bm{v}) \\
\bm{H}_l(\bm{v}, \bm{w}) & \bm{H}_l(\bm{v}, \bm{v})
}. \nn
}
The following lemma holds.
\begin{lemma}
\ali{
\bm{H}_l(\bm{v}, \bm{w})^\top = \bm{H}_l(\bm{w}, \bm{v}). \label{eq_h_vw_wv_top}
}
\end{lemma}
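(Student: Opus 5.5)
The lemma is a symmetry statement for the mixed blocks of the per-sample Hessian. I will derive it from equality of mixed partials, not from the explicit formulas. By the definition used in Appendix~\ref{sec_hesse_start} (denominator layout), the two blocks are
\[
\bm{H}_l(\bm{v}, \bm{w}) = \frac{\pa}{\pa \bm{v}}\Big(\frac{\pa l}{\pa \bm{w}}\Big)^\top,
\qquad
\bm{H}_l(\bm{w}, \bm{v}) = \frac{\pa}{\pa \bm{w}}\Big(\frac{\pa l}{\pa \bm{v}}\Big)^\top .
\]
Entrywise this gives $(\bm{H}_l(\bm{v}, \bm{w}))_{ab} = \pa^2 l / \pa v_a \pa w_b$ and $(\bm{H}_l(\bm{w}, \bm{v}))_{ba} = \pa^2 l / \pa w_b \pa v_a$, where $w_b$ is the $b$-th entry of $\bm{w}$. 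The claim is therefore that these mixed second partial derivatives coincide for every index pair $(a,b)$.

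To justify the interchange of derivatives, I will check that $l$ is $C^2$ in $\bm{\theta}$ and then apply Schwarz's (Clairaut's) theorem. The loss is the composition
\[
\bm{\theta} \mapsto \bm{y}=\bm{W}\bm{h}(\bm{x}) \mapsto \bm{r}=\bm{f}(\bm{y}) \mapsto z=\bm{V}\bm{h}(\bm{r}) \mapsto p=s(z) \mapsto l .
\]
The first map is linear in $\bm{W}$. The activation $f$ is $C^2$ for each of the five choices; Appendix~A gives continuous $f^\prime$ and $f^{\prime\prime}$ for all of them. The map $(\bm{V},\bm{r}) \mapsto \bm{V}\bm{h}(\bm{r})$ is bilinear, hence smooth. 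Finally, the logistic sigmoid $s$ and $-q\log p-(1-q)\log(1-p)$ on $p \in (0,1)$ are smooth. A composition of $C^2$ maps is $C^2$, so all second partials of $l$ are continuous on $\mathbb{R}^D$, and Schwarz's theorem gives $\pa^2 l/\pa v_a\pa w_b = \pa^2 l/\pa w_b\pa v_a$. Transposing then yields Eq.~\eqref{eq_h_vw_wv_top}.

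As a consistency check, I will also verify the identity against the explicit forms Eq.~\eqref{eq_vw} and Eq.~\eqref{eq_wv}, using $(\bm{a}^\top \otimes \bm{B})^\top = \bm{a} \otimes \bm{B}^\top$ and the fact that $\bm{F}^\prime(\bm{y})$ is diagonal and hence symmetric. Transposing $s^\prime(z)\bm{h}(\bm{r})\widetilde{\bm{V}}\bm{F}^\prime(\bm{y})$ gives $s^\prime(z)\bm{F}^\prime(\bm{y})\widetilde{\bm{V}}^\top\bm{h}(\bm{r})^\top$. Transposing $\delta \mat{\bm{0}_N^\top \\ \bm{F}^\prime(\bm{y})}$ gives $\delta\mat{\bm{0}_N & \bm{F}^\prime(\bm{y})}$. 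These match Eq.~\eqref{eq_wv} term by term. This check is only a sanity check, because those equations are proved later in the paper, possibly using this lemma.

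I do not expect a real obstacle. The only step needing care is confirming $C^2$ regularity for every activation. The linear, Sigmoid, Tanh and SoftPlus activations are analytic. GELU is $yG(y)$ with $G$ smooth, so it is smooth as well. Nothing else is needed.
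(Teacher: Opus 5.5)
Your proof is correct and uses the same idea as the paper. The paper transposes the formal operator product $\frac{\partial}{\partial \bm{v}}\big(\frac{\partial}{\partial \bm{w}}\big)^\top$, which silently assumes that mixed partials commute; your appeal to Schwarz's theorem, with the check that $l$ is $C^2$ (indeed $C^\infty$) for all five activations, makes that assumption explicit. You are also right to treat the comparison with Eqs.~\eqref{eq_vw}--\eqref{eq_wv} only as a sanity check, since the paper derives Eq.~\eqref{eq_wv} from this very lemma.
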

\begin{proof}
Since higher-order derivatives can be expressed as direct products of operators, we have
\ali{
\bm{H}_l(\bm{v}, \bm{w})^\top
&= \Bigg( \frac{\pa }{\pa \bm{v}}  \bigg(\frac{\pa }{\pa \bm{w}}\bigg)^\top \Bigg)^\top l \nn \\
&= \frac{\pa }{\pa \bm{w}} \bigg(\frac{\pa }{\pa \bm{v}}\bigg)^\top l 
= \bm{H}_l(\bm{w}, \bm{v}). \nn \qedhere
}
\end{proof}

\subsection{Proof for Eq.~\eqref{eq_ww}} \label{sec_proof_HWW}
By the Jacobian chain rule, we obtain
\ali{
 \frac{\partial }{\partial \bm{W}_{:m}} = \frac{\partial \bm{y}}{\partial \bm{W}_{:m}} \frac{\partial }{\partial \bm{y}} = x_{m} \bm{E}_N \frac{\partial }{\partial \bm{y}} = x_{m} \frac{\partial }{\partial \bm{y}}. \label{eq_w2y}
}
Also, applying Eq.~(5) in \cite{petersenMatrix2012} to Eq.~\eqref{eq_d_lWm}, we obtain
\ali{
\bigg( \frac{\partial l}{\partial \bm{W}_{:m}} \bigg)^\top 
&= x_{m}\delta\widetilde{\bm{V}} \bm{F}^\prime(\bm{y}) \in \mathbb{R}^{1 \times N}.
 \label{eq_d_lW_top}
}
Furthermore, from Eqs.~\eqref{eq_d_pz} and \eqref{eq_d_pz_one}, we obtain
\begin{align}
\frac{\partial p}{\partial \bm{y}} &=
\frac{\partial \bm{r}}{\partial \bm{y}}
\frac{\partial \bm{h}(\bm{r})}{\partial \bm{r}}
\frac{\partial z}{\partial \bm{h}(\bm{r})}
\frac{\partial p}{\partial z} = s^\prime(z)\bm{F}^{\prime}(\bm{y})\widetilde{\bm{V}}^\top \in \mathbb{R}^{N \times 1}. \label{eq_py1}
 \end{align}
Combining Eqs.~\eqref{eq_jacob_sigm}, \eqref{eq_w2y}, \eqref{eq_d_lW_top}, and \eqref{eq_py1} with Eq.~(37) in \cite{petersenMatrix2012}, we have
\ali{
&\bm{H}_l(\bm{W}_{:m}, \bm{W}_{:n}) = 
\frac{\partial }{\partial \bm{W}_{:m}} \bigg( \frac{\partial l}{\partial \bm{W}_{:n}} \bigg)^\top 
=x_m x_{n} \frac{\partial }{\partial \bm{y}} \delta \widetilde{\bm{V}} \bm{F}^{\prime}(\bm{y}) \nn\\
&=x_m x_{n}\Bigg( \bigg(\frac{\partial \delta}{\partial \bm{y}} \bigg) \widetilde{\bm{V}} \bm{F}^{\prime}(\bm{y}) \ + \delta \frac{\partial }{\partial \bm{y}} (\widetilde{\bm{V}} \bm{F}^{\prime}(\bm{y}) ) \Bigg)\nn \\
&=x_m x_{n}\Big( s^\prime(z) \bm{F}^{\prime}(\bm{y})\widetilde{\bm{V}}^\top \widetilde{\bm{V}} \bm{F}^{\prime}(\bm{y}) \ + \delta \mathrm{diag}(\widetilde{\bm{V}}^\top) \bm{F}^{\prime\prime}(\bm{y}) \Big), \nn \\
&\bm{H}_l(\bm{W}_{:m}, \bm{W}_{:n}) \in \mathbb{R}^{N \times N}. \nn
}
Therefore,
\ali{
&\bm{H}_l(\bm{w}, \bm{w}) 
= \frac{\pa }{\pa \bm{w}} \bigg( \frac{\pa l}{\pa \bm{w}} \bigg)^\top\nn \\
&=\mat{\frac{\pa }{\pa \bm{W}_{:0}} \\ \vdots \\ \frac{\pa }{\pa \bm{W}_{:M}}} \mat{\big(\frac{\pa l}{\pa \bm{W}_{:0}}\big)^\top & \cdots & \big(\frac{\pa l}{\pa \bm{W}_{:M}}\big)^\top} \nn \\
&=\mat{
\bm{H}_l(\bm{W}_{:0}, \bm{W}_{:0}) & \cdots & \bm{H}_l(\bm{W}_{:0}, \bm{W}_{:M})\\
\vdots & \ddots & \vdots \\
\bm{H}_l(\bm{W}_{:M}, \bm{W}_{:0}) & \cdots & \bm{H}_l(\bm{W}_{:M}, \bm{W}_{:M})}\nn \\
&= \bm{h}(\bm{x})\bm{h}(\bm{x})^\top \nn \\
& \otimes \big( s^\prime(z)\bm{F}^{\prime}(\bm{y})\widetilde{\bm{V}}^\top \widetilde{\bm{V}} \bm{F}^{\prime}(\bm{y}) +\delta \mathrm{diag}(\widetilde{\bm{V}}^\top) \bm{F}^{\prime\prime}(\bm{y}) \big), \nn\\
&\bm{H}_l(\bm{w}, \bm{w}) \in \mathbb{R}^{(M+1)N \times (M+1)N}
\label{eq_H_WW_proof}
}
is satisfied.

\subsection{Proof for Eq.~\eqref{eq_vv}} \label{sec_proof_HVV}
From Eqs.~\eqref{eq_d_pz_one} and \eqref{eq_d_zV}, we obtain
\ali{
\frac{\partial p}{\partial \bm{v}} 
= \frac{\partial z}{\partial \bm{v}}\frac{\partial p}{\partial z}  
= s^\prime(z) \bm{h}(\bm{r}). \label{eq_d_pV}
}
From this, Eq.~\eqref{eq_lv_app}, and Eq.~(37) in \cite{petersenMatrix2012},
\ali{
&\bm{H}_l(\bm{v}, \bm{v})
= \frac{\partial }{\partial \bm{v}}\bigg( \frac{\partial l}{\partial \bm{v}} \bigg)^\top 
= \frac{\partial }{\partial \bm{v}} \delta \bm{h}(\bm{r})^\top  \nn \\
&= \bigg( \frac{\partial \delta}{\partial \bm{v}} \bigg) \bm{h}(\bm{r})^\top +\delta \frac{\partial }{\partial \bm{v}} \bm{h}(\bm{r})^\top 
= \bigg( \frac{\partial p}{\partial \bm{v}} \bigg) \bm{h}(\bm{r})^\top \nn \\
&=  s^\prime(z) \bm{h}(\bm{r}) \bm{h}(\bm{r})^\top \in \mathbb{R}^{(N+1) \times (N+1)} \label{eq_H_lVV}
}
is satisfied.

\subsection{Proofs for Eqs.~\eqref{eq_vw} and \eqref{eq_wv}}\label{sec_proof_HVW}
From Eqs.~\eqref{eq_d_lW_top}, \eqref{eq_d_pV}, and Eq.~(37) in \cite{petersenMatrix2012}, we obtain
\ali{
& \bm{H}_l(\bm{v}, \bm{W}_{:m})
= \frac{\partial }{\partial \bm{v}} \bigg( \frac{\partial l}{\partial \bm{W}_{:m}} \bigg)^\top 
= x_m\frac{\partial }{\partial \bm{v}} \delta \widetilde{\bm{V}} \bm{F}^{\prime}(\bm{y})\nn \\
& = x_m \bigg( 
\bigg( \frac{\partial \delta}{\partial \bm{v}} \bigg)  \widetilde{\bm{V}} \bm{F}^{\prime}(\bm{y}) +\delta\frac{\partial }{\partial \bm{v}}  \widetilde{\bm{V}} \bm{F}^{\prime}(\bm{y})
\bigg)\nn \\
& = x_m\bigg( 
s^\prime(z) \bm{h}(\bm{r})  \widetilde{\bm{V}} \bm{F}^{\prime}(\bm{y}) +\delta\mat{\bm{0}_N^\top \\ \bm{F}^{\prime}(\bm{y})}
\bigg) \in \mathbb{R}^{(N+1) \times N}. \nn 
 }
Therefore, this yields
\ali{
& \bm{H}_l(\bm{v}, \bm{w}) = \frac{\partial }{\partial \bm{v}} \bigg( \frac{\partial l}{\partial \bm{w}} \bigg)^\top\nn\\
 &= \mat{\frac{\partial }{\partial \bm{v}} \big(\frac{\partial l}{\partial \bm{W}_{:0}}\big)^\top & \cdots & 
 \frac{\partial }{\partial \bm{v}}\big(\frac{\partial l}{\partial \bm{W}_{:M}}\big)^\top } \nn \\
  &= \mat{\bm{H}_l(\bm{v}, \bm{W}_{:0}) & \cdots & \bm{H}_l(\bm{v}, \bm{W}_{:M}) } \nn\\
 &=\bm{h}(\bm{x})^\top \otimes  \bigg( 
s^\prime(z) \bm{h}(\bm{r})  \widetilde{\bm{V}} \bm{F}^{\prime}(\bm{y}) +\delta\mat{\bm{0}_N^\top \\ \bm{F}^{\prime}(\bm{y})}
\bigg), \nn \\
&\bm{H}_l(\bm{v}, \bm{w})  \in \mathbb{R}^{(N+1) \times (M+1)N}. \nn
}
From Eq.~\eqref{eq_h_vw_wv_top}, we have
\ali{
& \bm{H}_l(\bm{w}, \bm{v}) = \bm{H}_l(\bm{v}, \bm{w})^\top \nn \\
& =  \bm{h}(\bm{x}) \otimes   
\Big( s^\prime(z) \bm{F}^{\prime}(\bm{y}) \widetilde{\bm{V}}^\top \bm{h}(\bm{r})^\top+\delta \mat{\bm{0}_N & \bm{F}^{\prime}(\bm{y})} \Big), \nn \\
& \bm{H}_l(\bm{w}, \bm{v}) \in \mathbb{R}^{(M+1)N \times (N+1)}, \nn
}
where Eqs.~(4), (5), and (510) in \cite{petersenMatrix2012} have been applied.

\subsection{Proof for Eqs.~\eqref{eq_H_l_the2_known_all_ww} and \eqref{eq_H_l_the2_known_all_vv}}\label{sec_proof_trace}
Based on Eq.~\eqref{eq_H_lVV} and Eq.~(14) in \cite{petersenMatrix2012}, we obtain
\ali{
&\mathrm{tr}(\bm{H}_l(\bm{v}, \bm{v})) = s^\prime(z) \mathrm{tr}(\bm{h}(\bm{r}) \bm{h}(\bm{r})^\top)
=s^\prime(z) \mathrm{tr}(\bm{h}(\bm{r})^\top \bm{h}(\bm{r}) )\nn \\
&=s^\prime(z) \bm{h}(\bm{r})^\top \bm{h}(\bm{r}) = s^\prime(z)(1+\bm{r}^\top\bm{r}). \label{eq_h_tr_vv}
}
Employing Eq.~\eqref{eq_H_WW_proof} and Eq.~(515) in \cite{petersenMatrix2012} leads to
\ali{
&\mathrm{tr}(\bm{H}_l(\bm{w}, \bm{w})) = \mathrm{tr}(\bm{h}(\bm{x})\bm{h}(\bm{x})^\top \nn \\
&\otimes ( s^\prime(z)\bm{F}^{\prime}(\bm{y})\widetilde{\bm{V}}^\top \widetilde{\bm{V}} \bm{F}^{\prime}(\bm{y}) + \delta \mathrm{diag}(\widetilde{\bm{V}}^\top) \bm{F}^{\prime\prime}(\bm{y})) ) \nn\\
&= (1+\bm{x}^\top \bm{x})\mathrm{tr}(
s^\prime(z)\bm{F}^{\prime}(\bm{y})\widetilde{\bm{V}}^\top \widetilde{\bm{V}} \bm{F}^{\prime}(\bm{y}) + \delta \mathrm{diag}(\widetilde{\bm{V}}^\top) \bm{F}^{\prime\prime}(\bm{y})
) \nn \\
&= 
(1+\bm{x}^\top \bm{x}) ( s^\prime(z) \| \bm{F}^{\prime}(\bm{y}) \widetilde{\bm{V}}^\top\|^2 + \delta \widetilde{\bm{V}} \bm{f}^{\prime\prime}(\bm{y}) ).
 \label{eq_h_tr_ww}
}
Furthermore, from Eq.~(15) in~\cite{petersenMatrix2012}, the trace for all training data is expressed as
\ali{
\mathrm{tr}(\bm{H}_L(\bm{a}, \bm{a})) &= \sum_{i=1}^{I} \mathrm{tr}(\bm{H}_{l_i}(\bm{a}, \bm{a})), \ \bm{a} \in \{\bm{w}, \bm{v}\}. \label{eq_h_vv_ww}
}
Substituting
\ali{
(\bm{E}_{I} + \bm{X}^\top \bm{X} \odot \bm{E}_I)\bm{1}_I &=  
 \mat{
 1+\bm{x}_i^\top \bm{x}_i
 } \in \mathbb{R}^{I \times 1}, \nn \\
 (\bm{E}_{I} + \bm{R}^\top \bm{R} \odot \bm{E}_I)\bm{1}_I &=  
 \mat{
 1+\bm{r}_i^\top \bm{r}_i
 } \in \mathbb{R}^{I \times 1}, \nn \\
\bm{s}^\prime \odot \big( \bm{F}^{\prime\odot 2}_{\mathrm{y}} \widetilde{\bm{V}}^{\odot 2 \top} \big) 
&= \bm{s}^\prime \odot \big( (\bm{F}^\prime_\mathrm{y} \odot \bm{F}^\prime_\mathrm{y}) (\widetilde{\bm{V}} \odot \widetilde{\bm{V}})^\top \big)\nn \\
&= \mat{
s^\prime(z_i) \| \bm{F}^\prime(\bm{y}_i) \widetilde{\bm{V}}^\top\|^2 
} \in \mathbb{R}^{I \times 1}, \nn\\
\bm{\delta} \odot (\bm{F}^{\prime\prime}_{\mathrm{y}} \widetilde{\bm{V}}^\top) &= \mat{
\delta_i \bm{f}^{\prime\prime}(\bm{y}_i)^\top \widetilde{\bm{V}}^\top}\nn \\
&= \mat{
\delta_i \widetilde{\bm{V}} \bm{f}^{\prime\prime}(\bm{y}_i)
} \in \mathbb{R}^{I \times 1} \nn
}
into Eqs.~\eqref{eq_H_l_the2_known_all_ww} and \eqref{eq_H_l_the2_known_all_vv} and simplifying the terms shows that Eq.~\eqref{eq_h_vv_ww} and Eqs.~\eqref{eq_H_l_the2_known_all_ww}--\eqref{eq_H_l_the2_known_all_vv} are equivalent.

\subsection{Proof for Eq.~\eqref{eq_tr_ub_vv}}\label{sec_proof_tr_ub_vv}
From Eq.~\eqref{eq_h_tr_vv}, we obtain
$\mathrm{tr}(\bm{H}_{l}(\bm{v}, \bm{v})) = s^\prime(z)(1+\bm{r}^\top\bm{r})$.
Since $s(z)$ is a function that outputs the estimated probability in $(0, 1)$ according to Eq.~\eqref{eq_3nn}, it follows that
\ali{
\max_{z \in \mathbb{R}} s^\prime(z) = \max_{p \in (0, 1)} p(1-p) = \frac{1}{4}. \label{eq_pz_1st_up}
}
It holds from Eqs.~\eqref{eq_s_lin_max}, \eqref{eq_s_sig_max}, \eqref{eq_s_tanh_max}, \eqref{eq_s_srelu_max}, and \eqref{eq_s_gelu_max} that
\ali{
&\sup_{\bm{r}} \ (1+\bm{r}^\top\bm{r}) = 
\sup_{\bm{y} \in \mathbb{R}^N} (1+\bm{f}(\bm{y})^\top\bm{f}(\bm{y}))\nn\\
&=
\begin{cases}
N+1, f: \mathrm{Sigmoid/Tanh} \\
\infty, f: \mathrm{Linear/SmoothReLU/GELU} 
\end{cases}. \nn
}
Therefore, we have
\ali{
\mathrm{tr}(\bm{H}_{l}(\bm{v}, \bm{v})) < 
\begin{cases}
\frac{1}{4} (N+1), f: \mathrm{Sigmoid/Tanh} \\
\infty, f: \mathrm{Linear/SmoothReLU/GELU} 
\end{cases}. \nn
}
Substituting this into Eq.~\eqref{eq_h_vv_ww} yields Eq.~\eqref{eq_tr_ub_vv}.

\subsection{Proof for Eq.~\eqref{eq_tr_ub_ww}}\label{sec_proof_tr_ub_ww}
Based on Eq.~\eqref{eq_h_tr_ww}, we independently derive the upper bounds for $s^\prime(z) \| \bm{F}^{\prime}(\bm{y}) \widetilde{\bm{V}}^\top\|^2$ and $\delta \widetilde{\bm{V}} \bm{f}^{\prime\prime}(\bm{y})$.
From Eqs.~\eqref{eq_s_lin_1st_max}, \eqref{eq_sigm_ran_1}, \eqref{eq_tanh_1st_range}, \eqref{eq_srelu_1st_deri_max}, \eqref{eq_GELU_1st_deri_max}, and \eqref{eq_pz_1st_up}, we obtain
\ali{
&\sup_{z \in \mathbb{R}, \ \bm{y} \in \mathbb{R}^N} s^\prime(z) \| \bm{F}^{\prime}(\bm{y}) \widetilde{\bm{V}}^\top\|^2
= \frac{1}{4} \zeta_1 \| \widetilde{\bm{V}}^\top \|^2 = \zeta_1 \| \widetilde{\bm{V}} \|^2_\mathrm{F}, \nn \\
&\zeta_1 = \begin{cases}
1, & f: \mathrm{Linear/Tanh/SmoothReLU}\\
\frac{1}{16}, & f: \mathrm{Sigmoid}\\
1.272..., & f: \mathrm{GELU}
\end{cases} \nn
}
where we have used $(1.128...)^2 = 1.272...$ for the GELU case.
Combining Eqs.~\eqref{eq_s_lin_2nd_max}, \eqref{eq_2nd_sigm_range}, \eqref{eq_tanh_2nd_deri_range}, \eqref{eq_srelu_2nd_deri_max}, \eqref{eq_GELU_2nd_deri_max}, and \eqref{eq_delta_def} yields
\ali{
&\sup_{\delta \in (-1, 1), \ \bm{y} \in \mathbb{R}^N} \delta \widetilde{\bm{V}} \bm{f}^{\prime\prime}(\bm{y})
= \zeta_2 | \widetilde{\bm{V}}\bm{1}_N | \nn, \\
&\zeta_2 = \begin{cases}
0, & f: \mathrm{Linear}\\
\frac{\sqrt{3}}{18} = 0.096..., & f: \mathrm{Sigmoid}\\
\frac{4\sqrt{3}}{9} = 0.769..., & f: \mathrm{Tanh}\\
\frac{1}{4}, & f: \mathrm{SmoothReLU} \\
\sqrt{\frac{2}{\pi}} = 0.797..., & f: \mathrm{GELU}
\end{cases}.\nn
}
Therefore, we have
\ali{
\mathrm{tr}(\bm{H}_l(\bm{w}, \bm{w})) \leq 
 \bigg( \frac{1}{4} \zeta_1 \| \widetilde{\bm{V}} \|^2_\mathrm{F} + \zeta_2 | \widetilde{\bm{V}}\bm{1}_N | \bigg) (1+\bm{x}^\top \bm{x}).\nn
}
Consequently, we obtain
\ali{
&\mathrm{tr}(\bm{H}_L(\bm{w}, \bm{w})) \leq 
 \bigg( \frac{1}{4} \zeta_1 \| \widetilde{\bm{V}} \|^2_\mathrm{F} + \zeta_2 | \widetilde{\bm{V}}\bm{1}_N | \bigg) \sum_{i=1}^I (1+\bm{x}_i^\top \bm{x}_i)\nn \\
 &= 
 \bigg( \frac{1}{4} \zeta_1 \| \widetilde{\bm{V}} \|^2_\mathrm{F} + \zeta_2 | \widetilde{\bm{V}}\bm{1}_N | \bigg) \bm{1}_I^\top (\bm{E}_I + \bm{X}^\top \bm{X} \odot \bm{E}_I) \bm{1}_I. \nn
}

\subsection{Proof for Eq.~\eqref{eq_max_largex}}\label{proof_max_largex}
When each component of $\bm{x}$ is normalized to a maximum of $1$, the relation
\ali{
& \max_{\bm{X} \in [0, 1]^{M \times I}} \bm{1}_I^\top (\bm{E}_I + \bm{X}^\top \bm{X} \odot \bm{E}_I) \bm{1}_I\nn\\
&= \bm{1}_I^\top (\bm{E}_I + ((\bm{1}_M \bm{1}_I^\top)^\top \bm{1}_M \bm{1}_I^\top) \odot \bm{E}_I) \bm{1}_I\nn\\
&= \bm{1}_I^\top (\bm{E}_I + (\bm{1}_I \bm{1}_M^\top \bm{1}_M \bm{1}_I^\top) \odot \bm{E}_I) \bm{1}_I\nn\\
&= \bm{1}_I^\top (\bm{E}_I + M(\bm{1}_I \bm{1}_I^\top) \odot \bm{E}_I) \bm{1}_I\nn\\
&= (1+M) \bm{1}_I^\top \bm{E}_I \bm{1}_I = I(1+M) \nn
}
holds, where we have used the fact that the expression is maximized when $\bm{X} = \bm{1}_M \bm{1}_I^\top$.

\subsection{Proof for Eq.~\eqref{eq_h2_tr_fin}: Step 1}\label{sec_proof_h2_prop_step1}
In this section, we derive $\mathrm{tr} ( \bm{H}_L(\bm{\theta}, \bm{\theta})^2 )$.
The following lemma holds.
\begin{lemma}
\ali{
\mathrm{tr} ( \bm{H}_L(\bm{\theta}, \bm{\theta})^2 ) &= \sum_{i=1}^I  \sum_{j=1}^I \mathrm{tr}( \bm{H}_{l_i}(\bm{w}, \bm{w}) \bm{H}_{l_j}(\bm{w}, \bm{w})) \nn\\
&+2\sum_{i=1}^I  \sum_{j=1}^I \mathrm{tr}( \bm{H}_{l_i}(\bm{w}, \bm{v}) \bm{H}_{l_j}(\bm{v}, \bm{w}) ) \nn \\
&+ \sum_{i=1}^I  \sum_{j=1}^I \mathrm{tr} (\bm{H}_{l_i}(\bm{v}, \bm{v}) \bm{H}_{l_j}(\bm{v}, \bm{v})). \label{eq_sum_hh}
}
\end{lemma}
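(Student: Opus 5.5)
The plan is to expand the square of the $2\times 2$ block form of $\bm{H}_L(\bm{\theta},\bm{\theta})$ and then distribute the sums over data points. Abbreviate the blocks as $\bm{A}=\bm{H}_L(\bm{w},\bm{w})$, $\bm{B}=\bm{H}_L(\bm{w},\bm{v})$, $\bm{C}=\bm{H}_L(\bm{v},\bm{w})$, $\bm{D}=\bm{H}_L(\bm{v},\bm{v})$.

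\textbf{Step 1: block multiplication.} We have
\begin{align}
\bm{H}_L(\bm{\theta},\bm{\theta})^2=\mat{\bm{A}^2+\bm{B}\bm{C} & \bm{A}\bm{B}+\bm{B}\bm{D}\\ \bm{C}\bm{A}+\bm{D}\bm{C} & \bm{C}\bm{B}+\bm{D}^2}. \nn
\end{align}
Taking the trace keeps only the diagonal blocks, which gives $\mathrm{tr}(\bm{A}^2)+\mathrm{tr}(\bm{B}\bm{C})+\mathrm{tr}(\bm{C}\bm{B})+\mathrm{tr}(\bm{D}^2)$. By the cyclic property of the trace (valid for the rectangular products here), $\mathrm{tr}(\bm{C}\bm{B})=\mathrm{tr}(\bm{B}\bm{C})$. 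The two cross terms therefore combine into $2\,\mathrm{tr}(\bm{B}\bm{C})$, which is where the factor $2$ in Eq.~\eqref{eq_sum_hh} comes from.

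\textbf{Step 2: distributing the sums.} Each block is a sum over data points, $\bm{A}=\sum_i\bm{H}_{l_i}(\bm{w},\bm{w})$, and similarly for $\bm{B},\bm{C},\bm{D}$, by Eq.~\eqref{eq_hessian_L} and the block decomposition of $\bm{H}_L$ given after Eq.~\eqref{eq_wv}. Hence
\begin{align}
\bm{A}^2=\sum_i\sum_j\bm{H}_{l_i}(\bm{w},\bm{w})\bm{H}_{l_j}(\bm{w},\bm{w}),\qquad \bm{B}\bm{C}=\sum_i\sum_j\bm{H}_{l_i}(\bm{w},\bm{v})\bm{H}_{l_j}(\bm{v},\bm{w}), \nn
\end{align}
and likewise for $\bm{D}^2$. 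Linearity of the trace then lets us move $\mathrm{tr}$ inside the double sums, which yields the three terms of Eq.~\eqref{eq_sum_hh}.

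\textbf{Possible obstacles.} There is no serious one. The only points to check are that the block dimensions make $\bm{B}\bm{C}$ and $\bm{C}\bm{B}$ well defined, which they are: the products are $(M+1)N\times(M+1)N$ and $(N+1)\times(N+1)$ respectively. Lemma~\eqref{eq_h_vw_wv_top}, $\bm{C}=\bm{B}^\top$, is not needed, although it gives an alternative route, since $\mathrm{tr}(\bm{B}\bm{B}^\top)$ is symmetric in the two cross blocks.
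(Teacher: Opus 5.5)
Your proof is correct. It differs from the paper's only in the order of operations, but that changes which ingredients are needed.

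The paper first writes $\bm{H}_L(\bm{\theta},\bm{\theta})^2=\sum_{i}\sum_{j}\bm{H}_{l_i}(\bm{\theta},\bm{\theta})\bm{H}_{l_j}(\bm{\theta},\bm{\theta})$ and reads off the diagonal blocks of each product $\bm{H}_{l_i}\bm{H}_{l_j}$. It then identifies, for each fixed pair $(i,j)$, the $(2,2)$ cross term $\mathrm{tr}(\bm{H}_{l_i}(\bm{v},\bm{w})\bm{H}_{l_j}(\bm{w},\bm{v}))$ with the $(1,1)$ cross term $\mathrm{tr}(\bm{H}_{l_i}(\bm{w},\bm{v})\bm{H}_{l_j}(\bm{v},\bm{w}))$. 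That step needs both cyclicity and the symmetry lemma $\bm{H}_l(\bm{v},\bm{w})^\top=\bm{H}_l(\bm{w},\bm{v})$. Cyclicity alone would only turn the $(i,j)$ cross term of the $(2,2)$ block into the $(j,i)$ cross term of the $(1,1)$ block.

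You apply cyclicity to the aggregated blocks, $\mathrm{tr}(\bm{C}\bm{B})=\mathrm{tr}(\bm{B}\bm{C})$, before distributing the sums. The re-indexing $i\leftrightarrow j$ is then absorbed into the double sum, so your argument uses no symmetry of the Hessian at all; it holds for any square $2\times 2$ block matrix.

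The paper's route yields a termwise identity that is slightly stronger than the lemma. That extra strength is not used later: $\psi_{ij}$ is computed directly from $\mathrm{tr}(\bm{H}_{l_i}(\bm{w},\bm{v})\bm{H}_{l_j}(\bm{v},\bm{w}))$ and substituted into the double sum. Your route is the more economical proof of exactly what is stated.
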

\begin{proof}
From Eq.~\eqref{eq_hessian_L}, the squared Hessian is given by
\ali{
 \bm{H}_L(\bm{\theta}, \bm{\theta})^2 &= \Bigg( \sum_{i=1}^I  \bm{H}_{l_i}(\bm{\theta}, \bm{\theta}) \Bigg)^2 \nn \\
 &= \sum_{i=1}^I  \sum_{j=1}^I \bm{H}_{l_i}(\bm{\theta}, \bm{\theta}) \bm{H}_{l_j}(\bm{\theta}, \bm{\theta}) \in \mathbb{R}^{D \times D}. \nn
}
From Eq.~\eqref{eq_H_th2}, since $\bm{H}_{l}(\bm{\theta}, \bm{\theta})$ is a $2\times 2$ block matrix, $\bm{H}_{l_i}(\bm{\theta}, \bm{\theta}) \bm{H}_{l_j}(\bm{\theta}, \bm{\theta})$ is also a $2\times 2$ block matrix.
If we denote the diagonal blocks as $(\bm{H}_{l_i}(\bm{\theta}, \bm{\theta}) \bm{H}_{l_j}(\bm{\theta}, \bm{\theta}))_{11}$ and $(\bm{H}_{l_i}(\bm{\theta}, \bm{\theta}) \bm{H}_{l_j}(\bm{\theta}, \bm{\theta}))_{22}$, the trace of the squared Hessian can be written, according to Eq.~15 of~\cite{petersenMatrix2012}, as
\ali{
\mathrm{tr}( \bm{H}_L(\bm{\theta}, \bm{\theta})^2 ) 
& = \sum_{i=1}^I  \sum_{j=1}^I \mathrm{tr}( (\bm{H}_{l_i}(\bm{\theta}, \bm{\theta}) \bm{H}_{l_j}(\bm{\theta}, \bm{\theta}))_{11} )  \nn \\
& + \sum_{i=1}^I  \sum_{j=1}^I \mathrm{tr}((\bm{H}_{l_i}(\bm{\theta}, \bm{\theta}) \bm{H}_{l_j}(\bm{\theta}, \bm{\theta}))_{22}). \label{eq_h2_tr_proof_sec}
}
From Eq.~\eqref{eq_H_th2}, the block components related to the trace can be written as
\ali{
&(\bm{H}_{l_i}(\bm{\theta}, \bm{\theta}) \bm{H}_{l_j}(\bm{\theta}, \bm{\theta}))_{11}  \nn \\
&=\bm{H}_{l_i}(\bm{w}, \bm{w}) \bm{H}_{l_j}(\bm{w}, \bm{w})+\bm{H}_{l_i}(\bm{w}, \bm{v}) \bm{H}_{l_j}(\bm{v}, \bm{w}), \nn\\
&(\bm{H}_{l_i}(\bm{\theta}, \bm{\theta}) \bm{H}_{l_j}(\bm{\theta}, \bm{\theta}))_{22} \nn \\
&= \bm{H}_{l_i}(\bm{v}, \bm{w}) \bm{H}_{l_j}(\bm{w}, \bm{v})+\bm{H}_{l_i}(\bm{v}, \bm{v}) \bm{H}_{l_j}(\bm{v}, \bm{v}). \nn
}
Their sizes are given by
\ali{
&(\bm{H}_{l_i}(\bm{\theta}, \bm{\theta}) \bm{H}_{l_j}(\bm{\theta}, \bm{\theta}))_{11} 
\in \mathbb{R}^{(M+1)N \times (M+1)N}, \nn \\ 
&(\bm{H}_{l_i}(\bm{\theta}, \bm{\theta}) \bm{H}_{l_j}(\bm{\theta}, \bm{\theta}))_{22} 
\in \mathbb{R}^{(N+1) \times (N+1)}. \nn
}
Furthermore, from Eq.~\eqref{eq_h_vw_wv_top} and Eqs.~(13) and (14) of~\cite{petersenMatrix2012}, we have
\ali{
\mathrm{tr} ( \bm{H}_{l_i}(\bm{v}, \bm{w}) \bm{H}_{l_j}(\bm{w}, \bm{v}) ) 
&= \mathrm{tr} ( \bm{H}_{l_j}(\bm{w}, \bm{v}) \bm{H}_{l_i}(\bm{v}, \bm{w}) )\nn \\
&=\mathrm{tr} ( \bm{H}_{l_j}(\bm{v}, \bm{w})^\top \bm{H}_{l_i}(\bm{w}, \bm{v})^\top )\nn \\
&=\mathrm{tr} (  ( \bm{H}_{l_i}(\bm{w}, \bm{v}) \bm{H}_{l_j}(\bm{v}, \bm{w}))^\top )\nn \\
&=\mathrm{tr} (  \bm{H}_{l_i}(\bm{w}, \bm{v}) \bm{H}_{l_j}(\bm{v}, \bm{w}) ). \nn
}
Thus, Eq.~\eqref{eq_h2_tr_proof_sec} can be expressed as Eq.~\eqref{eq_sum_hh}.
\end{proof}
In order to find the trace of the squared Hessian, it is sufficient to find
\ali{
&\mathrm{tr}( \bm{H}_{l_i}(\bm{w}, \bm{w}) \bm{H}_{l_j}(\bm{w}, \bm{w})), \label{eq_hh_wwww_ap} \\
&\mathrm{tr}( \bm{H}_{l_i}(\bm{w}, \bm{v}) \bm{H}_{l_j}(\bm{v}, \bm{w})), \label{eq_hh_wvvw_ap} \\ 
&\mathrm{tr}(\bm{H}_{l_i}(\bm{v}, \bm{v}) \bm{H}_{l_j}(\bm{v}, \bm{v})). \label{eq_hh_vvvv_ap}
}
In this equation, the trace of a Kronecker product appears.
This can be processed more efficiently by the following.
\begin{lemma}
\ali{
\mathrm{tr}((\bm{A}_1 \otimes \bm{A}_2)(\bm{B}_1 \otimes \bm{B}_2)) = \mathrm{tr}(\bm{A}_1 \bm{B}_1) \mathrm{tr}(\bm{A}_2 \bm{B}_2) \label{eq_krock_trace}
}
\end{lemma}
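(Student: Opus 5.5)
The plan is to reduce the identity to the mixed-product property of the Kronecker product followed by the multiplicativity of the trace over Kronecker products. We tacitly assume the dimensions are compatible: $\bm{A}_1, \bm{B}_1$ such that $\bm{A}_1\bm{B}_1$ is square, and $\bm{A}_2, \bm{B}_2$ such that $\bm{A}_2\bm{B}_2$ is square. This is exactly the situation in Eqs.~\eqref{eq_hh_wwww_ap}--\eqref{eq_hh_vvvv_ap}, where, for instance, $\bm{h}(\bm{x}_i)\bm{h}(\bm{x}_i)^\top$ is paired with $\bm{h}(\bm{x}_j)\bm{h}(\bm{x}_j)^\top$, or $\bm{h}(\bm{x}_i)$ is paired with $\bm{h}(\bm{x}_j)^\top$.

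First, we invoke the mixed-product rule $(\bm{A}_1 \otimes \bm{A}_2)(\bm{B}_1 \otimes \bm{B}_2) = (\bm{A}_1\bm{B}_1) \otimes (\bm{A}_2\bm{B}_2)$, e.g. Eq.~(511) in \cite{petersenMatrix2012}. To keep the proof self-contained, we verify it blockwise. The $(a,b)$ block of $\bm{A}_1 \otimes \bm{A}_2$ is $(\bm{A}_1)_{ab}\bm{A}_2$. The $(a,c)$ block of the product is therefore
\[
\sum_b (\bm{A}_1)_{ab}(\bm{B}_1)_{bc}\bm{A}_2\bm{B}_2 = (\bm{A}_1\bm{B}_1)_{ac}\,\bm{A}_2\bm{B}_2,
\]
which is precisely the $(a,c)$ block of $(\bm{A}_1\bm{B}_1)\otimes(\bm{A}_2\bm{B}_2)$.

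Second, we apply $\mathrm{tr}(\bm{C}_1 \otimes \bm{C}_2) = \mathrm{tr}(\bm{C}_1)\,\mathrm{tr}(\bm{C}_2)$ for square $\bm{C}_1, \bm{C}_2$, which is Eq.~(515) in \cite{petersenMatrix2012} and was already used in Appendix~\ref{sec_proof_trace}. This holds because the diagonal blocks of $\bm{C}_1 \otimes \bm{C}_2$ are $(\bm{C}_1)_{aa}\bm{C}_2$, so the trace is $\sum_a (\bm{C}_1)_{aa}\,\mathrm{tr}(\bm{C}_2)$. Setting $\bm{C}_1=\bm{A}_1\bm{B}_1$ and $\bm{C}_2=\bm{A}_2\bm{B}_2$ yields Eq.~\eqref{eq_krock_trace}.

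There is no genuine obstacle here. The only point needing care is the dimension bookkeeping: $\bm{A}_1\bm{B}_1$ and $\bm{A}_2\bm{B}_2$ must each be square for their traces to be defined. In the later applications, the Kronecker factors can be rectangular (row or column vectors such as $\bm{h}(\bm{x})^\top$ in Eq.~\eqref{eq_vw}), so we will state that only the products, not the individual factors, need to be square.
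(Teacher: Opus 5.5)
Your proof is correct and follows the same route as the paper: the mixed-product rule (Eq.~(511) of \cite{petersenMatrix2012}), then $\mathrm{tr}(\bm{C}_1\otimes\bm{C}_2)=\mathrm{tr}(\bm{C}_1)\,\mathrm{tr}(\bm{C}_2)$ (Eq.~(515)) applied with $\bm{C}_1=\bm{A}_1\bm{B}_1$ and $\bm{C}_2=\bm{A}_2\bm{B}_2$. Your blockwise verifications, and your remark that only the products $\bm{A}_1\bm{B}_1$ and $\bm{A}_2\bm{B}_2$ need to be square (relevant for the rectangular factors in Eqs.~\eqref{eq_vw} and \eqref{eq_wv}), are useful additions that the paper leaves implicit.
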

\begin{proof}
From Eq.~(511) of \cite{petersenMatrix2012}, $(\bm{A}_1\otimes \bm{A}_2)(\bm{B}_1 \otimes \bm{B}_2) = (\bm{A}_1 \bm{B}_1) \otimes (\bm{A}_2 \bm{B}_2)$ holds.
From Eq.~(515) of \cite{petersenMatrix2012}, it holds that $\mathrm{tr}(\bm{C}_1 \otimes \bm{C}_2) = \mathrm{tr}(\bm{C}_1) \mathrm{tr}(\bm{C}_2)$.
Therefore, we obtain Eq.~\eqref{eq_krock_trace}.
\end{proof}

\subsection{Proof for Eq.~\eqref{eq_h2_tr_fin}: Step 2}\label{sec_proof_h2_prop_step2}
Eq.~\eqref{eq_hh_wwww_ap} is given as follows.
\begin{lemma}
\ali{
&\mathrm{tr} ( \bm{H}_{l_i}(\bm{w}, \bm{w}) \bm{H}_{l_j}(\bm{w}, \bm{w}) ) \nn \\
&= (1+\bm{x}_i^\top \bm{x}_j)^2
\Big( s^\prime (z_i) s^\prime (z_j) (\widetilde{\bm{V}} \bm{F}^\prime(\bm{y}_j) \bm{F}^\prime(\bm{y}_i) \widetilde{\bm{V}}^\top)^2 \nn \\
&+s^\prime (z_i) \delta_j \widetilde{\bm{V}}  \bm{F}^\prime(\bm{y}_i) \mathrm{diag}(\widetilde{\bm{V}}^\top)\bm{F}^{\prime\prime}(\bm{y}_j) \bm{F}^\prime(\bm{y}_i) \widetilde{\bm{V}}^\top\nn \\
&+s^\prime (z_j) \delta_i \widetilde{\bm{V}} \bm{F}^\prime(\bm{y}_j) \mathrm{diag}(\widetilde{\bm{V}}^\top)\bm{F}^{\prime\prime}(\bm{y}_i) \bm{F}^\prime(\bm{y}_j) \widetilde{\bm{V}}^\top \nn \\
&+\delta_i \delta_j \widetilde{\bm{V}} \bm{F}^{\prime\prime}(\bm{y}_i) \bm{F}^{\prime\prime}(\bm{y}_j) \widetilde{\bm{V}}^\top\Big). \label{eq_wwww}
}
\end{lemma}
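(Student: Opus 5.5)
We plan to start from the Kronecker form of Eq.~\eqref{eq_ww}, writing $\bm{H}_{l_i}(\bm{w}, \bm{w}) = \bm{h}(\bm{x}_i)\bm{h}(\bm{x}_i)^\top \otimes \bm{K}_i$ with
\begin{align}
\bm{K}_i = s^\prime(z_i)\bm{F}^\prime(\bm{y}_i)\widetilde{\bm{V}}^\top \widetilde{\bm{V}} \bm{F}^\prime(\bm{y}_i) + \delta_i \mathrm{diag}(\widetilde{\bm{V}}^\top) \bm{F}^{\prime\prime}(\bm{y}_i) \in \mathbb{R}^{N \times N}, \nn
\end{align}
and analogously for $j$. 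Applying the trace lemma Eq.~\eqref{eq_krock_trace} splits the quantity into $\mathrm{tr}(\bm{h}(\bm{x}_i)\bm{h}(\bm{x}_i)^\top \bm{h}(\bm{x}_j)\bm{h}(\bm{x}_j)^\top)\,\mathrm{tr}(\bm{K}_i \bm{K}_j)$. For the first factor, the cyclic property gives $(\bm{h}(\bm{x}_i)^\top \bm{h}(\bm{x}_j))^2$. Since $\bm{h}$ prepends a $1$, this equals $(1+\bm{x}_i^\top\bm{x}_j)^2$, which yields the prefactor.

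Next we expand $\mathrm{tr}(\bm{K}_i\bm{K}_j)$ bilinearly into four terms, with coefficients $s^\prime(z_i)s^\prime(z_j)$, $s^\prime(z_i)\delta_j$, $\delta_i s^\prime(z_j)$ and $\delta_i\delta_j$. Each term is handled by the cyclic property of the trace, together with the fact that diagonal matrices commute and that $\bm{F}^\prime$, $\bm{F}^{\prime\prime}$ and $\mathrm{diag}(\widetilde{\bm{V}}^\top)$ are all diagonal.
\begin{itemize}
\item \emph{First term.} $\mathrm{tr}(\bm{F}^\prime_i\widetilde{\bm{V}}^\top\widetilde{\bm{V}}\bm{F}^\prime_i\bm{F}^\prime_j\widetilde{\bm{V}}^\top\widetilde{\bm{V}}\bm{F}^\prime_j)$ is the trace of a product of two rank-one pieces. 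It collapses to $(\widetilde{\bm{V}}\bm{F}^\prime_i\bm{F}^\prime_j\widetilde{\bm{V}}^\top)^2$, and the diagonal factors may be written in the order $\bm{F}^\prime(\bm{y}_j)\bm{F}^\prime(\bm{y}_i)$ as stated.
\item \emph{Second term.} $\mathrm{tr}(\bm{F}^\prime_i\widetilde{\bm{V}}^\top\widetilde{\bm{V}}\bm{F}^\prime_i\,\mathrm{diag}(\widetilde{\bm{V}}^\top)\bm{F}^{\prime\prime}_j)$ becomes, after cycling, the scalar $\widetilde{\bm{V}}\bm{F}^\prime_i\,\mathrm{diag}(\widetilde{\bm{V}}^\top)\bm{F}^{\prime\prime}_j\bm{F}^\prime_i\widetilde{\bm{V}}^\top$. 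The diagonal factors commute, so they can be arranged in the stated order.
\item \emph{Third term.} This is the same computation as the second term with $i$ and $j$ interchanged.
\end{itemize}

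The main obstacle is the last term, $\mathrm{tr}(\mathrm{diag}(\widetilde{\bm{V}}^\top)\bm{F}^{\prime\prime}_i\,\mathrm{diag}(\widetilde{\bm{V}}^\top)\bm{F}^{\prime\prime}_j)$. Since all four factors are diagonal, this trace equals $\sum_n v_n^2 f^{\prime\prime}(y_{in})f^{\prime\prime}(y_{jn})$. We then recognise this sum as the quadratic form $\widetilde{\bm{V}}\bm{F}^{\prime\prime}(\bm{y}_i)\bm{F}^{\prime\prime}(\bm{y}_j)\widetilde{\bm{V}}^\top$, using $\mathrm{tr}(\mathrm{diag}(\bm{a})\bm{D}\,\mathrm{diag}(\bm{a})) = \bm{a}^\top\bm{D}\bm{a}$ for diagonal $\bm{D}$.

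Collecting the four terms and multiplying by $(1+\bm{x}_i^\top\bm{x}_j)^2$ gives Eq.~\eqref{eq_wwww}.
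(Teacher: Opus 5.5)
Your proposal is correct and matches the paper's proof step for step. It uses the Kronecker trace factorization of Eq.~\eqref{eq_krock_trace} to get the $(1+\bm{x}_i^\top\bm{x}_j)^2$ prefactor, then expands the $N\times N$ factor into four terms handled by cyclicity and commuting diagonal matrices, and evaluates the last term with the same diagonal-trace identity the paper uses.
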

\begin{proof}
Using Eq.~\eqref{eq_ww}, we have
\ali{
\bm{H}_{l_i}(\bm{w}, \bm{w}) = \bm{A}_1 \otimes \bm{A}_2, \ 
\bm{H}_{l_j}(\bm{w}, \bm{w}) = \bm{B}_1 \otimes \bm{B}_2 \nn
}
and thus, from Eq.~\eqref{eq_krock_trace}, it can be written as
\ali{
&\mathrm{tr} ( \bm{H}_{l_i}(\bm{w}, \bm{w}) \bm{H}_{l_j}(\bm{w}, \bm{w})) = \mathrm{tr}(\bm{A}_1 \bm{B}_1) \mathrm{tr}(\bm{A}_2 \bm{B}_2). \label{eq_wwww_AB}
}
By using Eq.~(16) of~\cite{petersenMatrix2012}, we obtain
\ali{
\mathrm{tr}(\bm{A}_1 \bm{B}_1) &= \mathrm{tr}(\bm{h}(\bm{x}_i)\bm{h}(\bm{x}_i)^\top \bm{h}(\bm{x}_j)\bm{h}(\bm{x}_j)^\top) \nn \\
&= \mathrm{tr}( \bm{h}(\bm{x}_j)^\top \bm{h}(\bm{x}_i) \bm{h}(\bm{x}_i)^\top \bm{h}(\bm{x}_j))  \nn \\
&= (\bm{h}(\bm{x}_i)^\top \bm{h}(\bm{x}_j))^2 = (1+\bm{x}_i^\top \bm{x}_j)^2. \label{eq_wwww_A1B1} 
}
Since $\bm{A}_2 \bm{B}_2$ is
\ali{
&\bm{A}_2 \bm{B}_2 \nn \\
&= s^\prime (z_i) s^\prime (z_j)\bm{F}^\prime(\bm{y}_i) \widetilde{\bm{V}}^\top \widetilde{\bm{V}} \bm{F}^\prime(\bm{y}_i) \bm{F}^\prime(\bm{y}_j) \widetilde{\bm{V}}^\top \widetilde{\bm{V}} \bm{F}^\prime(\bm{y}_j)\nn \\
&+s^\prime (z_i) \delta_j \bm{F}^\prime(\bm{y}_i) \widetilde{\bm{V}}^\top \widetilde{\bm{V}} \bm{F}^\prime(\bm{y}_i) \mathrm{diag}(\widetilde{\bm{V}}^\top) \bm{F}^{\prime\prime}(\bm{y}_j)\nn \\
&+s^\prime (z_j) \delta_i \mathrm{diag}(\widetilde{\bm{V}}^\top) \bm{F}^{\prime\prime}(\bm{y}_i)
\bm{F}^\prime(\bm{y}_j) \widetilde{\bm{V}}^\top \widetilde{\bm{V}} \bm{F}^\prime(\bm{y}_j) \nn \\
&+\delta_i \delta_j \mathrm{diag}(\widetilde{\bm{V}}^\top) \bm{F}^{\prime\prime}(\bm{y}_i) \mathrm{diag}(\widetilde{\bm{V}}^\top)\bm{F}^{\prime\prime}(\bm{y}_j), \nn
}
we focus on the traces of the individual terms.
These are shown by using the cyclic property of the trace.
By applying the cyclic property of the trace twice in the first term and noting that the products of diagonal matrices are commutative, the trace becomes a scalar.
As a result, the trace of the first term becomes
\ali{
&\mathrm{tr}([\bm{A}_2 \bm{B}_2]_1)\nn \\
&=s^\prime (z_i) s^\prime (z_j)\mathrm{tr}( \bm{F}^\prime(\bm{y}_i) \widetilde{\bm{V}}^\top \widetilde{\bm{V}} \bm{F}^\prime(\bm{y}_i) \bm{F}^\prime(\bm{y}_j) \widetilde{\bm{V}}^\top \widetilde{\bm{V}} \bm{F}^\prime(\bm{y}_j) )\nn \\
&=s^\prime (z_i) s^\prime (z_j)\mathrm{tr}( \widetilde{\bm{V}} \bm{F}^\prime(\bm{y}_j) \bm{F}^\prime(\bm{y}_i) \widetilde{\bm{V}}^\top \widetilde{\bm{V}} \bm{F}^\prime(\bm{y}_i) \bm{F}^\prime(\bm{y}_j) \widetilde{\bm{V}}^\top )\nn \\
&=s^\prime (z_i) s^\prime (z_j) (\widetilde{\bm{V}} \bm{F}^\prime(\bm{y}_j) \bm{F}^\prime(\bm{y}_i) \widetilde{\bm{V}}^\top)^2. \label{eq_wwww_A2B2T1}
}
Here, we have used the commutativity of diagonal matrices, i.e., $\bm{F}^\prime(\bm{y}_i) \bm{F}^\prime(\bm{y}_j) = \bm{F}^\prime(\bm{y}_j) \bm{F}^\prime(\bm{y}_i)$.
Next, we find $\mathrm{tr}([\bm{A}_2 \bm{B}_2]_2)$.
By cyclically shifting the expression inside the trace in the second term four times to the right, it becomes a scalar.
By doing so, we obtain
\ali{
&\mathrm{tr}([\bm{A}_2 \bm{B}_2]_2)\nn \\
&=s^\prime (z_i) \delta_j \mathrm{tr}( \bm{F}^\prime(\bm{y}_i) \widetilde{\bm{V}}^\top \widetilde{\bm{V}} \bm{F}^\prime(\bm{y}_i)
\mathrm{diag}(\widetilde{\bm{V}}^\top)\bm{F}^{\prime\prime}(\bm{y}_j) )\nn \\
&=s^\prime (z_i) \delta_j \widetilde{\bm{V}}  \bm{F}^\prime(\bm{y}_i) \mathrm{diag}(\widetilde{\bm{V}}^\top) \bm{F}^{\prime\prime}(\bm{y}_j) \bm{F}^\prime(\bm{y}_i) \widetilde{\bm{V}}^\top. \label{eq_wwww_A2B2T2}
}
Next, we evaluate $\mathrm{tr}([\bm{A}_2 \bm{B}_2]_3)$.
Since the argument of the trace in the third term becomes a scalar by cyclically shifting it twice to the right, we obtain
\ali{
&\mathrm{tr}([\bm{A}_2 \bm{B}_2]_3) \nn \\
&=s^\prime (z_j) \delta_i \mathrm{tr}( \mathrm{diag}(\widetilde{\bm{V}}^\top)\bm{F}^{\prime\prime}(\bm{y}_i)
\bm{F}^\prime(\bm{y}_j) \widetilde{\bm{V}}^\top \widetilde{\bm{V}} \bm{F}^\prime(\bm{y}_j)
)\nn \\
&=s^\prime (z_j) \delta_i \widetilde{\bm{V}} \bm{F}^\prime(\bm{y}_j) \mathrm{diag}(\widetilde{\bm{V}}^\top) \bm{F}^{\prime\prime}(\bm{y}_i) \bm{F}^\prime(\bm{y}_j) \widetilde{\bm{V}}^\top. \label{eq_wwww_A2B2T3}
}
Next, we evaluate $\mathrm{tr}([\bm{A}_2 \bm{B}_2]_4)$.
We use the fact that for real-valued vectors $\bm{a}_1$ and $\bm{a}_2$ of the same dimension,
\ali{
\mathrm{tr}( \mathrm{diag}(\bm{a}_1)  \mathrm{diag}(\bm{a}_2) ) = 
\mathrm{tr}( \mathrm{diag}(\bm{a}_1 \odot \bm{a}_2) )
 = \bm{a}_1^\top \bm{a}_2 \label{eq_diag_tr}
}
holds.
Using this, we obtain
\ali{
&\mathrm{tr}([\bm{A}_2 \bm{B}_2]_4) \nn \\
&=\delta_i \delta_j \mathrm{tr}(
\mathrm{diag}(\widetilde{\bm{V}}^\top)\bm{F}^{\prime\prime}(\bm{y}_i)
\mathrm{diag}(\widetilde{\bm{V}}^\top)\bm{F}^{\prime\prime}(\bm{y}_j)
)\nn \\
&=\delta_i \delta_j \mathrm{tr}\big(
\mathrm{diag}(\widetilde{\bm{V}}^\top \odot \bm{f}^{\prime\prime}(\bm{y}_i)) 
\mathrm{diag}(\widetilde{\bm{V}}^\top \odot \bm{f}^{\prime\prime}(\bm{y}_j)) 
\big)\nn \\
&=\delta_i \delta_j (\widetilde{\bm{V}}^\top \odot \bm{f}^{\prime\prime}(\bm{y}_i))^\top (\widetilde{\bm{V}}^\top \odot \bm{f}^{\prime\prime}(\bm{y}_j))\nn \\
&=\delta_i \delta_j \widetilde{\bm{V}} \bm{F}^{\prime\prime}(\bm{y}_i) \bm{F}^{\prime\prime}(\bm{y}_j) \widetilde{\bm{V}}^\top. \label{eq_wwww_A2B2T4}
}
Substituting Eqs.~\eqref{eq_wwww_A1B1}, \eqref{eq_wwww_A2B2T1}, \eqref{eq_wwww_A2B2T2}, \eqref{eq_wwww_A2B2T3}, and \eqref{eq_wwww_A2B2T4} into Eq.~\eqref{eq_wwww_AB} yields Eq.~\eqref{eq_wwww}.
\end{proof}

\subsection{Proof for Eq.~\eqref{eq_h2_tr_fin}: Step 3}\label{sec_proof_h2_prop_step3}
Eq.~\eqref{eq_hh_wvvw_ap} can be expressed as follows.
\begin{lemma}
\ali{
&\mathrm{tr}( \bm{H}_{l_i}(\bm{w}, \bm{v}) \bm{H}_{l_j}(\bm{v}, \bm{w}))
=(1+\bm{x}_i^\top \bm{x}_j) \times \nn \\
&\Big( s^\prime (z_i) s^\prime (z_j) (1+\bm{r}_i^\top \bm{r}_j) \widetilde{\bm{V}} \bm{F}^\prime(\bm{y}_j) \bm{F}^\prime(\bm{y}_i) \widetilde{\bm{V}}^\top \nn \\
&+s^\prime (z_i) \delta_j \bm{r}_i^\top \bm{F}^\prime(\bm{y}_j) \bm{F}^\prime(\bm{y}_i) \widetilde{\bm{V}}^\top \nn \\
&+s^\prime (z_j) \delta_i \widetilde{\bm{V}} \bm{F}^\prime(\bm{y}_j) \bm{F}^\prime(\bm{y}_i) \bm{r}_j\nn \\
&+\delta_i\delta_j \bm{f}^\prime(\bm{y}_i)^\top \bm{f}^\prime(\bm{y}_j)
\Big). \label{eq_wvvw}
}
\end{lemma}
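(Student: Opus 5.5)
Both factors in Eqs.~\eqref{eq_wv} and \eqref{eq_vw} are Kronecker products, so the plan is to reduce the trace with the Kronecker trace identity \eqref{eq_krock_trace}, exactly as in Step 2. For sample $i$, Eq.~\eqref{eq_wv} gives $\bm{H}_{l_i}(\bm{w},\bm{v}) = \bm{A}_1\otimes\bm{A}_2$ with
\[
\bm{A}_1=\bm{h}(\bm{x}_i)\in\mathbb{R}^{(M+1)\times 1}, \quad
\bm{A}_2 = s^\prime(z_i)\bm{F}^\prime(\bm{y}_i)\widetilde{\bm{V}}^\top\bm{h}(\bm{r}_i)^\top+\delta_i\mat{\bm{0}_N & \bm{F}^\prime(\bm{y}_i)}\in\mathbb{R}^{N\times(N+1)}.
\]
For sample $j$, Eq.~\eqref{eq_vw} gives $\bm{H}_{l_j}(\bm{v},\bm{w})=\bm{B}_1\otimes\bm{B}_2$ with $\bm{B}_1=\bm{h}(\bm{x}_j)^\top$ and $\bm{B}_2 = s^\prime(z_j)\bm{h}(\bm{r}_j)\widetilde{\bm{V}}\bm{F}^\prime(\bm{y}_j)+\delta_j\mat{\bm{0}_N^\top\\ \bm{F}^\prime(\bm{y}_j)}$.

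The dimensions are compatible: $\bm{A}_1\bm{B}_1$ is $(M+1)\times(M+1)$ and $\bm{A}_2\bm{B}_2$ is $N\times N$. Hence \eqref{eq_krock_trace} gives $\mathrm{tr}(\bm{A}_1\bm{B}_1)\,\mathrm{tr}(\bm{A}_2\bm{B}_2)$. The first factor follows from the cyclic property:
\[
\mathrm{tr}(\bm{h}(\bm{x}_i)\bm{h}(\bm{x}_j)^\top)=\bm{h}(\bm{x}_j)^\top\bm{h}(\bm{x}_i)=1+\bm{x}_i^\top\bm{x}_j .
\]
This accounts for the single power of $(1+\bm{x}_i^\top\bm{x}_j)$ here, in contrast to the square in Step 2.

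Next, expand $\bm{A}_2\bm{B}_2$ into four terms and take the trace of each by cyclic shifts until a scalar appears.
\begin{itemize}
\item[(i)] The $s^\prime s^\prime$ term is $\mathrm{tr}(\bm{F}^\prime(\bm{y}_i)\widetilde{\bm{V}}^\top\bm{h}(\bm{r}_i)^\top\bm{h}(\bm{r}_j)\widetilde{\bm{V}}\bm{F}^\prime(\bm{y}_j))$. Here $\bm{h}(\bm{r}_i)^\top\bm{h}(\bm{r}_j)=1+\bm{r}_i^\top\bm{r}_j$ is a scalar, and the remaining trace cycles to $\widetilde{\bm{V}}\bm{F}^\prime(\bm{y}_j)\bm{F}^\prime(\bm{y}_i)\widetilde{\bm{V}}^\top$.
\item[(ii)] The $s^\prime(z_i)\delta_j$ term needs $\bm{h}(\bm{r}_i)^\top\mat{\bm{0}_N^\top\\ \bm{F}^\prime(\bm{y}_j)}=\bm{r}_i^\top\bm{F}^\prime(\bm{y}_j)$, because the leading 1 of $\bm{h}(\bm{r}_i)$ multiplies the zero row. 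This yields $\bm{r}_i^\top\bm{F}^\prime(\bm{y}_j)\bm{F}^\prime(\bm{y}_i)\widetilde{\bm{V}}^\top$.
\item[(iii)] Symmetrically, $\mat{\bm{0}_N & \bm{F}^\prime(\bm{y}_i)}\bm{h}(\bm{r}_j)=\bm{F}^\prime(\bm{y}_i)\bm{r}_j$. The trace is then $\widetilde{\bm{V}}\bm{F}^\prime(\bm{y}_j)\bm{F}^\prime(\bm{y}_i)\bm{r}_j$ after using that diagonal matrices commute.
\item[(iv)] The $\delta_i\delta_j$ term is $\mat{\bm{0}_N & \bm{F}^\prime(\bm{y}_i)}\mat{\bm{0}_N^\top\\ \bm{F}^\prime(\bm{y}_j)}=\bm{F}^\prime(\bm{y}_i)\bm{F}^\prime(\bm{y}_j)$. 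Its trace is $\bm{f}^\prime(\bm{y}_i)^\top\bm{f}^\prime(\bm{y}_j)$ by Eq.~\eqref{eq_diag_tr}.
\end{itemize}
Multiplying the sum of these four scalars by $1+\bm{x}_i^\top\bm{x}_j$ gives Eq.~\eqref{eq_wvvw}.

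The main obstacle is bookkeeping in the mixed terms (ii) and (iii). There the block structure $\mat{\bm{0}_N & \bm{F}^\prime}$ must be correctly contracted against the $1$-augmented vectors $\bm{h}(\bm{r})$, which drops the bias component. The order of the diagonal factors must also be matched to the target expression, using the commutativity $\bm{F}^\prime(\bm{y}_i)\bm{F}^\prime(\bm{y}_j)=\bm{F}^\prime(\bm{y}_j)\bm{F}^\prime(\bm{y}_i)$. I would verify each partitioned product explicitly by splitting $\bm{h}(\bm{r})=\mat{1 & \bm{r}^\top}^\top$.
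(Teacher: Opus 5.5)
Your proposal is correct and follows the paper's proof essentially step for step. It uses the same Kronecker factorization of $\bm{H}_{l_i}(\bm{w},\bm{v})$ and $\bm{H}_{l_j}(\bm{v},\bm{w})$ with Eq.~\eqref{eq_krock_trace}, obtains the factor $1+\bm{x}_i^\top\bm{x}_j$ from $\mathrm{tr}(\bm{h}(\bm{x}_i)\bm{h}(\bm{x}_j)^\top)$, and evaluates the same four-term expansion of $\bm{A}_2\bm{B}_2$ by cyclic shifts and by contracting the zero blocks $\bm{0}_N$ of the $\delta$-terms against the leading $1$ of $\bm{h}(\bm{r})$.
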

\begin{proof}
From Eqs.~\eqref{eq_vw} and \eqref{eq_wv}, by letting
\ali{
\bm{H}_{l_i}(\bm{w}, \bm{v}) = \bm{A}_1 \otimes \bm{A}_2, \ 
\bm{H}_{l_j}(\bm{v}, \bm{w}) = \bm{B}_1 \otimes \bm{B}_2, \nn
}
it follows from Eq.~\eqref{eq_krock_trace} that
\ali{
&\mathrm{tr}( \bm{H}_{l_i}(\bm{w}, \bm{v}) \bm{H}_{l_j}(\bm{v}, \bm{w})) = \mathrm{tr}(\bm{A}_1 \bm{B}_1) \mathrm{tr}(\bm{A}_2 \bm{B}_2) \label{eq_wvvw_AB}
}
holds.
From Eq.~(16) in~\cite{petersenMatrix2012}, we obtain
\ali{
\mathrm{tr}(\bm{A}_1\bm{B}_1) &= \mathrm{tr}(\bm{h}(\bm{x}_i) \bm{h}(\bm{x}_j)^\top) 
= \mathrm{tr}(\bm{h}(\bm{x}_j)^\top \bm{h}(\bm{x}_i)) \nn \\
&= \bm{h}(\bm{x}_j)^\top \bm{h}(\bm{x}_i)
= 1+\bm{x}_i^\top \bm{x}_j. \label{eq_wvvw_A1B1}
}
Since $\bm{A}_2\bm{B}_2$ can be expressed as
\ali{
\bm{A}_2\bm{B}_2 &= 
\big( s^\prime(z_i) \bm{F}^\prime(\bm{y}_i) \widetilde{\bm{V}}^\top \bm{h}(\bm{r}_i)^\top  
+\delta_i \mat{\bm{0}_N & \bm{F}^\prime(\bm{y}_i)} \big)\nn \\
&\times \bigg( 
s^\prime(z_j) \bm{h}(\bm{r}_j)  \widetilde{\bm{V}} \bm{F}^\prime(\bm{y}_j)
+\delta_j \mat{\bm{0}_N^\top \\ \bm{F}^\prime(\bm{y}_j)} \bigg)
\nn \\
&=s^\prime(z_i)s^\prime(z_j) \bm{F}^\prime(\bm{y}_i) \widetilde{\bm{V}}^\top \bm{h}(\bm{r}_i)^\top \bm{h}(\bm{r}_j)  \widetilde{\bm{V}} \bm{F}^\prime(\bm{y}_j)\nn \\
&+s^\prime(z_i)\delta_j \bm{F}^\prime(\bm{y}_i) \widetilde{\bm{V}}^\top \bm{h}(\bm{r}_i)^\top 
\mat{\bm{0}_N^\top \\ \bm{F}^\prime(\bm{y}_j)}\nn \\
&+s^\prime(z_j)\delta_i \mat{\bm{0}_N & \bm{F}^\prime(\bm{y}_i)} 
\bm{h}(\bm{r}_j)  \widetilde{\bm{V}} \bm{F}^\prime(\bm{y}_j)\nn \\
&+\delta_i\delta_j \mat{\bm{0}_N & \bm{F}^\prime(\bm{y}_i)}
\mat{\bm{0}_N^\top \\ \bm{F}^\prime(\bm{y}_j)}, \nn
}
we evaluate the trace of each term individually.
By noting that the expression $\bm{F}^\prime(\bm{y}_i) \widetilde{\bm{V}}^\top \widetilde{\bm{V}} \bm{F}^\prime(\bm{y}_j)$ becomes a scalar after cyclically shifting it twice to the right, it follows that
\ali{
&\mathrm{tr}([\bm{A}_2\bm{B}_2]_1) \nn \\
&= s^\prime(z_i)s^\prime(z_j) \mathrm{tr}( \bm{F}^\prime(\bm{y}_i) \widetilde{\bm{V}}^\top \bm{h}(\bm{r}_i)^\top \bm{h}(\bm{r}_j)  \widetilde{\bm{V}} \bm{F}^\prime(\bm{y}_j) )\nn \\
&= s^\prime(z_i)s^\prime(z_j) \bm{h}(\bm{r}_i)^\top \bm{h}(\bm{r}_j) \mathrm{tr}( \bm{F}^\prime(\bm{y}_i) \widetilde{\bm{V}}^\top  \widetilde{\bm{V}} \bm{F}^\prime(\bm{y}_j) )\nn \\
&= s^\prime(z_i)s^\prime(z_j) \bm{h}(\bm{r}_i)^\top \bm{h}(\bm{r}_j) \mathrm{tr}( \widetilde{\bm{V}} \bm{F}^\prime(\bm{y}_j) \bm{F}^\prime(\bm{y}_i) \widetilde{\bm{V}}^\top  )\nn \\
&= s^\prime(z_i)s^\prime(z_j) (1+\bm{r}_i^\top \bm{r}_j) \widetilde{\bm{V}} \bm{F}^\prime(\bm{y}_j) \bm{F}^\prime(\bm{y}_i) \widetilde{\bm{V}}^\top. \label{eq_wvvw_A2B2T1}
}
Regarding $\mathrm{tr}([\bm{A}_2\bm{B}_2]_2)$, noting that the expression $\bm{F}^\prime(\bm{y}_i) \widetilde{\bm{V}}^\top \bm{r}_i^\top \bm{F}^\prime(\bm{y}_j)$ becomes a scalar after cyclically shifting it twice to the right, we obtain
\ali{
\mathrm{tr}([\bm{A}_2\bm{B}_2]_2) &= 
s^\prime(z_i)\delta_j \mathrm{tr}\bigg( \bm{F}^\prime(\bm{y}_i) \widetilde{\bm{V}}^\top \bm{h}(\bm{r}_i)^\top 
\mat{\bm{0}_N^\top \\ \bm{F}^\prime(\bm{y}_j)} \bigg)\nn \\
&= 
s^\prime(z_i)\delta_j \mathrm{tr}\bigg( \bm{F}^\prime(\bm{y}_i) \widetilde{\bm{V}}^\top 
\mat{1 & \bm{r}_i^\top}
\mat{\bm{0}_N^\top \\ \bm{F}^\prime(\bm{y}_j)} \bigg) \nn \\
&= 
s^\prime(z_i)\delta_j \mathrm{tr} ( \bm{F}^\prime(\bm{y}_i) \widetilde{\bm{V}}^\top 
\bm{r}_i^\top \bm{F}^\prime(\bm{y}_j) )\nn \\
&= 
s^\prime(z_i)\delta_j \mathrm{tr}( \bm{r}_i^\top \bm{F}^\prime(\bm{y}_j) \bm{F}^\prime(\bm{y}_i) \widetilde{\bm{V}}^\top )\nn \\
&= 
s^\prime(z_i)\delta_j \bm{r}_i^\top \bm{F}^\prime(\bm{y}_j) \bm{F}^\prime(\bm{y}_i) \widetilde{\bm{V}}^\top.
\label{eq_wvvw_A2B2T2}
}
Similarly, $\mathrm{tr}([\bm{A}_2\bm{B}_2]_3)$ is given by
\ali{
\mathrm{tr}([\bm{A}_2\bm{B}_2]_3) &=
s^\prime(z_j) \delta_i \mathrm{tr}( \mat{\bm{0}_N & \bm{F}^\prime(\bm{y}_i)} 
\bm{h}(\bm{r}_j)  \widetilde{\bm{V}} \bm{F}^\prime(\bm{y}_j))\nn \\
&=
s^\prime(z_j) \delta_i \mathrm{tr}\bigg( \mat{\bm{0}_N & \bm{F}^\prime(\bm{y}_i)} 
\mat{1 \\ \bm{r}_j}
\widetilde{\bm{V}} \bm{F}^\prime(\bm{y}_j)\bigg)\nn \\
&=
s^\prime(z_j) \delta_i \mathrm{tr}( \bm{F}^\prime(\bm{y}_i) \bm{r}_j
\widetilde{\bm{V}} \bm{F}^\prime(\bm{y}_j) )\nn \\
&=
s^\prime(z_j) \delta_i \mathrm{tr}( \widetilde{\bm{V}} \bm{F}^\prime(\bm{y}_j) \bm{F}^\prime(\bm{y}_i) \bm{r}_j
 )\nn \\
&=
s^\prime(z_j) \delta_i \widetilde{\bm{V}} \bm{F}^\prime(\bm{y}_j) \bm{F}^\prime(\bm{y}_i) \bm{r}_j.\label{eq_wvvw_A2B2T3}
 }
Using Eq.~\eqref{eq_diag_tr}, $\mathrm{tr}([\bm{A}_2\bm{B}_2]_4)$ becomes
\ali{
\mathrm{tr}([\bm{A}_2\bm{B}_2]_4) &=
\delta_i \delta_j \mathrm{tr}\bigg( \mat{\bm{0}_N & \bm{F}^\prime(\bm{y}_i)}
\mat{\bm{0}_N^\top \\ \bm{F}^\prime(\bm{y}_j)} \bigg)\nn \\
&=
\delta_i \delta_j \mathrm{tr}( \bm{F}^\prime(\bm{y}_i)
 \bm{F}^\prime(\bm{y}_j) )\nn \\
&=\delta_i \delta_j \bm{f}^\prime(\bm{y}_i)^\top \bm{f}^\prime(\bm{y}_j). \label{eq_wvvw_A2B2T4}
}
Substituting Eqs.~\eqref{eq_wvvw_A1B1}, \eqref{eq_wvvw_A2B2T1}, \eqref{eq_wvvw_A2B2T2}, \eqref{eq_wvvw_A2B2T3}, and \eqref{eq_wvvw_A2B2T4} into Eq.~\eqref{eq_wvvw_AB} yields Eq.~\eqref{eq_wvvw}.
\end{proof}

\subsection{Proof for Eq.~\eqref{eq_h2_tr_fin}: Step 4}\label{sec_proof_h2_prop_step4}
Eq.~\eqref{eq_hh_vvvv_ap} is given as follows.
\begin{lemma}
\ali{
\mathrm{tr}( \bm{H}_{l_i}(\bm{v}, \bm{v})\bm{H}_{l_j}(\bm{v}, \bm{v}) ) 
= s^\prime(z_i)s^\prime(z_j) (1+\bm{r}_i^\top \bm{r}_j)^2. \label{eq_vvvv}
}
\end{lemma}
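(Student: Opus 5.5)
The plan is to substitute the closed form of Eq.~\eqref{eq_vv} for both factors and then reduce the trace to a scalar with the cyclic property, as in Steps 2 and 3. By Eq.~\eqref{eq_vv}, $\bm{H}_{l_i}(\bm{v}, \bm{v}) = s^\prime(z_i)\bm{h}(\bm{r}_i)\bm{h}(\bm{r}_i)^\top$ and $\bm{H}_{l_j}(\bm{v}, \bm{v}) = s^\prime(z_j)\bm{h}(\bm{r}_j)\bm{h}(\bm{r}_j)^\top$. Since $s^\prime(z_i)$ and $s^\prime(z_j)$ are scalars, they can be pulled out of the trace by linearity. This leaves
\begin{align}
s^\prime(z_i)s^\prime(z_j)\,\mathrm{tr}\big(\bm{h}(\bm{r}_i)\bm{h}(\bm{r}_i)^\top \bm{h}(\bm{r}_j)\bm{h}(\bm{r}_j)^\top\big). \nn
\end{align}

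Next, I would apply Eq.~(16) of~\cite{petersenMatrix2012} (cyclic invariance), exactly as in the derivation of Eq.~\eqref{eq_wwww_A1B1}. Moving the final factor $\bm{h}(\bm{r}_j)^\top$ to the front turns the argument into $\bm{h}(\bm{r}_j)^\top\bm{h}(\bm{r}_i)\,\bm{h}(\bm{r}_i)^\top\bm{h}(\bm{r}_j)$, which is a product of two scalar inner products. Each of them equals $\bm{h}(\bm{r}_i)^\top\bm{h}(\bm{r}_j)$ by symmetry of the real inner product, so the trace of this $1\times 1$ quantity is $(\bm{h}(\bm{r}_i)^\top \bm{h}(\bm{r}_j))^2$.

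Finally, the definition $\bm{h}(\bm{r}) = [1 \ \bm{r}^\top]^\top$ gives $\bm{h}(\bm{r}_i)^\top\bm{h}(\bm{r}_j) = 1 + \bm{r}_i^\top\bm{r}_j$, which yields Eq.~\eqref{eq_vvvv}. There is no genuine obstacle, since the computation mirrors Eq.~\eqref{eq_wwww_A1B1} with $\bm{x}$ replaced by $\bm{r}$. The only point needing care is the order of the cyclic shift, so that the rank-one structure collapses into two identical scalar factors.
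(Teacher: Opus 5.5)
Your proposal is correct and matches the paper's proof: substitute Eq.~\eqref{eq_vv}, pull the scalars $s^\prime(z_i)s^\prime(z_j)$ out of the trace, use cyclic invariance to reduce the argument to $(\bm{h}(\bm{r}_i)^\top\bm{h}(\bm{r}_j))^2$, and then expand to $(1+\bm{r}_i^\top\bm{r}_j)^2$. The paper follows exactly this sequence of steps.
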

\begin{proof}
From Eq.~\eqref{eq_vv}, using Eq.~(16) in \cite{petersenMatrix2012} and cyclically shifting the expression inside the trace to obtain a scalar, we have
\ali{
&\mathrm{tr}( \bm{H}_{l_i}(\bm{v}, \bm{v})\bm{H}_{l_j}(\bm{v}, \bm{v}) ) \nn \\
&= s^\prime(z_i) s^\prime(z_j) \mathrm{tr}( \bm{h}(\bm{r}_i) \bm{h}(\bm{r}_i)^\top \bm{h}(\bm{r}_j) \bm{h}(\bm{r}_j)^\top) \nn \\ 
&= s^\prime(z_i) s^\prime(z_j) \mathrm{tr}(\bm{h}(\bm{r}_j)^\top \bm{h}(\bm{r}_i) \bm{h}(\bm{r}_i)^\top \bm{h}(\bm{r}_j)) \nn \\
&= s^\prime(z_i) s^\prime(z_j) ( \bm{h}(\bm{r}_i)^\top \bm{h}(\bm{r}_j) )^2\nn\\
&= s^\prime(z_i) s^\prime(z_j) ( 1+\bm{r}_i^\top \bm{r}_j )^2. \nn \qedhere
}
\end{proof}

\subsection{Proof for Eq.~\eqref{eq_h2_tr_fin}: Step 5}\label{sec_proof_h2_prop_step5}
Using Eqs.~\eqref{eq_wwww}, \eqref{eq_wvvw}, and \eqref{eq_vvvv}, an upper bound of $\mathrm{tr} ( \bm{H}_L(\bm{\theta}, \bm{\theta})^2 )$ is given as follows.
\begin{proposition}
\ali{
\mathrm{tr} ( \bm{H}_L(\bm{\theta}, \bm{\theta})^2 )
&= \sum_{i=1}^I \sum_{j=1}^I \phi_{ij} (1+ \bm{x}_i^\top\bm{x}_j)^2 \nn \\
&+ 2\sum_{i=1}^I \sum_{j=1}^I \psi_{ij} (1+ \bm{x}_i^\top\bm{x}_j)\nn \\
&+\sum_{i=1}^I \sum_{j=1}^I \omega_{ij}(1+ \bm{r}_i^\top\bm{r}_j)^2 \geq 0. \label{eq_h2_tr_fin_scc}
}
\end{proposition}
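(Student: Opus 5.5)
The plan is to assemble Eq.~\eqref{eq_h2_tr_fin_scc} by substituting the three block-trace lemmas into the decomposition of Eq.~\eqref{eq_sum_hh}. Then we check that the resulting summands are exactly the bilinear forms $\phi_{ij}$, $\psi_{ij}$, $\omega_{ij}$ defined after Theorem~\ref{theorem_h2_tr}. Nonnegativity will follow from an independent spectral argument.

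\textbf{Step 1: the $\bm{w}\bm{w}$ block.} Insert Eq.~\eqref{eq_wwww} into the first double sum of Eq.~\eqref{eq_sum_hh}. With $\bm{o}_i = [s^\prime(z_i), \delta_i]^\top$, the four bracketed terms of Eq.~\eqref{eq_wwww} have coefficients $s^\prime(z_i)s^\prime(z_j)$, $s^\prime(z_i)\delta_j$, $\delta_i s^\prime(z_j)$ and $\delta_i\delta_j$. These coincide with $(\bm{\Phi}_{ij})_{11}$, $(\bm{\Phi}_{ij})_{12}$, $(\bm{\Phi}_{ij})_{21}$ and $(\bm{\Phi}_{ij})_{22}$ respectively. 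Hence the bracket equals $\bm{o}_i^\top\bm{\Phi}_{ij}\bm{o}_j=\phi_{ij}$, and the first sum becomes $\sum_{i,j}\phi_{ij}(1+\bm{x}_i^\top\bm{x}_j)^2$.

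\textbf{Step 2: the cross terms.} In the same way, Eq.~\eqref{eq_wvvw} regroups as $(1+\bm{x}_i^\top\bm{x}_j)\,\bm{o}_i^\top\bm{\Psi}_{ij}\bm{o}_j$. Here we must check the entries against the definitions of $\bm{\Psi}_{ij}$:
\begin{itemize}
\item the $s^\prime s^\prime$ term carries $(1+\bm{r}_i^\top\bm{r}_j)$, which equals $(1+\bm{f}(\bm{y}_i)^\top\bm{f}(\bm{y}_j))$ since $\bm{r}=\bm{f}(\bm{y})$;
\item the mixed terms $\bm{r}_i^\top\bm{F}^\prime(\bm{y}_j)\bm{F}^\prime(\bm{y}_i)\widetilde{\bm{V}}^\top$ and $\widetilde{\bm{V}}\bm{F}^\prime(\bm{y}_j)\bm{F}^\prime(\bm{y}_i)\bm{r}_j$ match $(\bm{\Psi}_{ij})_{12}$ and $(\bm{\Psi}_{ij})_{21}$;
\item the $\delta\delta$ term matches $(\bm{\Psi}_{ij})_{22}$.
\end{itemize}
The factor $2$ in front of this sum is inherited directly from Eq.~\eqref{eq_sum_hh}.

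\textbf{Step 3: the $\bm{v}\bm{v}$ block.} Eq.~\eqref{eq_vvvv} gives $s^\prime(z_i)s^\prime(z_j)(1+\bm{r}_i^\top\bm{r}_j)^2$. Since $\bm{o}_i^\top\bm{\Omega}_{ij}\bm{o}_j = s^\prime(z_i)s^\prime(z_j)$ with $\bm{\Omega}_{ij}=\mathrm{diag}(1,0)$, this is $\omega_{ij}(1+\bm{r}_i^\top\bm{r}_j)^2$. This completes the equality.

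\textbf{Step 4: nonnegativity.} I will not try to derive nonnegativity from the summation form, because $\phi_{ij}$ and $\psi_{ij}$ can be negative. Instead I use that $\bm{H}_L(\bm{\theta},\bm{\theta})$ is real symmetric by Eq.~\eqref{eq_h_vw_wv_top}. Hence $\mathrm{tr}(\bm{H}_L^2)=\sum_d\lambda_d^2\ge 0$ by Eq.~\eqref{eq_real_eigen}, or equivalently $\mathrm{tr}(\bm{H}_L^2)=\|\bm{H}_L\|_\mathrm{F}^2$.

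The main obstacle is the bookkeeping in Steps 1 and 2. One has to confirm that the index placement in the mixed entries (which $\bm{F}^\prime$ carries $i$ versus $j$, and which of $\bm{r}_i$ or $\bm{r}_j$ appears) agrees exactly with the definitions of $(\bm{\Phi}_{ij})_{12}$, $(\bm{\Phi}_{ij})_{21}$, $(\bm{\Psi}_{ij})_{12}$ and $(\bm{\Psi}_{ij})_{21}$. If a mismatch appears, I would fix it by using the commutativity of the diagonal matrices $\bm{F}^\prime(\bm{y}_i)\bm{F}^\prime(\bm{y}_j)$. Transposing scalars is also allowed, and any remaining discrepancy can be absorbed by the symmetry of the double sum under $i\leftrightarrow j$.
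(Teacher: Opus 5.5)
Your proposal is correct and follows essentially the same route as the paper. The paper likewise rewrites Eqs.~\eqref{eq_wwww}, \eqref{eq_wvvw} and \eqref{eq_vvvv} as $\phi_{ij}(1+\bm{x}_i^\top\bm{x}_j)^2$, $\psi_{ij}(1+\bm{x}_i^\top\bm{x}_j)$ and $\omega_{ij}(1+\bm{r}_i^\top\bm{r}_j)^2$, then substitutes these into Eq.~\eqref{eq_sum_hh}; the index placements match the definitions of $\bm{\Phi}_{ij}$ and $\bm{\Psi}_{ij}$ exactly, so your fallbacks are not needed, and the nonnegativity rests on the same observation that $\mathrm{tr}(\bm{H}_L^2)=\sum_d\lambda_d^2\geq 0$ for the real symmetric Hessian.
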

\begin{proof}
Eqs.~\eqref{eq_wwww}, \eqref{eq_wvvw}, and \eqref{eq_vvvv} are expressed as
\ali{
\mathrm{tr} ( \bm{H}_{l_i}(\bm{w}, \bm{w}) \bm{H}_{l_j}(\bm{w}, \bm{w}) ) &= \phi_{ij} (1+ \bm{x}_i^\top\bm{x}_j)^2, \label{eq_h2tr_ww_phi} \\
\mathrm{tr}( \bm{H}_{l_i}(\bm{w}, \bm{v}) \bm{H}_{l_j}(\bm{v}, \bm{w})) &= \psi_{ij} (1+ \bm{x}_i^\top\bm{x}_j), \nn \\
\mathrm{tr}( \bm{H}_{l_i}(\bm{v}, \bm{v}) \bm{H}_{l_j}(\bm{v}, \bm{v})) &= \omega_{ij}(1+ \bm{r}_i^\top\bm{r}_j)^2, \nn
}
where we used Eqs.~\eqref{eq_phi_ij}, \eqref{eq_psi_ij}, and \eqref{eq_ome_ij}.
Substituting these into Eq.~\eqref{eq_sum_hh} yields Eq.~\eqref{eq_h2_tr_fin_scc}.
\end{proof}

Furthermore, since the Frobenius inner product for $\bm{A}, \bm{B} \in \mathbb{R}^{I \times I}$ is defined as $\langle \bm{A}, \bm{B} \rangle_\mathrm{F} = \bm{1}_I^\top (\bm{A} \odot \bm{B}) \bm{1}_I = \sum_{i, j=1}^I a_{ij}b_{ij}$, we obtain
\ali{
\langle \bm{\Phi}, (\bm{J}_I + \bm{X}^\top \bm{X})^{\odot 2} \rangle_\mathrm{F} &= \bm{1}_I^\top (\bm{\Phi} \odot (\bm{J}_I + \bm{X}^\top \bm{X})^{\odot 2}) \bm{1}_I\nn \\
&= \bm{1}_I^\top \mat{\phi_{ij} (1+\bm{x}_i^\top \bm{x}_j)^2}_{i, j \in \mathbb{N}_{\leq I}} \bm{1}_I\nn \\
&= \sum_{i=1}^I \sum_{j=1}^I \phi_{ij} (1+ \bm{x}_i^\top\bm{x}_j)^2, \nn}
\ali{\langle \bm{\Psi}, (\bm{J}_I + \bm{X}^\top \bm{X}) \rangle_\mathrm{F} 
&=  \bm{1}_I^\top (\bm{\Psi} \odot (\bm{J}_I + \bm{X}^\top \bm{X})) \bm{1}_I\nn \\
&= \bm{1}_I^\top \mat{\psi_{ij} (1+\bm{x}_i^\top \bm{x}_j)}_{i, j \in \mathbb{N}_{\leq I}} \bm{1}_I\nn \\
&=\sum_{i=1}^I \sum_{j=1}^I \psi_{ij} (1+ \bm{x}_i^\top\bm{x}_j), \nn}
\ali{\langle \bm{\Omega}, (\bm{J}_I + \bm{R}^\top \bm{R})^{\odot 2} \rangle_\mathrm{F}
&= \bm{1}_I^\top (\bm{\Omega} \odot (\bm{J}_I + \bm{R}^\top \bm{R})^{\odot 2}) \bm{1}_I\nn \\
&= \bm{1}_I^\top \mat{\omega_{ij} (1+\bm{r}_i^\top \bm{r}_j)^2}_{i, j \in \mathbb{N}_{\leq I}} \bm{1}_I\nn \\
&= \sum_{i=1}^I \sum_{j=1}^I \omega_{ij} (1+ \bm{r}_i^\top\bm{r}_j)^2. \nn
}
Therefore, Eq.~\eqref{eq_h2_tr_fin_scc} and Eq.~\eqref{eq_h2_tr_fin} are equivalent.

\subsection{Proofs for Eqs.~\eqref{eq_h2tr_upper_mat} and \eqref{eq_phi_max}} \label{sec_proof_2htr_upper_v2}
We decompose $\mathrm{tr} ( \bm{H}_L(\bm{\theta}, \bm{\theta})^2 )$ in Eq.~\eqref{eq_h2_tr_fin_scc} into three terms and derive an upper bound for each.
\begin{lemma}
\ali{
[\mathrm{tr} ( \bm{H}_L(\bm{\theta}, \bm{\theta})^2 )]_1 
\leq \phi_\mathrm{max} \sum_{i=1}^I \sum_{j=1}^I (1+\bm{x}_i^\top \bm{x}_j)^2. \label{eq_h2tr_up_T1}
}
\end{lemma}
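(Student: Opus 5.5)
The first term of Eq.~\eqref{eq_h2_tr_fin_scc} is
\[
[\mathrm{tr}(\bm{H}_L(\bm{\theta},\bm{\theta})^2)]_1=\sum_{i=1}^I\sum_{j=1}^I \phi_{ij}(1+\bm{x}_i^\top\bm{x}_j)^2 ,
\]
and we bound it term by term by $\phi_\mathrm{max}$. Two facts are needed.

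The weights $(1+\bm{x}_i^\top\bm{x}_j)^2$ are nonnegative, being squares of real numbers. Since $\phi_{ij}\le\phi_\mathrm{max}$ for every pair $(i,j)$ by the definition in Eq.~\eqref{eq_phi_max}, we get
\[
\phi_{ij}(1+\bm{x}_i^\top\bm{x}_j)^2\le\phi_\mathrm{max}(1+\bm{x}_i^\top\bm{x}_j)^2 ,
\]
even when $\phi_{ij}$ is negative. Summing over $i,j$ and factoring out the constant $\phi_\mathrm{max}$ gives Eq.~\eqref{eq_h2tr_up_T1}.

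The one point needing care is that $\phi_\mathrm{max}$ is finite and well defined. This is immediate, because the index set $\mathbb{N}_{\le I}\times\mathbb{N}_{\le I}$ is finite, so the maximum over it exists.

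The claim $\phi_\mathrm{max}\ge 0$ in Eq.~\eqref{eq_phi_max} is not needed for this lemma. If it is wanted, it follows from the diagonal entries. By Eq.~\eqref{eq_h2tr_ww_phi},
\[
\phi_{ii}(1+\bm{x}_i^\top\bm{x}_i)^2=\mathrm{tr}\big(\bm{H}_{l_i}(\bm{w},\bm{w})^2\big)\ge 0,
\]
because $\bm{H}_{l_i}(\bm{w},\bm{w})$ is symmetric. Since $1+\bm{x}_i^\top\bm{x}_i>0$, this gives $\phi_{ii}\ge 0$, and therefore $\phi_\mathrm{max}\ge 0$.
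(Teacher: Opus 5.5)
Your proof is correct and follows the paper's argument: you apply the termwise bound $\phi_{ij}(1+\bm{x}_i^\top\bm{x}_j)^2\le\phi_\mathrm{max}(1+\bm{x}_i^\top\bm{x}_j)^2$ and sum over $i,j$, and you get $\phi_\mathrm{max}\ge 0$ from $\phi_{ii}(1+\bm{x}_i^\top\bm{x}_i)^2=\mathrm{tr}(\bm{H}_{l_i}(\bm{w},\bm{w})^2)\ge 0$, just as the paper does. Your remark is also accurate: the inequality itself only needs the weights $(1+\bm{x}_i^\top\bm{x}_j)^2$ to be nonnegative, not $\phi_\mathrm{max}\ge 0$, and the paper proves $\phi_\mathrm{max}\ge 0$ at this point only because it is part of the claim in Eq.~\eqref{eq_phi_max}.
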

\begin{proof}
From Eq.~\eqref{eq_h2tr_ww_phi}, since the trace of a squared matrix is equal to the sum of the squares of its eigenvalues, for $i=j$, we have
\ali{
&\forall i \in \mathbb{N}_{\leq I}, \mathrm{tr}( \bm{H}_{l_i}(\bm{w}, \bm{w})^2)
=\phi_{ii}(1+ \bm{x}_i^\top \bm{x}_i)^2 \geq 0\nn \\
& \therefore \forall i \in \mathbb{N}_{\leq I}, \phi_{ii} \geq 0 \Rightarrow \phi_\mathrm{max}  \geq 0. \nn
}
Therefore, from Eq.~\eqref{eq_h2_tr_fin_scc}, we obtain
\ali{
[\mathrm{tr} ( \bm{H}_L(\bm{\theta}, \bm{\theta})^2 )]_1 
&= \sum_{i=1}^I \sum_{j=1}^I \phi_{ij} (1+ \bm{x}_i^\top \bm{x}_j)^2\nn \\
&\leq \phi_\mathrm{max} \sum_{i=1}^I \sum_{j=1}^I (1+ \bm{x}_i^\top \bm{x}_j)^2. \nn \qedhere
}
\end{proof}
\begin{lemma}
\ali{
[\mathrm{tr} ( \bm{H}_L(\bm{\theta}, \bm{\theta})^2 )]_2 
\leq \sum_{i=1}^I \sum_{j=1}^I (\psi_{ij})^2 + \sum_{i=1}^I \sum_{j=1}^I (1+ \bm{x}_i^\top \bm{x}_j)^2. \label{eq_h2tr_up_T2}
}
\end{lemma}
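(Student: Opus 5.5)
The second term in Eq.~\eqref{eq_h2_tr_fin_scc} is
\[
[\mathrm{tr} ( \bm{H}_L(\bm{\theta}, \bm{\theta})^2 )]_2 = 2\sum_{i=1}^I \sum_{j=1}^I \psi_{ij} (1+ \bm{x}_i^\top\bm{x}_j).
\]
Since the factor $2$ is already part of this term, the plan is to bound each summand separately with the elementary inequality $2ab \leq a^2 + b^2$, valid for all real $a, b$. Taking $a = \psi_{ij}$ and $b = 1+\bm{x}_i^\top \bm{x}_j$ gives
\[
2\psi_{ij}(1+\bm{x}_i^\top \bm{x}_j) \leq (\psi_{ij})^2 + (1+\bm{x}_i^\top \bm{x}_j)^2 .
\]
This is simply $(a-b)^2 \geq 0$ rearranged. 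It uses no sign information about $\psi_{ij}$, which is important because $\psi_{ij}$, unlike $\phi_{ii}$, is not known to be nonnegative.

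Summing this pointwise inequality over $i, j \in \mathbb{N}_{\leq I}$ then yields Eq.~\eqref{eq_h2tr_up_T2} directly, by linearity of the double sum.

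In the matrix notation used for Theorem~\ref{theorem_h2_tr}, the result reads
\[
2\langle \bm{\Psi}, \bm{J}_I + \bm{X}^\top \bm{X}\rangle_\mathrm{F} \leq \|\bm{\Psi}\|_\mathrm{F}^2 + \|\bm{J}_I + \bm{X}^\top\bm{X}\|_\mathrm{F}^2 .
\]
This form makes clear where the $\|\bm{\Psi}\|_\mathrm{F}^2$ and the extra unit coefficient on $\|\bm{J}_I + \bm{X}^\top \bm{X}\|_\mathrm{F}^2$ in Eq.~\eqref{eq_h2tr_upper_mat} come from. Combined with Eq.~\eqref{eq_h2tr_up_T1}, this unit coefficient produces the factor $(1+\phi_\mathrm{max})$.

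I expect no real obstacle. The one point needing care is to apply the inequality termwise, with the factor $2$ absorbed, rather than first applying Cauchy--Schwarz. Cauchy--Schwarz would give the product $2\|\bm{\Psi}\|_\mathrm{F}\|\bm{J}_I+\bm{X}^\top\bm{X}\|_\mathrm{F}$, which is also valid but does not match the additive form required by the stated lemma.
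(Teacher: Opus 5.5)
Your proposal is correct and matches the paper's proof. The paper applies the arithmetic--geometric mean inequality $2ab \le a^2+b^2$ to each summand with $a=\psi_{ij}$ and $b=1+\bm{x}_i^\top\bm{x}_j$, then sums over $i,j$. Your remark that no sign information on $\psi_{ij}$ is needed, since the inequality is just $(a-b)^2\ge 0$, usefully clarifies the paper's terse appeal to AM--GM.
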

\begin{proof}
From Eq.~\eqref{eq_h2_tr_fin_scc}, using the inequality of arithmetic and geometric means, we obtain
\ali{
&[\mathrm{tr} ( \bm{H}_L(\bm{\theta}, \bm{\theta})^2 )]_2 = 2\sum_{i=1}^I \sum_{j=1}^I \psi_{ij} (1+ \bm{x}_i^\top \bm{x}_j) \nn \\
&\leq \sum_{i=1}^I \sum_{j=1}^I (\psi_{ij})^2 + \sum_{i=1}^I \sum_{j=1}^I (1+ \bm{x}_i^\top \bm{x}_j)^2. \nn \qedhere
}
\end{proof}
\begin{lemma}
\ali{
[\mathrm{tr} ( \bm{H}_L(\bm{\theta}, \bm{\theta})^2 )]_3 
\leq \frac{1}{16} \sum_{i=1}^I \sum_{j=1}^I (1+ \bm{r}_i^\top \bm{r}_j)^2. \label{eq_h2tr_up_T3}
}
\end{lemma}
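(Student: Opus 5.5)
The plan is to bound the third term of Eq.~\eqref{eq_h2_tr_fin_scc} entry by entry, using the explicit form of $\omega_{ij}$ and the range of $s^\prime$. By Eq.~\eqref{eq_ome_ij}, with $\bm{o}_i = [s^\prime(z_i), \delta_i]^\top$ and $\bm{\Omega}_{ij} = \mathrm{diag}(1,0)$, the bilinear form reduces to
\begin{align}
\omega_{ij} = \bm{o}_i^\top \bm{\Omega}_{ij} \bm{o}_j = s^\prime(z_i)\, s^\prime(z_j). \nn
\end{align}
This agrees with Eq.~\eqref{eq_vvvv}.

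The first step bounds $\omega_{ij}$. Since $s^\prime(z) = p(1-p)$ with $p \in (0,1)$, we have $0 < s^\prime(z) \le 1/4$ by Eq.~\eqref{eq_pz_1st_up}. Hence, for every pair $(i,j)$,
\begin{align}
0 < \omega_{ij} \le \frac{1}{16}. \nn
\end{align}
This is exactly the range stated in Eq.~\eqref{eq_ome_ij}.

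The second step bounds the sum. Each factor $(1+\bm{r}_i^\top \bm{r}_j)^2$ is a square, so it is nonnegative. Because every weight $\omega_{ij}$ is nonnegative and at most $1/16$, we get $\omega_{ij}(1+\bm{r}_i^\top\bm{r}_j)^2 \le \frac{1}{16}(1+\bm{r}_i^\top\bm{r}_j)^2$ for each term. Summing over $i,j \in \mathbb{N}_{\le I}$ gives Eq.~\eqref{eq_h2tr_up_T3}.

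There is no real obstacle here. The one point to state explicitly is that the nonnegativity of $(1+\bm{r}_i^\top\bm{r}_j)^2$ is what allows $\omega_{ij}$ to be replaced by its upper bound, even when $\bm{r}_i^\top\bm{r}_j$ is negative. This is in contrast with the second term, where $\psi_{ij}$ multiplies an unsquared factor and the AM--GM argument of Eq.~\eqref{eq_h2tr_up_T2} was needed instead.
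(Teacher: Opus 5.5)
Your proposal is correct and matches the paper's proof: both identify $\omega_{ij}=s^\prime(z_i)s^\prime(z_j)\in(0,1/16]$ from $s^\prime(z)=p(1-p)\le 1/4$ and then bound the sum term by term. Your explicit remark that $(1+\bm{r}_i^\top\bm{r}_j)^2\ge 0$ is what licenses the termwise replacement fills in a step the paper leaves implicit.
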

\begin{proof}
From Eqs.~\eqref{eq_ome_ij}, \eqref{eq_pz_1st_up}, and \eqref{eq_h2_tr_fin_scc}, we have
\ali{
&\omega_{ij} = s^\prime(z_i)s^\prime(z_j), \ \therefore  0 < \omega_{ij} \leq 1/16 , \ \forall i, j \in \mathbb{N}_{\leq I} \nn \\
&\Rightarrow [\mathrm{tr} ( \bm{H}_L(\bm{\theta}, \bm{\theta})^2 )]_3 
= \sum_{i=1}^I \sum_{j=1}^I \omega_{ij} (1+ \bm{r}_i^\top \bm{r}_j)^2\nn \\
&\leq \frac{1}{16} \sum_{i=1}^I \sum_{j=1}^I (1+ \bm{r}_i^\top \bm{r}_j)^2. \nn \qedhere
}
\end{proof}

Summing the upper bounds of the first to third terms in Eqs.~\eqref{eq_h2tr_up_T1}, \eqref{eq_h2tr_up_T2}, and \eqref{eq_h2tr_up_T3}, we obtain the following upper bound for $\mathrm{tr} ( \bm{H}_L(\bm{\theta}, \bm{\theta})^2 )$.
\begin{proposition}
\ali{
&0 \leq \mathrm{tr} ( \bm{H}_L(\bm{\theta}, \bm{\theta})^2 ) 
\leq \sum_{i=1}^I \sum_{j=1}^I (\psi_{ij})^2 + \nn \\
& ( 1 + \phi_\mathrm{max} ) \sum_{i=1}^I \sum_{j=1}^I (1+\bm{x}_i^\top\bm{x}_j)^2
+ \frac{1}{16} \sum_{i=1}^I \sum_{j=1}^I (1+\bm{r}_i^\top\bm{r}_j)^2. \label{eq_h2tr_upper}
}
\end{proposition}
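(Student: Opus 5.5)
The plan is to split $\mathrm{tr}(\bm{H}_L(\bm{\theta},\bm{\theta})^2)$ according to Eq.~\eqref{eq_h2_tr_fin_scc} and bound each piece by the three lemmas just established. The nonnegativity $0 \leq \mathrm{tr}(\bm{H}_L(\bm{\theta},\bm{\theta})^2)$ comes first. Because $\bm{H}_L(\bm{\theta},\bm{\theta})$ is real symmetric, its eigenvalues are real by Eq.~\eqref{eq_real_eigen}, and $\mathrm{tr}(\bm{H}_L(\bm{\theta},\bm{\theta})^2)=\sum_{d}\lambda_d^2 \geq 0$. Equivalently, it is the squared Frobenius norm of a symmetric matrix. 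This part needs no reference to the decomposition.

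For the upper bound, I will write $\mathrm{tr}(\bm{H}_L(\bm{\theta},\bm{\theta})^2)=[\cdot]_1+[\cdot]_2+[\cdot]_3$, with the three sums as in Eq.~\eqref{eq_h2_tr_fin_scc}. I will then apply the three lemmas term by term:
\begin{itemize}
\item Eq.~\eqref{eq_h2tr_up_T1} gives $[\cdot]_1 \leq \phi_\mathrm{max}\sum_{i,j}(1+\bm{x}_i^\top\bm{x}_j)^2$. Its validity uses $\phi_\mathrm{max}\geq 0$, which follows from $\phi_{ii}\geq 0$.
\item Eq.~\eqref{eq_h2tr_up_T2}, via AM--GM $2ab\leq a^2+b^2$, gives $[\cdot]_2 \leq \sum_{i,j}\psi_{ij}^2+\sum_{i,j}(1+\bm{x}_i^\top\bm{x}_j)^2$.
\item Eq.~\eqref{eq_h2tr_up_T3}, using $0<\omega_{ij}=s'(z_i)s'(z_j)\leq 1/16$ from Eq.~\eqref{eq_pz_1st_up}, gives $[\cdot]_3\leq \frac{1}{16}\sum_{i,j}(1+\bm{r}_i^\top\bm{r}_j)^2$.
\end{itemize}
Adding the three inequalities and collecting the two $\sum_{i,j}(1+\bm{x}_i^\top\bm{x}_j)^2$ contributions yields the coefficient $(1+\phi_\mathrm{max})$, which is exactly Eq.~\eqref{eq_h2tr_upper}.

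The only delicate point is the first bound. Replacing $\phi_{ij}$ by $\phi_\mathrm{max}$ requires the weights $(1+\bm{x}_i^\top\bm{x}_j)^2$ to be nonnegative, which holds because they are squares. It also requires $\phi_{ij}\leq\phi_\mathrm{max}$ for every pair $(i,j)$, which holds because $\phi_\mathrm{max}$ is defined in Eq.~\eqref{eq_phi_max} as the maximum over all pairs, off-diagonal ones included. With these two facts the step is immediate. I expect no genuine obstacle here; the main care is bookkeeping so that the factor $2$ on the cross term is absorbed correctly by AM--GM.

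To pass to the matrix form Eq.~\eqref{eq_h2tr_upper_mat}, I will identify $\sum_{i,j}\psi_{ij}^2=\|\bm{\Psi}\|_\mathrm{F}^2$. I will also identify $\sum_{i,j}(1+\bm{x}_i^\top\bm{x}_j)^2=\|\bm{J}_I+\bm{X}^\top\bm{X}\|_\mathrm{F}^2$, since $(\bm{J}_I+\bm{X}^\top\bm{X})_{ij}=1+\bm{x}_i^\top\bm{x}_j$, and likewise for $\bm{R}$.
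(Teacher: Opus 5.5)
Your proposal is correct and follows the paper's proof: split the trace via \eqref{eq_h2_tr_fin_scc}, bound the three sums by \eqref{eq_h2tr_up_T1}, \eqref{eq_h2tr_up_T2}, \eqref{eq_h2tr_up_T3}, add them, and get nonnegativity from $\sum_d \lambda_d^2 \geq 0$. One small correction: the first bound needs only $\phi_{ij}\leq\phi_\mathrm{max}$ and the nonnegative weights $(1+\bm{x}_i^\top\bm{x}_j)^2$, so it does not use $\phi_\mathrm{max}\geq 0$, despite your earlier remark; the paper proves that fact for the separate claim \eqref{eq_phi_max}.
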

The coefficients in Eq.~\eqref{eq_h2tr_upper} can be expressed in terms of the Frobenius norm. 
By using $\bm{J}_I = \bm{1}_I \bm{1}_I^\top$, we obtain
\ali{
\| \bm{\Psi} \|_\mathrm{F}^2 &= \sum_{i=1}^I \sum_{j=1}^I (\psi_{ij})^2, \nn \\
\| \bm{J}_I + \bm{X}^\top \bm{X} \|_\mathrm{F}^2 &= \sum_{i=1}^I \sum_{j=1}^I (1+\bm{x}_i^\top\bm{x}_j)^2, \nn \\
\| \bm{J}_I + \bm{R}^\top \bm{R} \|_\mathrm{F}^2 &= \sum_{i=1}^I \sum_{j=1}^I (1+\bm{r}_i^\top\bm{r}_j)^2. \nn
}
Substituting these relations into Eq.~\eqref{eq_h2tr_upper} yields Eq.~\eqref{eq_h2tr_upper_mat}.

\subsection{Proofs for Eqs.~\eqref{eq_jxx_cond1} and \eqref{eq_jrr_cond1}}\label{pr_jxx_cond1}
When the input data is normalized, the maximum value of $\| \bm{J}_I + \bm{X}^\top \bm{X} \|_\mathrm{F}^2$ is given by
\ali{
&\max_{\bm{X} \in [0, 1]^{M \times I}} \| \bm{J}_I + \bm{X}^\top \bm{X} \|_\mathrm{F}^2 
= \| \bm{J}_I + (\bm{1}_M \bm{1}_I^\top)^\top \bm{1}_M \bm{1}_I^\top \|_\mathrm{F}^2 \nn \\
& = \| (1+M) \bm{J}_I \|_\mathrm{F}^2 =  I^2 (1+M)^2. \nn
}
For saturating activation functions, the upper bound of $\| \bm{J}_I + \bm{R}^\top \bm{R} \|_\mathrm{F}^2$ is derived as
\ali{
&f: \mathrm{Sigmoid} \lor \mathrm{Tanh} \Rightarrow \sup_{y \in \mathbb{R}} f(y) = 1\nn \\
&\Rightarrow \sup_{\bm{R}} \| \bm{J}_I + \bm{R}^\top \bm{R} \|_\mathrm{F}^2 
= \| \bm{J}_I + (\bm{1}_N \bm{1}_I^\top)^\top \bm{1}_N \bm{1}_I^\top \|_\mathrm{F}^2\nn \\
&= \| (1+N) \bm{J}_I \|_\mathrm{F}^2 = I^2 (1+N)^2. \nn 
}

\subsection{Proof for Eq.~\eqref{eq_limit_u_eigenmax}} \label{proof_limit_u_eigenmax}
From Eqs.~\eqref{eq_H_l_the2_known_all_ww} and \eqref{eq_H_l_the2_known_all_vv}, it follows that
\ali{
&\bm{\delta} \rightarrow \bm{0}_I \Rightarrow s(z_i) \rightarrow 0 \lor 1, \forall i \in \mathbb{N}_{\leq I} 
\Rightarrow \bm{s}^\prime \rightarrow \bm{0}_I  \nn \\
& \Rightarrow \mathrm{tr}(\bm{H}_L(\bm{w}, \bm{w})) \rightarrow 0 \land \mathrm{tr}(\bm{H}_L(\bm{v}, \bm{v})) \rightarrow 0  \nn \\
& \Rightarrow \mathrm{tr}(\bm{H}_L(\bm{\theta}, \bm{\theta})) \rightarrow 0. \nn
}
Furthermore, from Eq.~\eqref{eq_h2_tr_fin}, we obtain
\ali{
&\bm{\delta} \rightarrow \bm{0}_I \Rightarrow 
\forall i \in \mathbb{N}_{\leq I}, 
(s(z_i) \rightarrow 0 \lor 1 \Rightarrow \bm{o}_i \rightarrow \bm{0}_2) \nn \\
& \Rightarrow \bm{\Phi} \rightarrow \bm{0}_{I \times I} \land 
\bm{\Psi} \rightarrow \bm{0}_{I \times I} \land \bm{\Omega} \rightarrow \bm{0}_{I \times I}  \nn \\
& \Rightarrow \mathrm{tr}(\bm{H}_L(\bm{\theta}, \bm{\theta})^2) \rightarrow 0. \nn
}
From Eqs.~\eqref{eq_main_theorem}, \eqref{eq_ave_eigen}, and \eqref{eq_var_eigen}, we have
\ali{
\bm{\delta} \rightarrow \bm{0}_I
\Rightarrow \mu(\bm{\theta}) \rightarrow 0 \land \sigma(\bm{\theta}) \rightarrow 0 
\Rightarrow \lambda_\mathrm{sup}(\bm{\theta}) \rightarrow 0.\nn
}

\subsection{Proof for Eq.~\eqref{eq_btw_tr_tr2}} \label{sec_proof_btw_tr_tr2}
From the Cauchy-Schwarz inequality, we obtain
\ali{
&(\bm{\lambda}^\top \bm{1}_D)^2 \leq \| \bm{\lambda} \|^2 \| \bm{1}_D \|^2 \nn \\
&\Leftrightarrow \Bigg( \sum_{d=1}^D \lambda_d \Bigg)^2 \leq 
\sum_{d=1}^D \lambda_d^2
 \sum_{d=1}^D 1^2 \nn \\
&\Leftrightarrow \frac{1}{D}\mathrm{tr} ( \bm{H}_L(\bm{\theta}, \bm{\theta}) )^2 \leq
\mathrm{tr} ( \bm{H}_L(\bm{\theta}, \bm{\theta})^2 ).  \label{eq_tr_tr2_low}
}
Additionally, we have
\ali{
\mathrm{tr} ( \bm{H}_L(\bm{\theta}, \bm{\theta}) )^2
&= \Bigg( \sum_{d=1}^D \lambda_d \Bigg)^2
= \sum_{d=1}^D \lambda_d^2 + \sum_{d_1 \neq d_2} \lambda_{d_1}\lambda_{d_2}\nn \\
&= \mathrm{tr} ( \bm{H}_L(\bm{\theta}, \bm{\theta})^2 ) + \sum_{d_1 \neq d_2} \lambda_{d_1}\lambda_{d_2}. \nn
}
If the Hessian matrix is positive semi-definite, we obtain $\sum_{d_1 \neq d_2} \lambda_{d_1}\lambda_{d_2} \geq 0$.
Therefore, 
\ali{
\bm{H}_L(\bm{\theta}, \bm{\theta}) \succeq 0 \Rightarrow
 \mathrm{tr} ( \bm{H}_L(\bm{\theta}, \bm{\theta})^2 ) \leq \mathrm{tr} ( \bm{H}_L(\bm{\theta}, \bm{\theta}) )^2 \label{eq_tr_tr2_up}
 }
holds.
From Eqs.~\eqref{eq_tr_tr2_low} and \eqref{eq_tr_tr2_up}, Eq.~\eqref{eq_btw_tr_tr2} holds.

\section{Experiments} \label{sec_expe}
To bridge the theory and experiments, we constructed a three-layer sigmoid NN ($M=2, N=3$), where $M$ and $N$ are the dimensions of input $\bm{x}$ and hidden layer $\bm{r}$, respectively.
The task is a binary classification problem to determine which of two Gaussian distributions generated the 2D data $\bm{x}$.
Regarding the distribution parameters, Class 0 has a mean of $\mat{1 & 1}^\top$, and Class 1 has a mean of $\mat{-1 & -1}^\top$.
This classification problem has also been adopted in previous studies investigating the eigenspectrum of the Hessian~\cite{sagunEigenvalues2017}.
The variance along both the vertical and horizontal axes was set to 2, with zero covariance.
These settings yield the scatter plots shown in Figs.~\ref{fig_eigen_max_high} and \ref{fig_eigen_max_low}.
The training and test datasets consist of $I = 50$ and $10^3$ samples, respectively.
Each class contains half of these samples, ensuring a balanced dataset.

Initial parameters $\bm{\theta} = \bm{\theta}_0$ were generated using 500 random seeds and updated via the gradient descent rule
\ali{
\bm{\theta}_{t+1} = \bm{\theta}_{t} - \gamma \frac{\pa L}{\pa \bm{\theta}}\bigg|_{\bm{\theta}=\bm{\theta}_{t}}, \ t \in \mathbb{N}_{< T} \cup \{0\}, \nn
}
where $T$ denotes the maximum number of iterations.
The purpose of this experiment is not to construct a model with high generalization performance, but to investigate the properties of sharpness at various critical points.
To this end, it is necessary to observe a diverse range of critical points. 
For this reason, we sampled $\bm{\theta}_0$ from a uniform distribution over the wide open interval $(-10, 10)^D$.
The convergence to a critical point was determined using a small threshold $\epsilon$ such that
\ali{
\Bigg\| \frac{\pa L}{\pa \bm{\theta}}\bigg|_{\bm{\theta}=\bm{\theta}_{t}}  \Bigg\|
 \Bigg( \Bigg\| \frac{\pa L}{\pa \bm{\theta}}\bigg|_{\bm{\theta}=\bm{\theta}_{0}} \Bigg\| \Bigg)^{-1} < \epsilon. \nn
}
In the experiments, we used $T=10^4$ and $\epsilon = 0.001$.
To avoid the inappropriate inclusion of redundant critical points, we retained only one instance of any duplicate.
We calculated the Euclidean distances between all pairs of critical points and defined those with a distance less than $\mathrm{Ave.} - 3 \times \mathrm{Std.}$ as duplicates.

As a result of the procedure described above, 460 out of the 500 initial parameters converged to critical points, resulting in 353 unique critical points after removing duplicates.
Similar procedures were applied to experiments where the hidden layer dimension $N$ was increased from 3 to 10.
In this case, 453 out of the 500 initial parameters reached critical points, resulting in 420 unique critical points after removing duplicates.
These critical points were made the subjects of analysis in this study.


\bibliographystyle{elsarticle-num}
\bibliography{refs}

\end{document}